\newcommand{\lyxmathsym}[1]{\ifmmode\begingroup\def\b@ld{bold}
  \text{\ifx\math@version\b@ld\bfseries\fi#1}\endgroup\else#1\fi}
\providecommand{\tabularnewline}{\\}
\providecommand{\algorithmname}{Algorithm}
\theoremstyle{plain}
\newtheorem{thm}{\protect\theoremname}[section]
\theoremstyle{plain}
\newtheorem{assumption}[thm]{\protect\assumptionname}
\theoremstyle{plain}
\newtheorem{lem}[thm]{\protect\lemmaname}
\theoremstyle{definition}
\newtheorem{example}[thm]{\protect\examplename}
\theoremstyle{remark}
\newtheorem{rem}[thm]{\protect\remarkname}
\providecommand{\assumptionname}{Assumption}
\providecommand{\examplename}{Example}
\providecommand{\lemmaname}{Lemma}
\providecommand{\remarkname}{Remark}
\providecommand{\theoremname}{Theorem}
\begin{document}
\global\long\def\E{\mathbb{\mathbb{E}}}%
\global\long\def\F{\mathcal{F}}%
\global\long\def\N{\mathbb{\mathbb{N}}}%
\global\long\def\R{\mathbb{R}}%
\global\long\def\dom{\mathbf{X}}%
\global\long\def\argmin{\mathrm{argmin}}%
\global\long\def\bx{\mathbf{x}}%
\global\long\def\by{\mathbf{y}}%
\global\long\def\bz{\mathbf{z}}%
\global\long\def\bg{\mathbf{g}}%
\global\long\def\rmI{\mathrm{I}}%
\global\long\def\O{\mathcal{O}}%
\global\long\def\bG{\bar{G}}%
\global\long\def\bL{\bar{L}}%
\global\long\def\asigma{\sigma_{\mathrm{any}}^{2}}%
\global\long\def\rsigma{\sigma_{\mathrm{rand}}^{2}}%

\twocolumn[
\icmltitle{On the Last-Iterate Convergence of Shuffling Gradient Methods}
% It is OKAY to include author information, even for blind 
% submissions: the style file will automatically remove it for you 
% unless you've provided the [accepted] option to the icml2024 
% package.

% List of affiliations: The first argument should be a (short) 
% identifier you will use later to specify author affiliations 
% Academic affiliations should list Department, University, City, Region, Country 
% Industry affiliations should list Company, City, Region, Country

% You can specify symbols, otherwise they are numbered in order. 
% Ideally, you should not use this facility. Affiliations will be numbered 
% in order of appearance and this is the preferred way. 
\icmlsetsymbol{equal}{*}

%\begin{icmlauthorlist}
%\icmlauthor{Firstname1 Lastname1}{equal,yyy} 
%\icmlauthor{Firstname2 Lastname2}{equal,yyy,comp} 
%\icmlauthor{Firstname3 Lastname3}{comp} 
%\icmlauthor{Firstname4 Lastname4}{sch} 
%\icmlauthor{Firstname5 Lastname5}{yyy} 
%\icmlauthor{Firstname6 Lastname6}{sch,yyy,comp} 
%\icmlauthor{Firstname7 Lastname7}{comp} 
%\icmlauthor{}{sch} 
%\icmlauthor{Firstname8 Lastname8}{sch} 
%\icmlauthor{Firstname8 Lastname8}{yyy,comp} 
%\icmlauthor{}{sch} %\icmlauthor{}{sch} 
%\end{icmlauthorlist}

%\icmlaffiliation{yyy}{Department of XXX, University of YYY, Location, Country} 
%\icmlaffiliation{comp}{Company Name, Location, Country} 
%\icmlaffiliation{sch}{School of ZZZ, Institute of WWW, Location, Country}

%\icmlcorrespondingauthor{Firstname1 Lastname1}{first1.last1@xxx.edu} 
%\icmlcorrespondingauthor{Firstname2 Lastname2}{first2.last2@www.uk}

\begin{icmlauthorlist}
\icmlauthor{Zijian Liu}{Stern} 
\icmlauthor{Zhengyuan Zhou}{Stern,Arena} 
\end{icmlauthorlist}

\icmlaffiliation{Stern}{Stern School of Business, New York University}
\icmlaffiliation{Arena}{Arena Technologies}

\icmlcorrespondingauthor{Zijian Liu}{zl3067@stern.nyu.edu}
\icmlcorrespondingauthor{Zhengyuan Zhou}{zhengyuanzhou24@gmail.com}

% You may provide any keywords that you 
% find helpful for describing your paper; these are used to populate 
% the "keywords" metadata in the PDF but will not be shown in the document 
\icmlkeywords{Machine Learning, ICML}

\vskip 0.3in 
]
% this must go after the closing bracket ] following \twocolumn[ ...

% This command actually creates the footnote in the first column 
% listing the affiliations and the copyright notice. 
% The command takes one argument, which is text to display at the start of the footnote. 
% The \icmlEqualContribution command is standard text for equal contribution. 
% Remove it (just {}) if you do not need this facility.

\printAffiliationsAndNotice{}  % leave blank if no need to mention equal contribution 
%\printAffiliationsAndNotice{\icmlEqualContribution} % otherwise use the standard text.
\begin{abstract}
Shuffling gradient methods are widely used in modern machine learning
tasks and include three popular implementations: Random Reshuffle
(RR), Shuffle Once (SO), and Incremental Gradient (IG). Compared to
the empirical success, the theoretical guarantee of shuffling gradient
methods was not well-understood for a long time. Until recently, the
convergence rates had just been established for the average iterate
for convex functions and the last iterate for strongly convex problems
(using squared distance as the metric). However, when using the function
value gap as the convergence criterion, existing theories cannot interpret
the good performance of the last iterate in different settings (e.g.,
constrained optimization). To bridge this gap between practice and
theory, we prove the first last-iterate convergence rates for shuffling
gradient methods with respect to the objective value even without
strong convexity. Our new results either (nearly) match the existing
last-iterate lower bounds or are as fast as the previous best upper
bounds for the average iterate.
\end{abstract}

\section{Introduction\label{sec:intro}}

\begin{figure*}[ht]

\noindent\begin{minipage}[t]{1\textwidth}%
\vspace{-0.25in}
\begin{table}[H]
\caption{\label{tab:smooth}Summary of our new upper bounds and the existing
lower bounds for $L$-smooth $f_{i}(\protect\bx)$ for large $K$.
If no lower bound was established before in the case, we instead state
the previous best-known rate. Here, $\protect\asigma\triangleq\frac{1}{n}\sum_{i=1}^{n}\left\Vert \nabla f_{i}(\protect\bx_{*})\right\Vert ^{2}$,
$\protect\rsigma\triangleq\protect\asigma+n\left\Vert \nabla f(\protect\bx_{*})\right\Vert ^{2}$
and $D\triangleq\left\Vert \protect\bx_{*}-\protect\bx_{1}\right\Vert $.
All rates use the function value gap as the convergence criterion.
In the column of \textquotedbl Type\textquotedbl , \textquotedbl Any\textquotedbl{}
means the rate holds for whatever permutation not limited to RR/SO/IG.
\textquotedbl Random\textquotedbl{} refers to the uniformly sampled
permutation but is not restricted to RR/SO (see Remark \ref{rem:sample}
for a detailed explanation). \textquotedbl Avg\textquotedbl{} and
\textquotedbl Last\textquotedbl{} in the \textquotedbl Output\textquotedbl{}
column stand for the average iterate and the last iterate, respectively.
In the last column, \textquotedbl\ding{51}\textquotedbl{} means
$\psi(\protect\bx)$ can be taken arbitrarily and \textquotedbl\ding{55}\textquotedbl{}
implies $\psi(\protect\bx)=0$.}
\vspace{0.1in}

\centering{}%
\begin{tabular}{|>{\centering}m{0.135\textwidth}|c|c|c|c|c|}
\hline 
\multicolumn{6}{|c|}{$F(\bx)=f(\bx)+\psi(\bx)$ where $f(\bx)=\frac{1}{n}\sum_{i=1}^{n}f_{i}(\bx)$
and $f_{i}(\bx)$ and $\psi(\bx)$ are convex}\tabularnewline
\hline 
Settings & References & Rate & Type & Output & $\psi(\bx)$\tabularnewline
\hline 
 & \cite{mishchenko2020random} & $\O\left(\frac{L^{1/3}\sigma_{\mathrm{any}}^{2/3}D^{4/3}}{K^{2/3}}\right)$ & IG & Avg & \ding{55}\tabularnewline
\multirow{2}{0.135\textwidth}{$L$-smooth $f_{i}(\bx)$} & \textbf{Ours} (Theorem \ref{thm:smooth-cvx-main}) & $\widetilde{\O}\left(\frac{L^{1/3}\sigma_{\mathrm{any}}^{2/3}D^{4/3}}{K^{2/3}}\right)$ & Any & Last & \ding{51}\tabularnewline
\cline{2-6} \cline{3-6} \cline{4-6} \cline{5-6} \cline{6-6} 
 & \textbf{Ours} (Theorem \ref{thm:smooth-cvx-main}) & $\widetilde{\O}\left(\frac{L^{1/3}\sigma_{\mathrm{rand}}^{2/3}D^{4/3}}{n^{1/3}K^{2/3}}\right)$\footnote{Note that when $\psi(\bx)=0$, there is $\rsigma=\asigma+n\left\Vert \nabla f(\bx_{*})\right\Vert ^{2}=\asigma$
due to $\nabla f(\bx_{*})=\mathbf{0}$.}\saveFN\sfna\ & Random & Last & \ding{51}\tabularnewline
 & \cite{pmlr-v202-cha23a} & $\Omega\left(\frac{L^{1/3}\sigma_{\mathrm{any}}^{2/3}D^{4/3}}{n^{1/3}K^{2/3}}\right)$ & RR & Last & \ding{55}\tabularnewline
\hline 
 & \textbf{Ours} (Theorem \ref{thm:smooth-str-main}) & $\widetilde{\O}\left(\frac{L\asigma}{\mu^{2}K^{2}}\right)$ & Any & Last & \ding{51}\tabularnewline
$L$-smooth $f_{i}(\bx)$, & \cite{safran2020good} & $\Omega\left(\frac{G^{2}}{\mu K^{2}}\right)$\footnote{This lower bound is established under $L=\mu$ and additionally requires
$\left\Vert \nabla f_{i}(\bx)\right\Vert \leq G$. Under the same
condition, our above upper bound $\widetilde{\O}\left(\frac{L\asigma}{\mu^{2}K^{2}}\right)$
will be $\widetilde{\O}\left(\frac{G^{2}}{\mu K^{2}}\right)$ and
hence almost matches this lower bound.} & IG & Last & \ding{55}\tabularnewline
\cline{2-6} \cline{3-6} \cline{4-6} \cline{5-6} \cline{6-6} 
$\mu$-strongly convex $f(\bx)$ & \textbf{Ours} (Theorem \ref{thm:smooth-str-main}) & $\widetilde{\O}\left(\frac{L\rsigma}{\mu^{2}nK^{2}}\right)$\useFN\sfna\ & Random & Last & \ding{51}\tabularnewline
 & \cite{safran2021random,pmlr-v202-cha23a} & $\Omega\left(\frac{L\asigma}{\mu^{2}nK^{2}}\right)$ & RR/SO & Last & \ding{55}\tabularnewline
\hline 
\end{tabular}
\end{table}
\end{minipage}

\end{figure*}

\begin{figure*}[ht]

\noindent\begin{minipage}[t]{1\textwidth}%
\vspace{-0.25in}
\begin{table}[H]
\caption{\label{tab:lip}Summary of our new upper bounds and the previous fastest
rates for $G$-Lipschitz $f_{i}(\protect\bx)$ for large $K$. The
lower bound in this case has not been proved as far as we know. Here,
$D\triangleq\left\Vert \protect\bx_{*}-\protect\bx_{1}\right\Vert $.
All rates use the function value gap as the convergence criterion.
In the column of \textquotedbl Type\textquotedbl , \textquotedbl Any\textquotedbl{}
means the rate holds for whatever permutation not limited to RR/SO/IG.
\textquotedbl Avg\textquotedbl{} and \textquotedbl Last\textquotedbl{}
in the \textquotedbl Output\textquotedbl{} column stand for the average
iterate and the last iterate, respectively. In the last column, \textquotedbl\ding{51}\textquotedbl{}
means $\psi(\protect\bx)$ can be taken arbitrarily and \textquotedbl\ding{55}\textquotedbl{}
implies $\psi(\protect\bx)=0$.}
\vspace{0.1in}

\centering{}%
\begin{tabular}{|c|c|c|c|c|c|}
\hline 
\multicolumn{6}{|c|}{$F(\bx)=f(\bx)+\psi(\bx)$ where $f(\bx)=\frac{1}{n}\sum_{i=1}^{n}f_{i}(\bx)$
and $f_{i}(\bx)$ and $\psi(\bx)$ are convex}\tabularnewline
\hline 
Settings & References & Rate & Type & Output & $\psi(\bx)$\tabularnewline
\hline 
\multirow{2}{*}{$G$-Lipschitz $f_{i}(\bx)$} & \cite{bertsekas2011incremental} & $\O\left(\frac{GD}{\sqrt{K}}\right)$ & IG & Avg & \ding{51}\footnote{Only for $\psi(\bx)=\varphi(\bx)+\rmI_{\dom}(\bx)$ where $\varphi(\bx)$
is Lipschitz on $\dom$.}\tabularnewline
 & \textbf{Ours} (Theorem \ref{thm:lip-cvx-main}) & $\O\left(\frac{GD}{\sqrt{K}}\right)$ & Any & Last & \ding{51}\tabularnewline
\hline 
$G$-Lipschitz $f_{i}(\bx)$, $f(\bx)-f(\bx_{*})\geq\mu\left\Vert \bx-\bx_{*}\right\Vert ^{2}$\footnote{The original rate $\O\left(\frac{G^{2}}{\mu^{2}K}\right)$ is only
for $\psi(\bx)=\rmI_{\dom}(\bx)$ and proved w.r.t. $\left\Vert \bx_{K+1}-\bx_{*}\right\Vert ^{2}$.
We first remark that this setting implies that $\dom$ has to be compact
due to $\mu\left\Vert \bx-\bx_{*}\right\Vert ^{2}\leq f(\bx)-f(\bx_{*})\leq G\left\Vert \bx-\bx_{*}\right\Vert $.
Next, the original rate cannot give the $\O(1/K)$ bound on the function
value gap since $F(\bx_{K+1})-F(\bx_{*})\geq\mu\left\Vert \bx_{K+1}-\bx_{*}\right\Vert ^{2}$.
The result reported here is only for the convenience of comparison.}\saveFN\sfnb\  & \cite{nedic2001convergence} & \multicolumn{1}{c|}{$\O\left(\frac{G^{2}}{\mu K}\right)$\useFN\sfnb\ } & IG & Last\useFN\sfnb\  & \ding{51}\useFN\sfnb\ \tabularnewline
$G$-Lipschitz $f_{i}(\bx)$, $\mu$-strongly convex $\psi(\bx)$ & \textbf{Ours} (Theorem \ref{thm:lip-str-main}) & $\widetilde{\O}\left(\frac{G^{2}}{\mu K}\right)$ & Any & Last & \ding{51}\tabularnewline
\hline 
\end{tabular}
\end{table}
\end{minipage}

\end{figure*}

Solving the convex optimization problem in the form of
\[
\min_{\bx\in\R^{d}}F(\bx)\triangleq f(\bx)+\psi(\bx)\text{ where }f(\bx)\triangleq\frac{1}{n}\sum_{i=1}^{n}f_{i}(\bx)
\]
is arguably one of the most common tasks in machine learning. Many
well-known problems such as the LAD regression \cite{pollard1991asymptotics},
SVMs \cite{cortes1995support}, the Lasso \cite{tibshirani1996regression},
and any general problem from empirical risk minimization (ERM) \cite{shalev2014understanding}
fit the framework perfectly. A famous method for the problem with
a large $n$ (the standard case nowadays) is the stochastic gradient
descent (SGD) algorithm, which in every step only computes one gradient
differing from calculating $n$ gradients in the classic gradient
descent (GD) method and hence is more computationally affordable.

However, in practice, the shuffling gradient method is more widely
implemented than SGD. Unlike SGD uniformly sampling the index in every
step, shuffling gradient methods split the optimization procedure
into $K$ epochs each of which contains $n$ updates where the index
used in every step is determined by a permutation $\sigma$ of $\left\{ 1,2,\cdots,n\right\} $.
Especially, three ways to generate $\sigma$ are mainly used: (I)
Random Reshuffle (RR) where $\sigma$ is uniformly sampled without
replacement in every epoch. (II) Shuffle Once (SO) also known as Single
Shuffle where $\sigma$ in every epoch is identical to a pre-generated
permutation by uniform sampling without replacement. (III) Incremental
Gradient (IG) where $\sigma$ in every epoch is always the same as
some deterministic order.

Though shuffling gradient methods are empirically popular and successful
\cite{bottou2009curiously,bottou2012stochastic,bengio2012practical,recht2013parallel},
the theoretical understanding has not been well-developed for a long
time due to the biased gradient used in every step making the analysis
notoriously harder than SGD. \cite{gurbuzbalaban2021random} is the
first to demonstrate that RR provably converges faster than SGD under
certain conditions. Later on, different works give extensive studies
for the theoretical convergence of shuffling gradient methods (see
Section \ref{sec:related work} for a detailed discussion).

Despite the existing substantial progress, the convergence of shuffling
gradient methods is still not fully understood. Specifically, prior
works heavily rely on the following problem form that significantly
restricts the applicability:

\emph{The regularizer $\psi(\bx)$ is always omitted. }Almost all
previous studies narrow to the case of $\psi(\bx)=0$, which even
does not include the common constrained optimization problems (i.e.,
taking $\psi(\bx)=\rmI_{\dom}(\bx)$ where $\rmI_{\dom}(\bx)=0$ if
$\bx\in\dom$\footnote{This notation $\dom$ always denotes a nonempty closed convex set
in $\R^{d}$ throughout the paper.}, otherwise, $+\infty$). The only exception we know is \cite{mishchenko2022proximal},
which allows the existence of a general $\psi(\bx)$. However, \cite{mishchenko2022proximal}
requires either $f_{i}(\bx)$ or $\psi(\bx)$ to be strongly convex,
which can be easily violated in practice like the Lasso.

\emph{The components $f_{i}(\bx)$ are smooth.} This assumption also
reduces the generality of previous studies. Though, theoretically
speaking, smoothness is known to be necessary for shuffling gradient
methods to obtain better convergence rates than SGD \cite{pmlr-v97-nagaraj19a},
lots of problems are actually non-smooth in reality, for example,
the LAD regression and SVMs (with the hinge loss) mentioned before.
However, the convergence of shuffling-based algorithms without smoothness
is not yet fully established.

In addition to the above problems, there exists another critical gap
between practice and theory, i.e., the empirical success of the last
iterate versus the lack of theoretical interpretation. Returning the
last iterate is cheaper and hence commonly used in practice, but proving
the last-iterate rate is instead difficult. Even for GD and SGD, the
optimal last-iterate bounds had only been established recently (see
the discussion in Section \ref{sec:related work}). However, as for
shuffling gradient methods (regardless of which method in RR/SO/IG
is used), no existing bounds can be directly applied to the last-iterate
objective value (i.e., $F(\bx_{K+1})-F(\bx_{*})$ where $\bx_{*}$
is the optimal solution) and still match the lower bounds at the same
time. Even for the strongly convex problem, the previous rates are
studied for the metric $\left\Vert \bx_{K+1}-\bx_{*}\right\Vert ^{2}$
and thus cannot be converted to a bound on $F(\bx_{K+1})-F(\bx_{*})$\footnote{Conversely, $F(\bx_{K+1})-F(\bx_{*})=\Omega(\left\Vert \bx_{K+1}-\bx_{*}\right\Vert ^{2})$
always holds under strong convexity due to the first-order optimality.
Thus, the function value gap is always a stronger convergence criterion
than the squared distance from the optimum.} whenever $F(\bx)$ is non-smooth (the case like the Lasso) or $\nabla f(\bx_{*})\neq\mathbf{0}$
happens in constrained optimization (i.e., $\psi(\bx)=\rmI_{\dom}(\bx)$).
Motivated by this, we would like to ask the following core question
considered in this paper:
\begin{center}
\emph{For smooth/non-smooth $f_{i}(\bx)$ and general $\psi(\bx)$,
does the last iterate of shuffling gradient methods provably converge
in terms of the function value gap? If so, how fast is it?}
\par\end{center}

\subsection{Our Contributions}

We provide an affirmative answer to the question by proving the first
last-iterate convergence rates for shuffling gradient methods. We
particularly focus on two cases, smooth $f_{i}(\bx)$ and Lipschitz
$f_{i}(\bx)$, but without special requirements on $\psi(\bx)$.

Our new last-iterate rates for smooth components are summarized in
Table \ref{tab:smooth} (see Table \ref{tab:smooth-detailed} in Appendix
\ref{sec:detailed-table} for a more detailed summary due to limited
space). They almost match the existing lower bounds or the fastest
upper bounds of the average iterate.

Our new last-iterate rates for Lipschitz components are summarized
in Table \ref{tab:lip}. They are as fast as the previous best-known
upper bounds.

Limitations in our work are also discussed in Section \ref{sec:conclusion}.

\section{Related Work\label{sec:related work}}

When stating the convergence rate, we only provide the dependence
on $n$ and $K$ (for large $K$) without problem-dependent parameters
to save space unless the discussion will be beneficial from specifying
them in some cases. In the last-iterate bound of SGD, $nK$ serves
as time $T$.

\textbf{Convergence of shuffling gradient methods.} We first review
the prior works for general smooth components since smoothness is
known to be necessary for obtaining better rates compared to SGD \cite{pmlr-v97-nagaraj19a}.
The existing results in the non-smooth case will be discussed afterwards.
As for the special quadratic functions (note that they can be covered
by smooth components), we do not include a detailed discussion here
due to limited space. The reader could refer to \cite{gurbuzbalaban2019convergence,gurbuzbalaban2021random,pmlr-v97-haochen19a,pmlr-v119-rajput20a,safran2020good,ahn2020sgd,safran2021random,rajput2022permutationbased}
for recent progress.

For general $L$-smooth components that are not necessarily quadratic,
without strong convexity, \cite{pmlr-v97-nagaraj19a} first demonstrates
the rate $\O(1/\sqrt{nK})$ for the averaged output of RR when $\psi(\bx)=\rmI_{\dom}(\bx)$.
However, it does not only extra require bounded gradients but also
needs a compact $\dom$. Later, in the unconstrained setting (i.e.,
$\psi(\bx)=0$) without the Lipschitz condition, \cite{mishchenko2020random,nguyen2021unified}
improve the rate to $\O(1/n^{1/3}K^{2/3})$ for RR/SO and $\O(1/K^{2/3})$
for IG but still only analyze the average iterate. Their rate for
RR/SO has the optimal order on both $n,K$ and other problem-dependent
parameters as illustrated by the lower bound in \cite{pmlr-v202-cha23a}.

When $f(\bx)$ is assumed to be $\mu$-strongly convex, \cite{gurbuzbalaban2021random,pmlr-v97-haochen19a}
justify that RR provably converges under the unusual condition of
Lipschitz Hessian matrices. If assuming bounded gradients (which implicitly
means a bounded domain) instead of Lipschitz Hessian matrices, \cite{shamir2016without}
proves the rate $\widetilde{\O}(1/n)$ for RR/SO when $K=1$. For
RR, \cite{pmlr-v97-nagaraj19a} and \cite{ahn2020sgd}\footnote{The rates in \cite{gurbuzbalaban2019convergence,ying2018stochastic,ahn2020sgd,mishchenko2020random,nguyen2021unified}
are all obtained under $\psi(\bx)=0$ and use $\left\Vert \bx_{K+1}-\bx_{*}\right\Vert ^{2}$
as the metric (except \cite{nguyen2021unified}). Thus, they cannot
measure the convergence of the objective value in general. Nevertheless,
we here write them as the last-iterate convergence for convenience.}\saveFN\sfnc\  respectively obtain the rates $\widetilde{\O}\left(\frac{L^{2}}{\mu^{3}nK^{2}}\right)$
for the tail average iterate and $\O\left(\frac{L^{3}}{\mu^{4}nK^{2}}\right)$
for the last iterate. \cite{mishchenko2020random,nguyen2021unified}\useFN\sfnc\ 
further shave off the Lipschitz condition and still show the last-iterate
rate $\widetilde{\O}\left(\frac{L^{2}}{\mu^{3}nK^{2}}\right)$ for
RR/SO. However, none of them matches the lower bound $\Omega\left(\frac{L}{\mu^{2}nK^{2}}\right)$
for RR/SO \cite{pmlr-v119-rajput20a,safran2021random,pmlr-v202-cha23a}.
Until recently, \cite{pmlr-v202-cha23a} demonstrates that the tail
average iterate of RR indeed guarantees the rate $\widetilde{\O}\left(\frac{L}{\mu^{2}nK^{2}}\right)$.

As for IG, the first asymptotic last-iterate bound $\O(1/K^{2})$
is shown in \cite{gurbuzbalaban2019convergence}\useFN\sfnc\  while
it extra assumes bounded gradients. \cite{ying2018stochastic}\useFN\sfnc\ 
provides the last-iterate rate $\widetilde{\O}\left(\frac{L^{3}}{\mu^{4}K^{2}}\right)$
without the Lipschitz condition. \cite{mishchenko2020random,nguyen2021unified}\useFN\sfnc\ 
then improve it to $\widetilde{\O}\left(\frac{L^{2}}{\mu^{3}K^{2}}\right)$.
Though it seems like their rate has already matched the lower bound
$\Omega\left(\frac{1}{\mu K^{2}}\right)$ established under $L=\mu$
\cite{safran2020good}, however, this is not true as indicated by
our new last-iterate bound $\widetilde{\O}\left(\frac{L}{\mu^{2}K^{2}}\right)$
holding for any kind of permutation.

We would also like to mention some other relevant results. The lower
bound $\Omega(1/n^{2}K^{2})$ in \cite{pmlr-v202-cha23a} considers
the best possible permutation in the algorithm. A near-optimal rate
in this scenario is achieved by GraB \cite{lu2022grab}, in which
the permutation used in every epoch is chosen manually instead of
RR/SO/IG. In addition, \cite{mishchenko2022proximal} is the only
work we know that can apply to a general $\psi(\bx)$. However, their
results are all presented for the squared distance from the optimum,
which cannot be used in our settings as explained in Section \ref{sec:intro}. 

For non-smooth components, fewer results are known. The first convergence
rate $\O(1/\sqrt{K})$ established for the averaged output of IG dates
back to \cite{nedic2001incremental} where the authors consider constrained
optimization (i.e., $\psi(\bx)=\rmI_{\dom}(\bx)$). \cite{bertsekas2011incremental}
obtains the same $\O(1/\sqrt{K})$ bound for IG with a general $\psi(\bx)=\varphi(\bx)+\rmI_{\dom}(\bx)$
but where $\varphi(\bx)$ needs to be Lipschitz on $\dom$. The reader
could refer to \cite{bertsekas2011incrementalsurvey} for a detailed
survey. For the special case $K=1$, \cite{shamir2016without} presents
a rate $\O(1/\sqrt{n})$ for the averaged output of RR and SO, whose
proof is based on online optimization. If strong convexity holds,
the only general result that we are aware of is \cite{nedic2001convergence}
showing the rate $\O(1/K)$ for $\left\Vert \bx_{K+1}-\bx_{*}\right\Vert ^{2}$
output by IG, which cannot be converted to a bound on the objective
value due to $F(\bx_{K+1})-F(\bx_{*})\geq\Omega(\mu\left\Vert \bx_{K+1}-\bx_{*}\right\Vert ^{2})$. 

Apart from the basic shuffling gradient method considered in this
work, \cite{cai2023empirical} studies the primal-dual interpretation
of shuffled SGD and obtains fine-grained dependence on problem-dependent
parameters. \cite{pmlr-v162-tran22a} incorporates Nesterov's momentum
to improve the convergence rate. Besides the convex problem, shuffling-based
methods also provably converge in non-convex optimization, for example,
see \cite{solodov1998incremental,li2019incremental,pmlr-v139-tran21b,pauwels2021incremental,mohtashami2022characterizing,li2023convergence}.
Particularly, \cite{yu2023high} provides the last-iterate analysis
in the non-convex regime. Further generalization of shuffling gradient
methods has also been done in different works, e.g., distributed and
federated learning \cite{meng2019convergence,yun2022minibatch,malinovsky2022federated,sadiev2023federated,huang2023distributed}
and minimax optimization \cite{das2022sampling,cho2023sgda}.

\textbf{Last-iterate convergence of SGD. }\cite{zhang2004solving}
obtains the first finite-time bound, however, which is limited to
the linear prediction problem. \cite{rakhlin2011making} proves an
$\O(1/nK)$ rate for Lipschitz strongly convex functions but additionally
requires smoothness. The extra smooth condition is then removed by
\cite{shamir2013stochastic}, showing $\widetilde{\O}(1/\sqrt{nK})$
and $\widetilde{\O}(1/nK)$ rates for Lipschitz convex and strongly
convex problems, respectively. The first optimal high-probability
bounds for Lipschitz (strongly) convex functions are later established
in \cite{harvey2019tight,doi:10.1137/19M128908X}. Whereas, all works
mentioned so far are subject to compact domains. \cite{orabona2020blog}
is the first to remove this restriction and still gets the $\widetilde{\O}(1/\sqrt{nK})$
rate for Lipschitz convex optimization. Recently, a new proof by \cite{zamani2023exact}
gives the tight rate $\O(1/\sqrt{nK})$ for GD without compact domains
either. \cite{liu2024revisiting} then extends the idea in \cite{zamani2023exact}
to stochastic optimization and shows a unified proof working for various
settings. Results in \cite{liu2024revisiting} can apply to smooth
optimization without the Lipschitz condition and give the $\widetilde{\O}(1/\sqrt{nK})$
rate for convex objectives (improving upon the previous best rate
$\O(1/\sqrt[3]{nK})$ for smooth stochastic optimization \cite{moulines2011non})
and the $\widetilde{\O}(1/nK)$ bound for strongly convex functions.

\textbf{Independent work.} A manuscript \cite{cai2024last} appearing
on arXiv after the submission deadline of ICML 2024 also studied the
last-iterate convergence of shuffling methods based on the proof technique
initialized in \cite{zamani2023exact} and later developed by our
previous work \cite{liu2024revisiting}. Compared to \cite{cai2024last},
there are several differences we would like to discuss. We first remark
that the problem itself and the assumptions are different. Precisely,
the objective in \cite{cai2024last} does not contain the regularizer
(i.e., $\psi(\bx)=0$ in Assumption \ref{assu:basic}). In addition,
every component $f_{i}(\bx)$ is assumed to have the same smooth/Lipschitz
parameter (i.e., $L_{i}\equiv L$ in Assumption \ref{assu:smooth}
or $G_{i}\equiv G$ in Assumption \ref{assu:lip}). Moreover, \cite{cai2024last}
does not consider the strongly convex case (i.e., $\mu_{f}=0$ in
Assumption \ref{assu:str}). Consequently, when each $f_{i}(\bx)$
is smooth, their last-iterate convergence results for RR/SO/IG of
the plain shuffling gradient method (i.e., taking $\psi(\bx)=0$ in
Algorithm \ref{alg:shuffling-SGD}) can be covered by our Theorem
\ref{thm:smooth-cvx-main}; when each $f_{i}(\bx)$ is Lipschitz,
\cite{cai2024last} does not provide a last-iterate convergence result
for the plain shuffling gradient method. Hence, our Theorems \ref{thm:smooth-str-main},
\ref{thm:lip-cvx-main}, \ref{thm:lip-ada-main}, and \ref{thm:lip-str-main}
are still independently interesting.

It is noteworthy that \cite{cai2024last} obtains the convergence
rate for the weighted average iterate of the plain shuffling gradient
method for smooth $f_{i}(\bx)$, which is not considered in our paper.
They also develop the last-iterate convergence for the incremental
proximal method, which is a different algorithm from our proximal
shuffling gradient method (Algorithm \ref{alg:shuffling-SGD}).

\section{Preliminaries}

\textbf{Notation. }$\N$ denotes the set of natural numbers (excluding
$0$). $\left[K\right]\triangleq\left\{ 1,\cdots,K\right\} $ for
any $K\in\N$. $a\lor b$ and $a\land b$ represent $\max\left\{ a,b\right\} $
and $\min\left\{ a,b\right\} $. $\left\lfloor \cdot\right\rfloor $
and $\left\lceil \cdot\right\rceil $ are floor and ceiling functions.
$\langle\cdot,\cdot\rangle$ stands for the Euclidean inner product
on $\R^{d}$. $\left\Vert \cdot\right\Vert \triangleq\sqrt{\langle\cdot,\cdot\rangle}$
is the standard $\ell_{2}$ norm. Given an extended real-valued convex
function $g(\bx):\R^{d}\to\R\cup\left\{ +\infty\right\} $, the domain
of $g(\bx)$ is defined as $\mathrm{dom}g\triangleq\left\{ \bx\in\R^{d}:g(\bx)<+\infty\right\} $.
$\partial g(\bx)$ is the set of subgradients at $\bx$. $\nabla g(\bx)$
denotes an element in $\partial g(\bx)$ if $\partial g(\bx)\neq\varnothing$.
The Bregman divergence induced by $g(\bx)$ is $B_{g}(\bx,\by)\triangleq g(\bx)-g(\by)-\langle\nabla g(\by),\bx-\by\rangle$,$\forall\bx,\by\in\R^{d}$
satisfying $\partial g(\by)\neq\varnothing$. $\rmI_{\dom}(\bx)$
is the indicator function of the set $\dom$, i.e., $\rmI_{\dom}(\bx)=0$
if $\bx\in\dom$, otherwise, $\rmI_{\dom}(\bx)=+\infty$.

We study the following optimization problem in this paper
\[
\min_{\bx\in\R^{d}}F(\bx)\triangleq f(\bx)+\psi(\bx)\text{ where }f(\bx)\triangleq\frac{1}{n}\sum_{i=1}^{n}f_{i}(\bx).
\]
Next, we list the assumptions used in the analysis.
\begin{assumption}
\textup{\label{assu:minimizer}$\exists\bx_{*}\in\R^{d}$ such that
$F(\bx_{*})=\inf_{\bx\in\R^{d}}F(\bx)\in\R$.}
\end{assumption}

\begin{assumption}
\textup{\label{assu:basic}Each $f_{i}(\bx):\R^{d}\to\R$ is convex.
$\psi(\bx):\R^{d}\to\R\cup\left\{ +\infty\right\} $ is proper, closed
and convex}\footnote{Recall that this means the epigraph of $\psi(\bx)$, i.e., $\mathrm{epi}\psi\triangleq\left\{ (\bx,t)\in\R^{d}\times\R:\psi(\bx)\leq t\right\} $,
is a nonempty closed convex set.}\textup{. Moreover, $\exists\mu_{\psi}\geq0$ such that $B_{\psi}(\bx,\by)\geq\frac{\mu_{\psi}}{2}\left\Vert \bx-\by\right\Vert ^{2}$,
$\forall\bx,\by\in\R^{d}$ satisfying $\partial\psi(\by)\neq\varnothing$.}
\end{assumption}

Assumptions \ref{assu:minimizer} and \ref{assu:basic} are standard
in the optimization literature. Note that $\partial f_{i}(\bx)\neq\varnothing$
for all $\bx\in\R^{d}$ and $i\in\left[n\right]$ since $f_{i}(\bx)$
is convex on $\R^{d}$, so we use $\nabla f_{i}(\bx)$ to denote an
element in $\partial f_{i}(\bx)$ throughout the remaining paper.
We allow $\psi(\bx)$ to take the value of $+\infty$ and hence include
constrained optimization, e.g., setting $\psi(\bx)=\varphi(\bx)+\rmI_{\dom}(\bx)$
where $\varphi(\bx)$ can be some other regularizer. Moreover, the
parameter $\mu_{\psi}$ is possibly to be zero to better fit different
tasks, e.g., $\psi(\bx)=\left\Vert \bx\right\Vert _{1}$ in the Lasso
 implies $\mu_{\psi}=0$.
\begin{assumption}
\textup{\label{assu:smooth}$\exists L_{i}>0$ such that $\left\Vert \nabla f_{i}(\bx)-\nabla f_{i}(\by)\right\Vert $$\leq L_{i}\left\Vert \bx-\by\right\Vert $,
$\forall\bx,\by\in\R^{d},i\in\left[n\right]$.}
\end{assumption}

\begin{assumption}
\textup{\label{assu:str}$\exists\mu_{f}\geq0$ such that $B_{f}(\bx,\by)\geq\frac{\mu_{f}}{2}\left\Vert \bx-\by\right\Vert ^{2}$,
$\forall\bx,\by\in\R^{d}$.}
\end{assumption}

\begin{assumption}
\textup{\label{assu:lip}$\exists G_{i}>0$ such that $\left\Vert \nabla f_{i}(\bx)\right\Vert \leq G_{i}$,
$\forall\bx\in\R^{d},i\in\left[n\right]$.}
\end{assumption}

Assumptions \ref{assu:smooth}, \ref{assu:str} and \ref{assu:lip}
are commonly used in the related field. Note that Assumptions \ref{assu:str}
(when $\mu_{f}>0$) and \ref{assu:lip} cannot hold together on $\R^{d}$
(e.g., see \cite{nguyen2018sgd}). Hence, they will not be assumed
to be true simultaneously. In addition, our analysis relies on the
following well-known fact for smooth convex functions, whose proof
is omitted and can refer to, for example, Theorem 2.1.5 in \cite{nesterov2018lectures}.
\begin{lem}
\label{lem:cocore}Given a convex and differentiable function $g(\bx):\R^{d}\to\R$
satisfying $\left\Vert \nabla g(\bx)-\nabla g(\by)\right\Vert \leq L\left\Vert \bx-\by\right\Vert $,
$\forall\bx,\by\in\R^{d}$ for some $L>0$, then $\forall\bx,\by\in\R^{d}$,
\[
\frac{\left\Vert \nabla g(\bx)-\nabla g(\by)\right\Vert ^{2}}{2L}\leq B_{g}(\bx,\by)\leq\frac{L}{2}\left\Vert \bx-\by\right\Vert ^{2}.
\]
\end{lem}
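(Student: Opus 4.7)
The two inequalities are classical and I would handle them separately: the right-hand side is the standard quadratic upper bound (descent lemma), and the left-hand side is a co-coercivity-type inequality that follows from applying the upper bound to a suitably tilted auxiliary function.

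For the upper bound, the plan is to express the Bregman divergence as an integral of the gradient along the segment joining $\by$ to $\bx$. Concretely, via the fundamental theorem of calculus,
\[
B_g(\bx,\by) = \int_0^1 \langle \nabla g(\by + t(\bx - \by)) - \nabla g(\by), \bx - \by \rangle\, dt.
\]
Applying Cauchy--Schwarz inside the integrand and then the $L$-Lipschitz assumption on $\nabla g$, the integrand is bounded above by $tL\|\bx-\by\|^2$, and integrating over $t\in[0,1]$ yields $B_g(\bx,\by) \leq \tfrac{L}{2}\|\bx-\by\|^2$.

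For the lower bound, I would introduce the tilted function $h(\bz) \triangleq g(\bz) - \langle \nabla g(\by), \bz \rangle$, which inherits convexity from $g$, has an $L$-Lipschitz gradient $\nabla h(\bz) = \nabla g(\bz) - \nabla g(\by)$, and satisfies $\nabla h(\by) = \mathbf{0}$, so $\by$ is a global minimizer of $h$. Applying the already-proved upper bound to $h$ at the gradient-step point $\bz_\star \triangleq \bx - \tfrac{1}{L}\nabla h(\bx)$ gives
\[
h(\bz_\star) \leq h(\bx) + \langle \nabla h(\bx), \bz_\star - \bx \rangle + \tfrac{L}{2}\|\bz_\star - \bx\|^2 = h(\bx) - \tfrac{1}{2L}\|\nabla g(\bx) - \nabla g(\by)\|^2.
\]
Combining this with the minimality $h(\by) \leq h(\bz_\star)$ and the direct identity $h(\bx) - h(\by) = B_g(\bx,\by)$ yields the desired co-coercivity bound.

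Since this is a textbook-level result (cf.\ Theorem 2.1.5 in \cite{nesterov2018lectures}), I do not expect any real obstacle. The only conceptual step is recognizing that the left-hand inequality is not derived by manipulating $B_g(\bx,\by)$ directly, but by reducing to the right-hand inequality via the tilt $\bz \mapsto g(\bz) - \langle \nabla g(\by), \bz \rangle$, which converts ``lower-bounding $B_g$ by a squared gradient norm'' into ``upper-bounding the suboptimality achievable by one gradient step''.
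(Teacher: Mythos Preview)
Your proof is correct and is precisely the standard argument from Theorem~2.1.5 in \cite{nesterov2018lectures}, which is exactly what the paper cites; the paper itself omits the proof entirely and defers to that reference, so there is nothing further to compare.
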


\section{Proximal Shuffling Gradient Method\label{sec:convex}}

\begin{algorithm}[h]
\caption{\label{alg:shuffling-SGD}Proximal Shuffling Gradient Method}

\textbf{Input}: initial point $\bx_{1}\in\mathrm{dom}\psi$, stepsize
$\eta_{k}>0$.

\textbf{for} $k=1$ \textbf{to} $K$ \textbf{do}

$\quad$Generate a permutation $\left\{ \sigma_{k}^{i}:i\in\left[n\right]\right\} $
of $\left[n\right]$

$\quad$$\bx_{k}^{1}=\bx_{k}$

$\quad$\textbf{for} $i=1$ \textbf{to} $n$ \textbf{do}

$\qquad$$\bx_{k}^{i+1}=\bx_{k}^{i}-\eta_{k}\nabla f_{\sigma_{k}^{i}}(\bx_{k}^{i})$

$\quad$$\bx_{k+1}=\argmin_{\bx\in\R^{d}}n\psi(\bx)+\frac{\|\bx-\bx_{k}^{n+1}\|^{2}}{2\eta_{k}}$

\textbf{Return $\bx_{K+1}$}
\end{algorithm}

The algorithmic framework of the proximal shuffling gradient method
is shown in Algorithm \ref{alg:shuffling-SGD} where the number of
epochs $K$ is assumed to satisfy $K\geq2$ to avoid algebraic issues
in the proof. Unlike \cite{mishchenko2022proximal}, we do not specify
how the permutation $\sigma_{k}$ is generated in the $k$-th epoch.
Hence, Algorithm \ref{alg:shuffling-SGD} actually includes several
different algorithms depending on how $\sigma_{k}$ is defined.
\begin{example}
\label{exa:prox-RR}Algorithm \ref{alg:shuffling-SGD} recovers the
ProxRR algorithm studied by \cite{mishchenko2022proximal} if $\sigma_{k}$
is uniformly sampled without replacement in every epoch.
\end{example}

\begin{example}
\label{exa:prox-SO}Algorithm \ref{alg:shuffling-SGD} reduces to
the ProxSO algorithm introduced by \cite{mishchenko2022proximal}
when $\sigma_{k}=\sigma_{0}$ for all epochs where $\sigma_{0}$ is
a pre-generated permutation by uniform sampling without replacement.
\end{example}

\begin{example}
\label{exa:prox-IG}Algorithm \ref{alg:shuffling-SGD} includes the
proximal IG method developed in \cite{kibardin1979decomposition}
as a subcase by considering $\sigma_{k}=\sigma_{0}$ for all epochs
where $\sigma_{0}$ is a deterministic pre-generated permutation (e.g.,
$\sigma_{0}^{i}=i,\forall i\in\left[n\right]$).
\end{example}

It is worth noting that $\bx_{k+1}\in\mathrm{dom}\psi$ holds for
all $k\in\left[K\right]$ since we assume $\psi(\bx)$ is proper.
Therefore, all $F(\bx_{k})$ take values on $\R$ and hence are well-defined. 

\subsection{Last-Iterate Convergence Rates}

In the convergence rate, we denote the initial distance from the optimum
by $D\triangleq\left\Vert \bx_{*}-\bx_{1}\right\Vert $ and the optimal
function value by $F_{*}\triangleq F(\bx_{*})$. The full statements
of all theorems presented in this subsection along with their proofs
are deferred into Appendix \ref{sec:full-thms}. The hidden $\log$
factor in the tilde $\widetilde{\O}(\cdot)$ will also be explicitly
given in the full theorem.

\subsubsection{Smooth Functions}

This subsection focuses on smooth components $f_{i}(\bx)$. To simplify
expressions, we define the following symbols. $\bL\triangleq\frac{1}{n}\sum_{i=1}^{n}L_{i}$
and $L^{*}\triangleq\max_{i\in\left[n\right]}L_{i}$ respectively
stand for the averaged and the largest smooth parameter. In addition,
similar to \cite{mishchenko2020random,nguyen2021unified}, we use
the quantity $\asigma\triangleq\frac{1}{n}\sum_{i=1}^{n}\left\Vert \nabla f_{i}(\bx_{*})\right\Vert ^{2}$
to measure the uncertainty induced by shuffling. Besides, we introduce
another term $\rsigma\triangleq\asigma+n\left\Vert \nabla f(\bx_{*})\right\Vert ^{2}$,
where the appearance of the extra $n\left\Vert \nabla f(\bx_{*})\right\Vert ^{2}$
is due to the proximal step. We remark that the term $n\left\Vert \nabla f(\bx_{*})\right\Vert ^{2}$
also showed up when studying ProxRR and ProxSO before in \cite{mishchenko2022proximal}.
\begin{thm}
\label{thm:smooth-cvx-main}Under Assumptions \ref{assu:minimizer},
\ref{assu:basic} (with $\mu_{\psi}=0$) and \ref{assu:smooth}:

Regardless of how the permutation $\sigma_{k}$ is generated in every
epoch, taking the stepsize $\eta_{k}=\frac{1}{4n\sqrt{2\bL L^{*}(1+\log K)}}\land\frac{D^{2/3}}{n\sqrt[3]{\bL\asigma K(1+\log K)}}$,
$\forall k\in\left[K\right]$, Algorithm \ref{alg:shuffling-SGD}
guarantees
\[
F(\bx_{K+1})-F_{*}\leq\widetilde{\O}\left(\frac{\sqrt{\bL L^{*}}D^{2}}{K}+\frac{\bL^{1/3}\sigma_{\mathrm{any}}^{2/3}D^{4/3}}{K^{2/3}}\right).
\]

Suppose the permutation $\sigma_{k}$ is uniformly sampled without
replacement, taking the stepsize $\eta_{k}=\frac{1}{4n\sqrt{2\bL L^{*}(1+\log K)}}\land\frac{D^{2/3}}{\sqrt[3]{n^{2}\bL\rsigma K(1+\log K)}}$,
$\forall k\in\left[K\right]$, Algorithm \ref{alg:shuffling-SGD}
guarantees
\[
\E\left[F(\bx_{K+1})-F_{*}\right]\leq\widetilde{\O}\left(\frac{\sqrt{\bL L^{*}}D^{2}}{K}+\frac{\bL^{1/3}\sigma_{\mathrm{rand}}^{2/3}D^{4/3}}{n^{1/3}K^{2/3}}\right).
\]
\end{thm}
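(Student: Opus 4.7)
The plan is to adapt the last-iterate analysis technique of \cite{zamani2023exact}, extended in \cite{liu2024revisiting}, to the epoch-structured proximal shuffling update. The proof will proceed in three stages.

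\textbf{One-epoch descent.} First I would derive, for any $\bu \in \mathrm{dom}\,\psi$, an inequality of the form
\[
\|\bx_{k+1}-\bu\|^2 \le \|\bx_k-\bu\|^2 - 2n\eta_k\bigl(F(\bx_{k+1})-F(\bu)\bigr) + E_k(\bu),
\]
by combining the prox three-point identity with the inner-loop expansion $\bx_k^{n+1}=\bx_k-\eta_k\sum_{i=1}^n\nabla f_{\sigma_k^i}(\bx_k^i)$. The error $E_k(\bu)$ gathers the cross term $2\eta_k\sum_i\langle\nabla f_{\sigma_k^i}(\bx_k^i)-\nabla f_{\sigma_k^i}(\bx_k),\bx_k-\bu\rangle$ and the squared aggregated gradient $\eta_k^2\|\sum_i\nabla f_{\sigma_k^i}(\bx_k^i)\|^2$ produced by shifting the gradients back to the anchor $\bx_k$ so that convexity of each $f_i$ becomes usable. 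The key deterministic shuffling estimate is $\|\bx_k^i-\bx_k\| \le i\eta_k\max_{j\le i}\|\nabla f_{\sigma_k^j}(\bx_k^j)\|$ obtained by iterating the inner loop, which combined with Lemma \ref{lem:cocore} expresses each intra-epoch gradient via $L_i\|\bx_k-\bx_*\|$ and $\|\nabla f_i(\bx_*)\|$. This yields, for any permutation, an error bound of order $n^3\eta_k^3\bL(L^*\|\bx_k-\bx_*\|^2+\asigma)$ plus comparator-linear terms. For a uniformly sampled permutation without replacement, I would then invoke the sampling-without-replacement concentration from \cite{mishchenko2020random,mishchenko2022proximal} to save a factor of $n$ on the $\asigma$ piece, which in turn introduces the extra $n\|\nabla f(\bx_*)\|^2$ contribution (from the proximal step) and hence replaces $\asigma$ by $\rsigma/n$.

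\textbf{Last-iterate telescoping.} Following \cite{zamani2023exact,liu2024revisiting} I would choose the ``sliding anchor'' comparators $\bu_k=\alpha_k\bx_*+(1-\alpha_k)\bx_{K+1}$ with $\alpha_k=1/(K-k+2)$, and plug each into the one-epoch inequality with a multiplier $w_k$ tuned so that the consecutive distance terms $\|\bx_k-\bu_k\|^2$ and $\|\bx_{k+1}-\bu_k\|^2$ telescope across $k$ (using that $\bu_k-\bu_{k+1}$ is collinear with $\bx_*-\bx_{K+1}$). Convexity of $F$ gives $F(\bu_k)\le\alpha_k F_*+(1-\alpha_k)F(\bx_{K+1})$, so the weighted function-value terms reorganize into $\bigl(\sum_k w_k\alpha_k\bigr)(F(\bx_{K+1})-F_*)$ plus a residual $\sum_k w_k(1-\alpha_k)(F(\bx_{k+1})-F(\bx_{K+1}))$ that is handled with a second application of convexity (pairing $\bx_{k+1}$ with $\bx_{K+1}$ via the same sliding weights), leaving only an $O(D^2)$ boundary term and the accumulated shuffling errors $\sum_k w_k E_k/(2n\eta_k)$. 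Picking $w_k$ proportional to the harmonic profile gives $\sum_k w_k\alpha_k=\Theta(n\eta K/\log K)$, which is the origin of the tilde factor in the stated rate.

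\textbf{Stepsize balancing and bounded iterates.} After dividing by $\sum_k w_k\alpha_k$ the resulting bound is of the form $\widetilde{\O}\bigl(D^2/(n\eta K) + n^2\eta^2\bL(L^*D^2+\asigma)\bigr)$, and analogously with $\asigma$ replaced by $\rsigma/n$ under uniform sampling. Balancing the two terms in $\eta$ yields the $K^{-2/3}$ branch, while the cap $\eta_k\le 1/(4n\sqrt{2\bL L^*(1+\log K)})$ produces the $\sqrt{\bL L^*}D^2/K$ branch. I expect the main technical obstacle to be the implicit coupling inherent in this estimate: the bound on $E_k$ involves $\|\bx_k-\bx_*\|^2$, yet that quantity is precisely what the recursion is meant to control. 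I would close the loop by an induction on $k$ that, thanks to the small-stepsize cap, turns the distance recursion into a contraction up to an $O(n^3\eta_k^3\bL\asigma)$ slack, certifying $\|\bx_k-\bx_*\|^2=O(D^2)$ uniformly along the trajectory; this is exactly why the $\sqrt{\bL L^*}$ term enters the stepsize rule. The uniformly-sampled case requires additional care in propagating this induction in expectation, since only the averaged distance can be controlled recursively while the shuffling error bound must then be applied in conditional expectation given $\bx_k$.
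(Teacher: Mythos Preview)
Your plan has a genuine gap at the point where you combine the crude error bound $n^3\eta^3\bL L^*\|\bx_k-\bx_*\|^2$ with the bounded-iterates induction $\|\bx_k-\bx_*\|^2=\O(D^2)$. After the last-iterate telescoping, the error terms enter with a harmonic weight and sum to $\sum_{k}\frac{n^2\eta^2\bL L^*\|\bx_k-\bx_*\|^2}{K-k+1}=\Theta\bigl(n^2\eta^2\bL L^*D^2(1+\log K)\bigr)$. The stepsize cap you quote gives exactly $n^2\eta^2\bL L^*\le \frac{1}{32(1+\log K)}$, so this contribution is $\Theta(D^2)$ --- a constant that does not decay with $K$. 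Your stated intermediate bound $\widetilde\O\bigl(D^2/(n\eta K)+n^2\eta^2\bL(L^*D^2+\asigma)\bigr)$ hides precisely this $\log K$ coming from the harmonic sum, and once it is accounted for the $L^*D^2$ piece fails to vanish. Also, the distance recursion is not a contraction in the convex case; dropping the nonnegative $F(\bx_{k+1})-F_*$ only gives non-expansion plus slack, which suffices for bounded iterates but not for anything stronger.

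The paper avoids this by keeping the comparator-dependent error in the sharper form $B_f(\bz,\bx_*)$ (Lemma~\ref{lem:smooth-res}, obtained via the cocoercivity inequality of Lemma~\ref{lem:cocore} rather than the crude $L_i\|\cdot\|$ bound). With the auxiliary comparator $\bz_s$ chosen as a convex combination of $\bx_*,\bx_1,\dots,\bx_s$, convexity of $B_f(\cdot,\bx_*)$ gives $B_f(\bz_s,\bx_*)\le\sum_\ell c_\ell B_f(\bx_\ell,\bx_*)\le\sum_\ell c_\ell\bigl(F(\bx_\ell)-F_*\bigr)$, so the error feeds back into the very quantity being bounded. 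The paper then proves the descent inequality for \emph{every} $k\in[K]$ (with a separate auxiliary sequence for each $k$) and closes the loop with a new algebraic recursion (Lemma~\ref{lem:ineq}) that solves $d_{k+1}\le a/k+b(1+\log k)+c\sum_{\ell\le k}d_\ell/(k-\ell+2)$. It is this recursion --- not a uniform $O(D^2)$ bound --- that makes the $B_f$-error terms decay at rate $1/k$ and yields the theorem. If you want to salvage the sliding-anchor route $\bu_k=\alpha_k\bx_*+(1-\alpha_k)\bx_{K+1}$, you would need the error as $B_f(\bu_k,\bx_*)\le(1-\alpha_k)\bigl(F(\bx_{K+1})-F_*\bigr)$ and then absorb the resulting $F(\bx_{K+1})-F_*$ term on the right into the left using the stepsize cap; the crude $\|\bx_k-\bx_*\|^2$ bound cannot be made to work.
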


\begin{rem}
\label{rem:sample}When we say the permutation $\sigma_{k}$ is uniformly
sampled without replacement hereinafter, it is not necessary to think
that $\sigma_{k}$ is re-sampled in every epoch as in Example \ref{exa:prox-RR}.
Instead, it can also be sampled in advance like Example \ref{exa:prox-SO}.
Or even more generally, one can sample a permutation in advance and
use it for an arbitrary number of epochs then re-sample a new permutation
and repeat this procedure. Hence, our results are not only true for
ProxRR and ProxSO but also hold for the general form of Algorithm
\ref{alg:shuffling-SGD}.
\end{rem}

As far as we know, Theorem \ref{thm:smooth-cvx-main} is the first
to provide the last-iterate convergence rate of Algorithm \ref{alg:shuffling-SGD}
for smooth components but without strong convexity. Even in the simplest
case of $\psi(\bx)=0$, the previous best-known result only works
for the averaged output $\bx_{K+1}^{\mathrm{avg}}\triangleq\frac{1}{K}\sum_{k=1}^{K}\bx_{k+1}$
\cite{mishchenko2020random,nguyen2021unified}. 

Next, we would like to discuss the convergence rate in more detail.
For the first case (i.e., regardless of how the permutation $\sigma_{k}$
is generated in every epoch), up to a logarithmic factor, our last-iterate
result matches the fastest existing rate of $\bx_{K+1}^{\mathrm{avg}}$
output by the IG method when $\psi(\bx)=0$ \cite{mishchenko2020random}.
But as one can see, our theorem is more general and can apply to various
situations (e.g., $\psi(\bx)=\left\Vert \bx\right\Vert _{1}$) for
whatever the permutation $\sigma_{k}$ is used.

When the permutation $\sigma_{k}$ is uniformly sampled without replacement
(including RR and SO as special cases), let us first consider the
unconstrained setting without any regularizer, i.e., $\psi(\bx)=0$.
Note that now $\nabla f(\bx_{*})=\mathbf{0}$ and it implies $\rsigma=\asigma+n\left\Vert \nabla f(\bx_{*})\right\Vert ^{2}=\asigma$.
Hence, our rate matches the last-iterate lower bound $\Omega\left(\frac{L^{1/3}\sigma_{\mathrm{any}}^{2/3}D^{4/3}}{n^{1/3}K^{2/3}}\right)$
established under the condition $L_{i}\equiv L$ and $\psi(\bx)=0$
for large $K$ (which is only proved for RR) in \cite{pmlr-v202-cha23a}
up to a logarithmic term. When a general $\psi(\bx)$ exists, the
only difference is to replace $\asigma$ with the larger quantity
$\rsigma$. A similar penalty also appeared in \cite{mishchenko2022proximal}
when studying ProxRR and ProxSO. Whether $\rsigma$ can be improved
to $\asigma$ remains unclear to us.

Moreover, we would like to talk about the stepsize. Suppose $L_{i}\equiv L$
for simplicity, then our stepsizes for both cases are almost the same
as the choices in \cite{mishchenko2020random} and only different
by $1+\log K$ in the denominator. However, this distinction plays
a key role in our analysis of proving the last-iterate bound. The
reader could refer to Theorem \ref{thm:smooth-cvx-full} and its proof
for why we need it.
\begin{thm}
\label{thm:smooth-str-main}Under Assumptions \ref{assu:minimizer},
\ref{assu:basic} (with $\mu_{\psi}=0$), \ref{assu:smooth} and \ref{assu:str}
(with $\mu_{f}>0$), let $\bar{\kappa}_{f}\triangleq\frac{\bL}{\mu_{f}}$
and $\kappa_{f}^{*}\triangleq\frac{L^{*}}{\mu_{f}}$: 

Regardless of how the permutation $\sigma_{k}$ is generated in every
epoch, taking the stepsize $\eta_{k}=\frac{1}{4n\sqrt{2\bL L^{*}(1+\log K)}}\land\frac{u}{n\mu_{f}K}$
where $u=1\lor\log\frac{\mu_{f}^{3}D^{2}K^{2}}{\bL\asigma(1+\log K)}$,
$\forall k\in\left[K\right]$, Algorithm \ref{alg:shuffling-SGD}
guarantees
\[
F(\bx_{K+1})-F_{*}\leq\widetilde{\O}\left(\frac{\sqrt{\bL L^{*}}D^{2}}{Ke^{\frac{K}{\sqrt{\bar{\kappa}_{f}\kappa_{f}^{*}}}}}+\frac{\bL\asigma}{\mu_{f}^{2}K^{2}}\right).
\]

Suppose the permutation $\sigma_{k}$ is uniformly sampled without
replacement, taking the stepsize $\eta_{k}=\frac{1}{4n\sqrt{2\bL L^{*}(1+\log K)}}\land\frac{u}{n\mu_{f}K}$
where $u=1\lor\log\frac{n\mu_{f}^{3}D^{2}K^{2}}{\bL\rsigma(1+\log K)}$,
$\forall k\in\left[K\right]$, Algorithm \ref{alg:shuffling-SGD}
guarantees
\[
\E\left[F(\bx_{K+1})-F_{*}\right]\leq\widetilde{\O}\left(\frac{\sqrt{\bL L^{*}}D^{2}}{Ke^{\frac{K}{\sqrt{\bar{\kappa}_{f}\kappa_{f}^{*}}}}}+\frac{\bL\rsigma}{\mu_{f}^{2}nK^{2}}\right).
\]
\end{thm}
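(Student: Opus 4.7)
The plan is to build on the last-iterate framework used for Theorem \ref{thm:smooth-cvx-main}, which extends the reference-point technique of \cite{zamani2023exact, liu2024revisiting} to shuffling gradient methods. The new ingredient here is to exploit Assumption \ref{assu:str} (with $\mu_{f}>0$) so that the initial-distance contribution contracts geometrically over epochs, while the cumulative bias from shuffling still produces a $1/K^{2}$ noise term rather than the $1/K^{2/3}$ of the merely convex case.

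The first step is a per-epoch inequality. Using Lemma \ref{lem:cocore} to bound the within-epoch gradient drift $\|\nabla f_{\sigma_{k}^{i}}(\bx_{k}^{i})-\nabla f_{\sigma_{k}^{i}}(\bx_{k})\|$ in terms of $L^{*}$, $\bL$, and $\asigma$, together with convexity of each $f_{i}$ and firm non-expansiveness of the proximal step on $\psi$, one obtains a recursion for $\|\bx_{k+1}-\bz\|^{2}$ against $\|\bx_{k}-\bz\|^{2}$ for any reference point $\bz$. Strong convexity of $f$ contributes an extra $-c\eta_{k}n\mu_{f}\|\bx_{k}-\bz\|^{2}$ term, giving an overall contraction factor $(1-c\eta_{k}n\mu_{f})$ on the distance, a progress term on $F$, and a bias term of order $\eta_{k}^{3}n^{2}\bL\asigma$. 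For the uniformly sampled case, a sampling-without-replacement variance bound (as in \cite{mishchenko2020random,nguyen2021unified}) replaces $\asigma$ by $\rsigma/n$, where the $\rsigma$ accounts for the $\|\nabla f(\bx_{*})\|^{2}$ correction introduced by the proximal step, just as in \cite{mishchenko2022proximal}.

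Next, following the Zamani--Glineur--Liu construction, I would introduce reference points $\bz_{k}$ interpolating between $\bx_{K+1}$ and $\bx_{*}$ and form a weighted sum $\sum_{k=1}^{K}w_{k}(F(\bx_{K+1})-F(\bz_{k}))$ that telescopes to an upper bound on $F(\bx_{K+1})-F_{*}$. In the convex case the weights are essentially harmonic (producing the $\log K$ factor); here I would instead use geometric weights $w_{k}\propto(1-c\eta n\mu_{f})^{K-k}$ so that the contraction inside the per-epoch recursion is absorbed correctly. Substituting $\eta_{k}=\frac{1}{4n\sqrt{2\bL L^{*}(1+\log K)}}\land\frac{u}{n\mu_{f}K}$: when the first branch is active, $(1-c\eta n\mu_{f})^{K}$ matches the claimed factor $e^{-K/\sqrt{\bar{\kappa}_{f}\kappa_{f}^{*}}}$ up to a logarithm, producing the initial-distance term; when the second branch is active, the choice $u=1\vee\log(\cdots)$ is exactly what is needed to tame the $e^{-c\eta n\mu_{f}K}$ factor into a polylogarithm, so that the weighted accumulation $\sum_{k}w_{k}\eta_{k}^{3}n^{2}\bL\asigma$ becomes $\widetilde{\O}(\bL\asigma/(\mu_{f}^{2}K^{2}))$ (and analogously $\widetilde{\O}(\bL\rsigma/(\mu_{f}^{2}nK^{2}))$ for the random case).

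The main obstacle will be this reweighting step. The convex reference-point argument relies on weights summing to one and polynomial accumulation of the noise, whereas geometric weights change both the admissibility of the interpolating points $\bz_{k}$ (they must remain valid convex combinations of $\bx_{K+1}$ and $\bx_{*}$) and the per-step noise accounting. The logarithmic tuning parameter $u$ must be chosen so that the exponential contraction on the optimization term is preserved sharply while the noise retains the optimal $1/K^{2}$ scaling without losing spurious factors of $\bar\kappa_{f}$ or $\kappa_{f}^{*}$; this is precisely why the bound only loses the $\sqrt{\bL L^{*}}/\mu_{f}^{2}$ dependence rather than the $L^{2}/\mu^{3}$ dependence of prior work. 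Handling the $\land$ in the stepsize — so that the final bound is the pointwise minimum of what each branch delivers, uniformly across ranges of $K$ relative to the condition number — will close the argument.
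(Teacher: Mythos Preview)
Your proposal takes a different route from the paper, and the route you sketch is harder than necessary. The paper does \emph{not} redo the reference-point construction with geometric weights. Instead, it uses a two-stage ``split at the midpoint'' argument: view $\bx_{K+1}$ as the output of running the algorithm for $\lfloor K/2\rfloor$ epochs starting from $\bx_{\lceil K/2\rceil+1}$, and apply the already-proven \emph{general convex} last-iterate bound (Theorem~\ref{thm:smooth-cvx-main}) to that second half. This gives $F(\bx_{K+1})-F_{*}\le \O\bigl(\|\bx_{*}-\bx_{\lceil K/2\rceil+1}\|^{2}/(n\eta K)+(n\eta)^{2}\bL\asigma(1+\log K)\bigr)$. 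The first half is handled by a separate, much simpler distance-contraction lemma: setting $\bz=\bx_{*}$ in the one-epoch inequality and using $F(\bx_{k+1})-F_{*}\ge\frac{\mu_{f}}{2}\|\bx_{k+1}-\bx_{*}\|^{2}$ yields $(1+n\eta\mu_{f})\|\bx_{k+1}-\bx_{*}\|^{2}\le\|\bx_{k}-\bx_{*}\|^{2}+8n\eta^{3}R_{k}$, which unrolled gives $\|\bx_{\lceil K/2\rceil+1}-\bx_{*}\|^{2}\le D^{2}e^{-\Omega(n\eta\mu_{f}K)}+\O((n\eta)^{2}\bL\asigma/\mu_{f})$. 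Combining the two pieces and optimizing $\eta$ gives the result directly.

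This sidesteps exactly what you flag as the ``main obstacle'': there is no need to reconcile geometric weights with the admissibility of the interpolating points $\bz_{k}$, because strong convexity is used only on the distance (where contraction is elementary), and the delicate last-iterate machinery is invoked only in its convex form. Your direct-reweighting approach may well go through, but you would have to redo the recursion of Lemma~\ref{lem:ineq} with an extra geometric factor, and there is a real risk of picking up a spurious $\kappa$ factor when bounding $\sum_{\ell}B_{f}(\bx_{\ell},\bx_{*})$ under geometric weights. Also, a small correction: in the Zamani--Glineur construction the $\bz_{k}$ are convex combinations of $\bx_{*}$ and the \emph{past} iterates $\bx_{1},\dots,\bx_{k}$, not of $\bx_{K+1}$ and $\bx_{*}$ as you wrote.
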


Another result in this subsection is Theorem \ref{thm:smooth-str-main}
showing the last-iterate convergence guarantee when strong convexity
is additionally assumed. For simplicity, we only present the case
of $\mu_{\psi}=0$ and $\mu_{f}>0$ here. We refer the reader to Theorem
\ref{thm:smooth-str-full} in the appendix for the full statement.
To our best knowledge, this is also the first last-iterate convergence
result w.r.t. the function value gap for shuffling gradient methods
under strong convexity when a general $\psi(\bx)$ is allowed. In
fact, even for the special case $\psi(\bx)=\rmI_{\dom}(\bx)$, i.e.,
constrained optimization, the previous bounds could fail as explained
in Section \ref{sec:intro}.

As before, let us first take a look at the case of whatever the permutation
$\sigma_{k}$ is used. If we specify to the IG method, \cite{safran2020good}
provided a last-iterate lower bound $\Omega\left(\frac{G^{2}}{\mu_{f}K^{2}}\right)$
for large $K$ with $\psi(\bx)=0$, $L_{i}\equiv\mu_{f}$ and an additional
requirement $\left\Vert \nabla f_{i}(\bx)\right\Vert \leq G$. Under
the same condition, our rate degenerates to $\widetilde{\O}\left(\frac{G^{2}}{\mu_{f}K^{2}}\right)$
almost matching the lower bound in \cite{safran2020good} when $K$
is large. However, for the arbitrary permutation case, the last-iterate
lower bound under the condition $\psi(\bx)=0$ and $L_{i}\equiv L$
for large $K$ is $\Omega\left(\frac{L\asigma}{\mu_{f}^{2}n^{2}K^{2}}\right)$
\cite{pmlr-v202-cha23a}, which is better than our rate by a factor
$\O\left(\frac{1}{n^{2}}\right)$. We remark that this is to be expected
because, roughly speaking, our upper bound is equivalent to say (for
$L_{i}\equiv L$ and large $K$)
\[
\inf_{\mathrm{stepsizes}\,\eta_{k}}\sup_{\substack{\mathrm{functions}\,F(\bx)\\
\mathrm{permutations}\,\sigma_{k}
}
}F(\bx_{K+1})-F_{*}\leq\widetilde{\O}\left(\frac{L\asigma}{\mu_{f}^{2}K^{2}}\right),
\]
whereras the lower bound in \cite{pmlr-v202-cha23a} is proved for
\[
\sup_{\mathrm{functions}\,F(\bx)}\inf_{\substack{\mathrm{stepsizes}\,\eta_{k}\\
\mathrm{permutations}\,\sigma_{k}
}
}F(\bx_{K+1})-F_{*}\geq\Omega\left(\frac{L\asigma}{\mu_{f}^{2}n^{2}K^{2}}\right).
\]
Hence, to achieve the lower bound in \cite{pmlr-v202-cha23a}, one
does not only need to specify stepsizes $\eta_{k}$ but also has to
choose the permutation $\sigma_{k}$ according to the problem. Recently,
\cite{lu2022grab} proposed GraB, in which the permutation for the
current epoch is selected based on the information from previous epochs.
GraB is provably faster than RR/SO/IG and achieves the rate $\widetilde{\O}\left(\frac{H^{2}L^{2}\asigma}{\mu_{f}^{3}n^{2}K^{2}}\right)$
differing from the lower bound in \cite{pmlr-v202-cha23a} by a factor
of $\O\left(\frac{H^{2}L}{\mu_{f}}\right)$ (the constant $H$ here
is named herding bound, see \cite{lu2022grab} for details).

Next, for the case of uniform sampling without replacement (including
RR and SO as special cases), we again first check the rate when $\psi(\bx)=0$.
Recall now $\rsigma=\asigma$ due to $\left\Vert \nabla f(\bx_{*})\right\Vert =0$.
Thus, our bound for large $K$ is $\widetilde{\O}\left(\frac{\bL\asigma}{\mu_{f}^{2}nK^{2}}\right)$,
which is nearly optimal compared with the lower bound $\Omega\left(\frac{L\asigma}{\mu_{f}^{2}nK^{2}}\right)$
established for RR/SO under $L_{i}\equiv L$ \cite{safran2021random,pmlr-v202-cha23a}.
As far as we are aware, this is the first last-iterate result not
only being nearly tight in $n$ and $K$ but also optimal in the parameter
$\Omega\left(\frac{L}{\mu_{f}^{2}}\right)$. In contrast, the previous
best bound with the correct dependence only applies to the tail average
iterate $\bx_{K+1}^{\mathrm{tail}}\triangleq\frac{1}{\left\lfloor \frac{K}{2}\right\rfloor +1}\sum_{k=\left\lceil \frac{K}{2}\right\rceil }^{K}\bx_{k+1}$
\cite{pmlr-v202-cha23a}. Therefore, Theorem \ref{thm:smooth-str-main}
fulfills the existing gap when $\psi(\bx)=0$ for large $K$. More
importantly, our result can also be used for any general $\psi(\bx)$
with the only tradeoff of changing $\asigma$ to $\rsigma$ but not
hurting any other term.

Lastly, we would like to talk about the stepsize used in Theorem \ref{thm:smooth-str-main}.
As one can check, when $L_{i}\equiv L$, it is similar to the stepsize
used in \cite{pmlr-v202-cha23a} to obtain the rate for $\bx_{K+1}^{\mathrm{tail}}$.
The only significant difference is still the extra term $1+\log K$.
However, as mentioned in the discussion about the stepsize used in
Theorem \ref{thm:smooth-cvx-main} (i.e., the general convex case),
this change is crucial in our analysis.

\subsubsection{Lipschitz Functions}

In this subsection, we consider the case of $f_{i}(\bx)$ all being
Lipschitz. We employ the notation $\bG\triangleq\frac{1}{n}\sum_{i=1}^{n}G_{i}$
to denote the averaged Lipschitz parameter.
\begin{thm}
\label{thm:lip-cvx-main}Under Assumptions \ref{assu:minimizer},
\ref{assu:basic} (with $\mu_{\psi}=0$) and \ref{assu:lip}, regardless
of how the permutation $\sigma_{k}$ is generated in every epoch:

Taking the stepsize $\eta_{k}=\frac{\eta}{\sqrt{k}}$, $\forall k\in\left[K\right]$
with $\eta=\frac{D}{n\bG}$ or $\eta_{k}=\frac{\eta}{\sqrt{K}}$,
$\forall k\in\left[K\right]$ with $\eta=\frac{D}{n\bG\sqrt{1+\log K}}$,
Algorithm \ref{alg:shuffling-SGD} guarantees
\[
F(\bx_{K+1})-F_{*}\leq\widetilde{\O}\left(\frac{\bG D}{\sqrt{K}}\right).
\]

Taking the stepsize $\eta_{k}=\eta\frac{K-k+1}{K^{3/2}}$, $\forall k\in\left[K\right]$
with $\eta=\frac{D}{n\bG}$, Algorithm \ref{alg:shuffling-SGD} guarantees
\[
F(\bx_{K+1})-F_{*}\leq\O\left(\frac{\bG D}{\sqrt{K}}\right).
\]
\end{thm}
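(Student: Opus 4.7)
The plan is to adapt the constructive-reference-point method of Zamani--Glineur, subsequently extended to stochastic settings in \cite{liu2024revisiting}, to the biased-gradient regime of shuffling. The argument is purely deterministic, so the randomness discussed in Remark~\ref{rem:sample} never enters; we only need to manage the shuffling bias.

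The first step is a one-epoch recursion. Writing the identity $\|\bx_k^{i+1}-\bz\|^2=\|\bx_k^i-\bz\|^2-2\eta_k\langle\nabla f_{\sigma_k^i}(\bx_k^i),\bx_k^i-\bz\rangle+\eta_k^2\|\nabla f_{\sigma_k^i}(\bx_k^i)\|^2$, telescoping over $i=1,\dots,n$, and chaining with the prox-optimality inequality
\[
\|\bx_{k+1}-\bz\|^2\leq\|\bx_k^{n+1}-\bz\|^2-\|\bx_{k+1}-\bx_k^{n+1}\|^2+2n\eta_k[\psi(\bz)-\psi(\bx_{k+1})],
\]
one obtains, using convexity of each $f_i$ together with the drift $\|\bx_k^i-\bx_k\|\leq(i-1)\eta_k\bG$ and Assumption~\ref{assu:lip},
\[
\|\bx_{k+1}-\bz\|^2\leq\|\bx_k-\bz\|^2-2n\eta_k\bigl[F(\bx_{k+1})-F(\bz)\bigr]+C\,n^2\eta_k^2\bG^2,
\]
valid for every $\bz\in\mathrm{dom}\,\psi$ and some absolute constant $C$. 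The error term absorbs the quadratic piece $\eta_k^2\sum_i G_{\sigma_k^i}^2$, the bias $2\eta_k\sum_i G_{\sigma_k^i}\|\bx_k^i-\bx_k\|$, and the replacement of $f(\bx_k)$ by $f(\bx_{k+1})$ via $\|\bx_{k+1}-\bx_k\|\leq 2n\eta_k\bG$ (using non-expansiveness of the prox).

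The second step is the Zamani--Glineur comparator construction. I would take $\bz_k=\lambda_k\bx_{K+1}+(1-\lambda_k)\bx_*$ with an increasing sequence $\lambda_k\in[0,1]$, apply the one-epoch inequality with $\bz=\bz_k$, and sum with weights $w_k>0$. Convexity of $F$ gives $F(\bx_{k+1})-F(\bz_k)\geq\lambda_k[F(\bx_{k+1})-F(\bx_{K+1})]+(1-\lambda_k)[F(\bx_{k+1})-F_*]$. The pair $(\lambda_k,w_k)$ is chosen so that the coefficients of $F(\bx_{k+1})-F(\bx_{K+1})$ for $k<K$ cancel against the slack produced at index $K$ --- exactly the telescoping identity from the prior works, rescaled here by the per-epoch factor $n\eta_k$. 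The weighted sum then collapses to
\[
W_K\bigl[F(\bx_{K+1})-F_*\bigr]\lesssim D^2+n^2\bG^2\sum_{k=1}^K w_k\eta_k^2,
\]
for some aggregate weight $W_K$ proportional to $n\sum_k w_k\eta_k(1-\lambda_k)$. Instantiating: for $\eta_k=\eta/\sqrt{k}$ the weights give $W_K\asymp n\eta\sqrt{K}$ and $\sum_k w_k\eta_k^2\asymp\eta^2(1+\log K)$, so the bound becomes $D^2/(n\eta\sqrt{K})+n\eta\bG^2(1+\log K)/\sqrt{K}$, optimized by $\eta=D/(n\bG)$ to give $\widetilde{\O}(\bG D/\sqrt{K})$; for $\eta_k=\eta/\sqrt{K}$ the identical algebra pushes the $\sqrt{1+\log K}$ into the prefactor; and for the linearly decaying $\eta_k=\eta(K-k+1)/K^{3/2}$ the two key sums both scale as $\Theta(1/\sqrt{K})$ \emph{without} a logarithm, producing the clean $\O(\bG D/\sqrt{K})$ bound.

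The principal obstacle is Step~2: the Zamani--Glineur weight identity is delicate, and in the shuffled setting the per-epoch increment is $n\eta_k$ rather than $\eta_k$, so every instance of the stepsize inside the weight formula has to be recalibrated by a factor of $n$ --- this is where the $\eta\propto 1/n$ scaling of the stated stepsizes originates. A secondary subtlety is that the prox step evaluates $\psi$ at $\bx_{k+1}$ rather than $\bx_k$, producing $F(\bx_{k+1})$ (instead of $F(\bx_k)$) in the per-epoch inequality; the corresponding shift must be absorbed into the $O(n^2\eta_k^2\bG^2)$ error without disturbing the telescoping pattern, which is what dictates the constants in the final computation.
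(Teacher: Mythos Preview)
Your Step~1 is correct and coincides with the paper's per-epoch bound (the combination of Lemmas~\ref{lem:core} and~\ref{lem:lip-res}): for every $\bz\in\mathrm{dom}\,\psi$,
\[
\|\bx_{k+1}-\bz\|^2\le\|\bx_k-\bz\|^2-2n\eta_k\bigl[F(\bx_{k+1})-F(\bz)\bigr]+6n^2\eta_k^2\bG^2.
\]

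The gap is in Step~2. With the comparator $\bz_k=\lambda_k\bx_{K+1}+(1-\lambda_k)\bx_*$, convexity and summation with weights $w_k$ give a lower bound on the left side of
\[
\sum_{k=1}^{K}2n\eta_kw_k\,F(\bx_{k+1})-\Bigl(\sum_{k}2n\eta_kw_k\lambda_k\Bigr)F(\bx_{K+1})-\Bigl(\sum_{k}2n\eta_kw_k(1-\lambda_k)\Bigr)F_*,
\]
so the coefficient of each intermediate $F(\bx_{k+1})$, $k<K$, is $2n\eta_kw_k>0$ and there is nothing of opposite sign to cancel it; the ``cancellation against the slack at index $K$'' you describe cannot occur with this choice. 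The Zamani--Glineur mechanism works because the comparator is built from \emph{past} iterates: with $\bz_k$ a convex combination of $\bx_*,\bx_1,\dots,\bx_k$, each $F(\bx_j)$ for $2\le j\le K$ appears once with a plus sign (as $F(\bx_{k+1})$ at $k=j-1$) and once with a minus sign (inside $F(\bz_k)$ for $k\ge j$), and the weights $v_k=\gamma_K/\sum_{\ell=k}^K\gamma_\ell$ force these to match. Concretely, the paper (Lemma~\ref{lem:lip-core}) applies the per-epoch inequality at each $k$ with $\bz=\bx_*$ (weight $v_0$) and with $\bz=\bx_\ell$ for every $\ell\in[k]$ (weight $v_\ell-v_{\ell-1}$); the distance terms telescope in $k$ because each reference point is held fixed, and the function-value terms collapse to $\gamma_Kv_K[F(\bx_{K+1})-F_*]$. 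Your downstream stepsize computations are fine once this comparator is fixed.

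A minor point: the drift bound $\|\bx_k^i-\bx_k\|\le(i-1)\eta_k\bG$ is not valid pointwise when the $G_i$ differ (only $\eta_k\sum_{j<i}G_{\sigma_k^j}$ holds), though after summing over $i$ the claimed $O(n^2\eta_k^2\bG^2)$ error survives.
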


\begin{rem}
The values of $\eta$ in Theorem \ref{thm:lip-cvx-main} are set to
obtain the best dependence on parameters like $\bG$ and $D$. In
fact, Algorithm \ref{alg:shuffling-SGD} converges for all three stepsizes
with arbitrary $\eta>0$ but suffers a worse dependence on parameters.
We refer the reader to Theorem \ref{thm:lip-cvx-full} for the full
statement.
\end{rem}

We first focus on the general convex case. As shown in Theorem \ref{thm:lip-cvx-main},
regardless of how the permutation $\sigma_{k}$ is generated, the
last-iterate convergence rate is $\widetilde{\O}(\bG D/\sqrt{K})$
for stepsizes $\eta_{k}=\frac{\eta}{\sqrt{k}}$ or $\eta_{k}=\frac{\eta}{\sqrt{K}}$
with a theoretically fine-tuned $\eta$. The extra $\log$ factor
can be removed by taking the stepsize $\eta_{k}=\eta\frac{K-k+1}{K^{3/2}}$
that has a linear decay rate. This kind of stepsize was originally
introduced by \cite{zamani2023exact} to show the $\O(\bG D/\sqrt{K})$
last-iterate rate can be achieved for the projected subgradient descent
method, i.e., $n=1$ and $\psi(\bx)=\rmI_{\dom}(\bx)$. Here, we prove
it can also be applied to the proximal shuffling gradient method.

Theorem \ref{thm:lip-cvx-main} provides the first concrete theoretical
evidence that the last iterate of Algorithm \ref{alg:shuffling-SGD}
for any permutation is comparable with the averaged output by the
proximal IG method, which was known to be able to achieve the rate
$\O(GD/\sqrt{K})$ when $G_{i}\equiv G$ \cite{bertsekas2011incremental}.

Moreover, we compare the number of individual gradient evaluations
required to make the function value gap of the last iterate be at
most $\epsilon$ between Algorithm \ref{alg:shuffling-SGD}, GD and
SGD. 
\begin{itemize}
\item Algorithm \ref{alg:shuffling-SGD}: $\O\left(\frac{(\sum_{i=1}^{n}G_{i})^{2}D^{2}}{n\epsilon^{2}}\right)$
by Theorem \ref{thm:lip-cvx-main}.
\item GD: $\O\left(\frac{(\sum_{i=1}^{n}G_{i})^{2}D^{2}}{n\epsilon^{2}}\right)$
by prior works in Section \ref{sec:related work}.
\item SGD: $\O\left(\frac{(\sum_{i=1}^{n}G_{i}^{2})D^{2}}{n\epsilon^{2}}\right)$
by prior works in Section \ref{sec:related work}.
\end{itemize}
This indicates that the sample complexity of Algorithm \ref{alg:shuffling-SGD}
matches GD's but is worse than SGD's by at most a factor of $\O(n)$
due to $\sum_{i=1}^{n}G_{i}^{2}\leq(\sum_{i=1}^{n}G_{i})^{2}\leq n\sum_{i=1}^{n}G_{i}^{2}$. 
\begin{thm}
\label{thm:lip-ada-main}Under Assumptions \ref{assu:minimizer},
\ref{assu:basic} (with $\mu_{\psi}=0$) and \ref{assu:lip}, regardless
of how the permutation $\sigma_{k}$ is generated in every epoch,
taking the stepsize $\eta_{k}=\frac{r_{k}}{4\sqrt{3}n\bG\sqrt{k}(1+\log k)},\forall k\in\N$
where $r_{k}=r\lor\max_{\ell\in\left[k\right]}\left\Vert \bx_{\ell}-\bx_{1}\right\Vert $
for some $r>0$, then Algorithm \ref{alg:shuffling-SGD} guarantees
for large enough $K$
\[
F(\bx_{K+1})-F_{*}\leq\widetilde{\O}\left(\frac{\bG\left(D\lor r\right)}{\sqrt{K}}\right).
\]
\end{thm}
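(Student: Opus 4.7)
The plan is to reduce Theorem \ref{thm:lip-ada-main} to Theorem \ref{thm:lip-cvx-main} in two stages: (i) prove a priori that the iterates never wander far from the initialization, so that $r_k$ stays within a constant multiple of $M \triangleq D \lor r$; and then (ii) plug this uniform bound on $r_k$ into the last-iterate machinery already developed for Theorem \ref{thm:lip-cvx-main} with an $\O(1/(\sqrt{k}(1+\log k)))$ stepsize schedule.

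For stage (i), I would extract from the analysis of Theorem \ref{thm:lip-cvx-main} (or derive directly using Assumptions \ref{assu:basic} and \ref{assu:lip}, the Lipschitz bound $\|\nabla f_i\| \le G_i$, and the nonexpansiveness of the proximal map of $n\psi$) a one-epoch distance recursion of the form $\|\bx_{k+1}-\bx_*\|^2 \le \|\bx_k-\bx_*\|^2 + C_1 \eta_k^2 n^2 \bG^2$, obtained by throwing away the non-positive convex-gap term. Telescoping and substituting $\eta_k = r_k/(4\sqrt{3}n\bG\sqrt{k}(1+\log k))$ gives
\[
\|\bx_{k+1}-\bx_*\|^2 \le D^2 + \frac{C_1}{48}\sum_{j=1}^{k}\frac{r_j^2}{j(1+\log j)^2} \le D^2 + C_2\, r_k^2,
\]
where the crucial fact is that $\sum_{j\ge 1} 1/(j(1+\log j)^2)$ converges to an absolute constant (integral test, antiderivative $-1/(1+\log j)$). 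Combining with $\|\bx_{k+1}-\bx_1\| \le D + \|\bx_{k+1}-\bx_*\|$ yields an inductive estimate $r_{k+1} \le \max\{r_k,\; D + \sqrt{D^2 + C_2 r_k^2}\}$; using $r_1 = r \le M$ as base, one checks that a sufficiently large absolute constant $C_3$ makes $r_k \le C_3 M$ invariant for all $k$.

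For stage (ii), the uniform bound $r_k \le C_3 M$ means $\eta_k \le C_3 M/(4\sqrt{3}n\bG\sqrt{k}(1+\log k))$, i.e., the effective schedule is essentially the second stepsize $\eta_k = \eta/\sqrt{K}$ of Theorem \ref{thm:lip-cvx-main} with $\eta \asymp M/(n\bG)$, merely dampened by an extra $(1+\log k)$ factor. I would then mirror the Zamani--Glineur-style last-iterate telescoping argument used for Theorem \ref{thm:lip-cvx-main} (which constructs auxiliary reference points that couple the $\eta_k$-weighted gradient inequalities across epochs), now plugging in the adaptive stepsize as both an upper bound on the noise contributions $\sum_k \eta_k^2 n^2 \bG^2$ and, where necessary, a lower bound $\eta_k \ge \eta_1/\sqrt{k}$ on the descent contributions (valid because $r_k$ is non-decreasing). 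Summing produces a bound of order $\bG M(1+\log K)/\sqrt{K}$, which is exactly $\widetilde{\O}(\bG(D\lor r)/\sqrt{K})$. The ``large enough $K$'' hypothesis absorbs lower-order transient terms from the regime before $r_k$ saturates near $C_3 M$.

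The main obstacle is the circularity in stage (i): the stepsize $\eta_k$ depends on $r_k$, which in turn is shaped by all past $\eta_j$ via the iterates. The key ingredient that breaks this loop is precisely the summability of $\sum_j 1/(j(1+\log j)^2)$, which is why the $(1+\log k)$ factor is placed in the denominator of the stepsize; without it one would only get $\sum_j 1/j = \Theta(\log K)$ and the inductive bound on $r_k$ would diverge. A secondary subtlety is ensuring that the last-iterate argument from Theorem \ref{thm:lip-cvx-main} still goes through with an adaptive $\eta_k$ rather than a deterministic one; this is tractable because the weighted-sum construction only needs monotonicity of the $\eta_k$-dependent weights, and $\eta_k$ inherits monotonicity (up to the $\sqrt{k}(1+\log k)$ denominator) from the monotonicity of $r_k$.
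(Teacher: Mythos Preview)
Your stage (i) is essentially correct and matches the paper's Lemma \ref{lem:lip-ada-distance}: the summability of $\sum_{j\ge 1}1/(j(1+\log j)^{2})$ is exactly what makes the self-referential bound on $r_k$ close, yielding $r_k\le C_3(D\lor r)$ for all $k$.

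Stage (ii), however, has a genuine gap. Sandwiching $r_k\in[r,\,C_3M]$ and rerunning the argument of Theorem \ref{thm:lip-cvx-main} does \emph{not} yield a bound linear in $D$. In the last-iterate inequality (Lemma \ref{lem:lip-core} with $\mu_\psi=0$) the leading term is $\frac{\|\bx_*-\bx_1\|^2}{2n\sum_k\eta_k}$. For the denominator you only have the lower bound $\eta_k\ge \frac{r}{4\sqrt{3}n\bG\sqrt{k}(1+\log k)}$ (your stated ``$\eta_k\ge\eta_1/\sqrt{k}$'' is off by a $(1+\log k)$ factor, but more importantly the constant is $r$, not $M$). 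This gives a first term of order $\frac{D^2\bG(1+\log K)}{r\sqrt{K}}$, i.e.\ $\widetilde{\O}(\bG D^2/(r\sqrt{K}))$, which is worse than the claimed $\widetilde{\O}(\bG(D\lor r)/\sqrt{K})$ by a factor $D/r$ whenever $D\gg r$. Your identification ``$\eta\asymp M/(n\bG)$'' conflates the upper bound (which uses $r_k\le C_3M$) with the lower bound (which only uses $r_k\ge r$); these cannot both be $M$ unless $r\asymp D$, which is precisely the regime the adaptive stepsize is designed to avoid assuming.

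The paper closes this gap with an additional idea you are missing: it retains the negative term $-\|\bx_*-\bx_{K+1}\|^2$ in Lemma \ref{lem:lip-core} and bounds
\[
\|\bx_*-\bx_1\|^2-\|\bx_*-\bx_{K+1}\|^2\le 2\,\|\bx_1-\bx_{K+1}\|\,\|\bx_*-\bx_1\|\le 2r_{K+1}D,
\]
thereby trading one factor of $D$ for $r_{K+1}$. The denominator is then lower-bounded as $\sum_k r_k\widetilde\eta_k\ge\widetilde\eta_K\sum_k r_k$, so the leading term becomes $\frac{r_{K+1}}{(\sum_k r_k)/K}\cdot\frac{D}{nK\widetilde\eta_K}$. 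The ``large enough $K$'' hypothesis is used here, and not where you place it: since $r_k$ is nondecreasing and bounded it has a limit $\widetilde r$, hence the Ces\`aro ratio $r_{K+1}/\bigl((\sum_k r_k)/K\bigr)\to 1$, which yields the linear-in-$D$ asymptotic rate. Note that $r_k$ need not ``saturate near $C_3M$'' at all; the limit $\widetilde r$ can be anywhere in $[r,C_3M]$, so your reading of the large-$K$ clause as absorbing a transient is also off.
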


In Theorem \ref{thm:lip-cvx-main}, one may have noticed that it needs
the prior knowledge of $\bG$ and $D$ to obtain the optimal linear
dependence on them. In practice, estimating $\bG$ could be relatively
easy. In contrast, the requirement of knowing $D$ is however hard
to satisfy. Fortunately, several methods have been proposed to overcome
this difficulty. Here, we borrow the idea from \cite{pmlr-v202-ivgi23a}
that is to introduce the term $r_{k}=r\lor\max_{\ell\in\left[k\right]}\left\Vert \bx_{\ell}-\bx_{1}\right\Vert $
in the stepsize as shown in Theorem \ref{thm:lip-ada-main}. As a
result, we can provide an asymptotic last-iterate convergence rate
with a linear dependence on $D$ without knowing it. However, we are
not able to establish a finite-time rate that also has a linear dependence
on $D$, which is left as a future research direction.
\begin{thm}
\label{thm:lip-str-main}Under Assumptions \ref{assu:minimizer},
\ref{assu:basic} (with $\mu_{\psi}>0$) and \ref{assu:lip}, regardless
of how the permutation $\sigma_{k}$ is generated in every epoch,
taking the stepsize $\eta_{k}=\frac{2}{n\mu_{\psi}k},\forall k\in\left[K\right]$,
Algorithm \ref{alg:shuffling-SGD} guarantees
\[
F(\bx_{K+1})-F_{*}\leq\widetilde{\O}\left(\frac{\mu_{\psi}D^{2}}{K^{2}}+\frac{\bG^{2}}{\mu_{\psi}K}\right).
\]
\end{thm}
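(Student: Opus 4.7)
The plan is to adapt the last-iterate analysis technique of \cite{zamani2023exact,liu2024revisiting} (originally developed for stochastic projected/proximal subgradient methods) to the proximal shuffling gradient setting. The workhorse will be a one-epoch descent inequality of the form
\[
(1 + n\eta_{k}\mu_{\psi})\|\bx_{k+1}-\by\|^{2} \leq \|\bx_{k}-\by\|^{2} - 2n\eta_{k}(F(\bx_{k+1})-F(\by)) + Cn^{2}\eta_{k}^{2}\bG^{2},
\]
valid for any comparator $\by\in\mathrm{dom}\,\psi$ and some universal constant $C$. From this, the last-iterate rate is extracted by a careful interpolation-and-telescope argument.

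To derive the one-epoch inequality, I would first analyze the inner gradient loop. Using $\bx_{k}^{i+1}=\bx_{k}^{i}-\eta_{k}\nabla f_{\sigma_{k}^{i}}(\bx_{k}^{i})$ with convexity of each $f_{\sigma_{k}^{i}}$, and bounding the intra-epoch drift $\|\bx_{k}^{i}-\bx_{k}\|\leq\eta_{k}\sum_{j<i}G_{\sigma_{k}^{j}}$ via Lipschitzness, a Cauchy--Schwarz/Young argument produces
\[
\|\bx_{k}^{n+1}-\by\|^{2} \leq \|\bx_{k}-\by\|^{2} - 2n\eta_{k}(f(\bx_{k})-f(\by)) + C_{1}n^{2}\eta_{k}^{2}\bG^{2}.
\]
For the proximal step, the optimality condition $\bx_{k}^{n+1}-\bx_{k+1}=n\eta_{k}\zeta$ with $\zeta\in\partial\psi(\bx_{k+1})$ combined with the $\mu_{\psi}$-strong convexity of $\psi$ yields
\[
(1+n\eta_{k}\mu_{\psi})\|\bx_{k+1}-\by\|^{2} \leq \|\bx_{k}^{n+1}-\by\|^{2} - \|\bx_{k}^{n+1}-\bx_{k+1}\|^{2} - 2n\eta_{k}(\psi(\bx_{k+1})-\psi(\by)).
\]
Chaining the two and upgrading $f(\bx_{k})$ to $f(\bx_{k+1})$ via Lipschitzness, while absorbing the extra $\bG\|\bx_{k}-\bx_{k+1}\|$ penalty by Young's inequality into the unused $-\|\bx_{k}^{n+1}-\bx_{k+1}\|^{2}$ term and an additional $O(n^{2}\eta_{k}^{2}\bG^{2})$, yields the claimed one-epoch bound.

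With $\eta_{k}=2/(n\mu_{\psi}k)$ the distance coefficient on the left becomes $1+2/k=(k+2)/k$, so multiplying the inequality by $k(k+1)/2$ telescopes the distance terms via $c_{k}\triangleq k(k+1)/2$, and summing over $k\in[K]$ with comparator $\by=\bx_{*}$ would immediately deliver the \emph{average-iterate} rate $\O(\mu_{\psi}D^{2}/K^{2}+\bG^{2}/(\mu_{\psi}K))$. To upgrade this to a last-iterate bound, I would instead apply the one-epoch inequality at each step $k$ with an anchor point $\by_{k}$ on the segment between $\bx_{*}$ and $\bx_{K+1}$, designed so that (i) convexity of $F$ converts the weighted sum $\sum_{k}(k+1)(F(\bx_{k+1})-F(\by_{k}))$ into a lower bound $\beta(F(\bx_{K+1})-F_{*})$ up to residual distance terms, and (ii) the residual distance terms are folded back into the telescope. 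A natural construction, adapted from the SGD template in \cite{liu2024revisiting}, takes $\by_{k}=\bx_{K+1}+\rho_{k}(\bx_{*}-\bx_{K+1})$ with harmonic-like $\rho_{k}$.

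The main obstacle lies in executing this third step, i.e., calibrating the interpolation coefficients $\rho_{k}$ and their multipliers so that three targets are hit simultaneously: $\beta=\widetilde{\Omega}(K^{2})$; the cumulative error $\sum_{k}w_{k}\cdot O((k+1)\bG^{2}/(\mu_{\psi}^{2}k))$ stays $\widetilde{\O}(K\bG^{2}/\mu_{\psi}^{2})$; and the leading distance contribution is at most $O(\mu_{\psi}D^{2})$. Dividing through by $\beta$ then yields exactly $\widetilde{\O}(\mu_{\psi}D^{2}/K^{2}+\bG^{2}/(\mu_{\psi}K))$, where the logarithmic factor is intrinsic to the $1/k$ stepsize through the harmonic sum $\sum 1/k$ and therefore unavoidable by this approach. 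Steps 1--2 are otherwise standard proximal bookkeeping once the shuffling bias is tracked via the intra-epoch drift estimate.
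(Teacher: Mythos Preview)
Your one-epoch descent inequality is correct and coincides (up to constants) with what the paper obtains by combining its Lemma~\ref{lem:core} with Lemma~\ref{lem:lip-res}: for any $\bz\in\mathrm{dom}\,\psi$,
\[
F(\bx_{k+1})-F(\bz)\le \frac{\|\bz-\bx_k\|^2}{2n\eta_k}-\Big(\frac{1}{\eta_k}+n\mu_\psi\Big)\frac{\|\bz-\bx_{k+1}\|^2}{2n}+3\bG^2 n\eta_k.
\]
The gap is in your last-iterate extraction. Your anchor choice $\by_k=\bx_{K+1}+\rho_k(\bx_*-\bx_{K+1})$ lies on the segment $[\bx_*,\bx_{K+1}]$, but this is \emph{not} the construction in \cite{liu2024revisiting} (nor in \cite{zamani2023exact}), and it does not make either of your targets~(i) or~(ii) achievable. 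For~(i): with segment anchors, convexity gives $F(\bx_{k+1})-F(\by_k)\ge [F(\bx_{k+1})-F_*]-(1-\rho_k)[F(\bx_{K+1})-F_*]$, so the weighted sum still contains all the intermediate values $F(\bx_{k+1})-F_*$ for $k<K$; there is no cancellation mechanism, and simply dropping them by nonnegativity costs you the entire gain from the weights. For~(ii): the distance terms $\|\bx_k-\by_k\|^2$ involve $\|\bx_k-\bx_{K+1}\|$, which is not controlled by any telescope and cannot be ``folded back'' without an a~priori bound on the trajectory diameter that you do not have.

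The paper's route---and the actual template of \cite{zamani2023exact,liu2024revisiting}---uses comparators built from \emph{past iterates}: one applies the one-epoch inequality with $\bz=\bx_\ell$ for each $\ell\in[k]$ (weighted by $v_\ell-v_{\ell-1}$) and with $\bz=\bx_*$ (weighted by $v_0$), where $v_\ell=\gamma_K/\sum_{s\ge\ell}\gamma_s$ and $\gamma_k=\eta_k\prod_{\ell<k}(1+n\eta_\ell\mu_\psi)$. With this choice the coefficient of $F(\bx_k)-F_*$ for every $2\le k\le K$ vanishes exactly, the distance terms telescope in $k$ because $\gamma_k(\eta_k^{-1}+n\mu_\psi)=\gamma_{k+1}\eta_{k+1}^{-1}$, and one obtains directly
\[
F(\bx_{K+1})-F_*\le \frac{D^2}{2n\sum_k\gamma_k}+3\bG^2 n\sum_k\frac{\gamma_k\eta_k}{\sum_{\ell\ge k}\gamma_\ell}.
\]
Plugging $\eta_k=\tfrac{2}{n\mu_\psi k}$ gives $\gamma_k=\tfrac{k+1}{n\mu_\psi}$, $\sum_k\gamma_k=\Theta(K^2/(n\mu_\psi))$, and the second sum is $\O\big(\tfrac{\bG^2}{\mu_\psi K}\sum_k\tfrac{1}{k}\big)=\widetilde\O(\bG^2/(\mu_\psi K))$, exactly the claim. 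If you replace your segment anchors by this past-iterate weighting, the rest of your plan goes through without further obstacles.
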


Lastly, we consider the case of strongly convex $\psi(\bx)$. As stated
in Theorem \ref{thm:lip-str-main}, Algorithm \ref{alg:shuffling-SGD}
guarantees the rate of $\widetilde{\O}(1/K)$ w.r.t. the function
value gap for the stepsize $\eta_{k}=\frac{2}{n\mu_{\psi}k}$. In
contrast, the same rate was previously only known to hold for $\left\Vert \bx_{K+1}-\bx^{*}\right\Vert ^{2}$
in IG \cite{nedic2001convergence}. Theorem \ref{thm:lip-str-full}
in the appendix will generalize to $\eta_{k}=\frac{m}{n\mu_{\psi}k}$
for any $m\in\N$ and prove the rate $\widetilde{O}(m/K)$.

By a similar comparison (after Theorem \ref{thm:lip-cvx-main}), the
sample complexity of Algorithm \ref{alg:shuffling-SGD} in the case
$\mu_{\psi}>0$ is as good as GD's but still worse than SGD's by at
most an $\O(n)$ term.

\section{Proof Idea\label{sec:idea}}

In this section, we outline some key steps in the analysis of Theorem
\ref{thm:smooth-cvx-main} and highlight our novel techniques.

From a high-level view, our proof is inspired by the recent progress
on the last-iterate convergence of GD \cite{zamani2023exact}. \cite{zamani2023exact}
designs an auxiliary sequence $\bz_{k}$ for $k\in\left[K\right]$,
each of which is a convex combination of $\bx_{*},\bx_{1},\cdots,\bx_{k}$
(say $\bz_{k}=w_{k,0}\bx_{*}+\sum_{\ell=1}^{k}w_{k,\ell}\bx_{\ell}$
where $w_{k,\ell}\geq0$ and $\sum_{\ell=0}^{k}w_{k,\ell}=1$), and
then bounds $F(\bx_{k+1})-F(\bz_{k})$ instead of $F(\bx_{k+1})-F(\bx_{*})$.
By using $F(\bz_{k})\leq w_{k,0}F(\bx_{*})+\sum_{\ell=1}^{k}w_{k,\ell}\bx_{\ell}$,
one can finally prove $\sum_{k=1}^{K}p_{k}(F(\bx_{k+1})-F(\bz_{k}))\geq\Omega(F(\bx_{K+1})-F(\bx_{*}))$
for properly picked $w_{k,\ell}$ where $p_{k}$ is another carefully
chosen sequence and obtain the rate for the last iterate.

However, to make the above argument work, it indeed requires $F(\bx_{k+1})-F(\bz_{k})\leq\text{good terms}$,
where \textquotedbl good terms\textquotedbl{} means some quantities
we can finally bound (e.g., a telescoping sum). But in shuffling gradient
methods, a key difference appears: there is only (see Lemmas \ref{lem:core}
and \ref{lem:smooth-res})
\[
F(\bx_{k+1})-F(\bz_{k})\leq\text{good terms}+\O(B_{f}(\bz_{k},\bx_{*})+R_{k}),
\]
where $R_{k}$ is a \textquotedbl bad\textquotedbl{} residual due
to shuffling. Fortunately, $R_{k}$ can always be bounded (Lemma \ref{lem:var})
and thus can be included in \textquotedbl good terms\textquotedbl .
In contrast, bounding another new extra term $B_{f}(\bz_{k},\bx_{*})$
is more tricky, which is significantly distinct from the previous
works. Though the existence of $B_{f}(\bz_{k},\bx_{*})$, we still
sum from $k=1$ to $K$ to obtain
\begin{equation}
F(\bx_{K+1})-F(\bx_{*})\leq\text{good terms}+\O\left(\sum_{k=1}^{K}B_{f}(\bz_{k},\bx_{*})\right).\label{eq:1}
\end{equation}
The first key observation in the analysis is that $B_{f}(\bz_{k},\bx_{*})=\O(\sum_{\ell=1}^{k}B_{f}(\bx_{\ell},\bx_{*}))$
because $\bz_{k}$ is a convex combination of $\bx_{*},\bx_{1},\cdots,\bx_{k}$
and $B_{f}(\cdot,\cdot)$ is convex in the first argument. The second
important notice is that $F(\bx_{K+1})-F(\bx_{*})\geq B_{f}(\bx_{K+1},\bx_{*})$
always holds for whatever $\psi(\bx)$ is (see the proof of Theorem
\ref{thm:smooth-cvx-full} for details). Therefore, we have
\[
B_{f}(\bx_{K+1},\bx_{*})\leq\text{good terms}+\O\left(\sum_{k=1}^{K}B_{f}(\bx_{k},\bx_{*})\right).
\]
But this is still not enough to bound $F(\bx_{K+1})-F(\bx_{*})$.
To deal with this issue, departing from the previous works that only
prove one inequality for time $K$, we instead apply the above procedure
for every $k\in\left[K\right]$ to get
\[
B_{f}(\bx_{k+1},\bx_{*})\leq\text{good terms}+\O\left(\sum_{\ell=1}^{k}B_{f}(\bx_{\ell},\bx_{*})\right).
\]
We remark that this new way requires us to define the auxiliary sequence
$\left\{ \bz_{\ell},\forall\ell\in\left[k\right]\right\} $ carefully.
Specifically, for each $k\in\left[K\right]$, we let the auxiliary
sequence $\left\{ \bz_{\ell},\forall\ell\in\left[k\right]\right\} $
depend on the current time $k$. Hence, we indeed construct a total
of $K$ different auxiliary sequences. Next, we develop a new algebraic
inequality (Lemma \ref{lem:ineq}) to recursively bound all $B_{f}(\bx_{k},\bx_{*})$.
Equipped with the bound on $B_{f}(\bx_{k},\bx_{*})$, we finally invoke
(\ref{eq:1}) again to get the desired result. The reader could refer
to the appendix for detailed proof.

\section{Limitation\label{sec:conclusion}}

Here we list some limitations in our work and look forward to them
being addressed in the future. For smooth optimization, the extra
factor $1+\log K$ in the stepsizes seems necessary in our analysis.
Whether, and if so how, it can be removed is an important problem.
In addition, our results are limited to constant stepsizes depending
on the number of epochs $K$. Finding time-varying stepsizes that
still achieve the optimal dependence on both $n,K$ and other problem-dependent
parameters is another interesting task. For non-smooth optimization,
i.e., Lipschitz components, our theorems are stated for any kind of
permutation and can only match the sample complexity of GD but be
worse than SGD's. Hence, it is worth investigating whether shuffling
gradient methods can benefit from random permutations and be as good
as SGD under certain types of shuffling, for example, RR and SO.

\section*{Acknowledgements}

This work is supported by the NSF grant CCF-2106508. Zhengyuan Zhou
also gratefully acknowledges the 2024 NYU CGEB faculty grant.

\section*{Impact Statement}

This paper presents a theory work showing the last-iterate convergence
of shuffling gradient methods. There are no ethical impacts and expected
societal implications that we feel must be specifically highlighted
here.

\bibliographystyle{icml2024}
\bibliography{ref}

\appendix
\onecolumn

\section{Detailed Comparison\label{sec:detailed-table}}

We provide a more detailed comparison for smooth components in Table
\ref{tab:smooth-detailed}, where the case of strongly convex $\psi(\bx)$
and the previous fastest upper bounds for all cases are included.

\noindent\begin{minipage}[t]{1\columnwidth}%
\vspace{-0.25in}
\begin{table}[H]
\caption{\label{tab:smooth-detailed}Summary of our new upper bounds and the
previous best upper and lower bounds for $L$-smooth $f_{i}(\protect\bx)$
for large $K$. Here, $\protect\asigma\triangleq\frac{1}{n}\sum_{i=1}^{n}\left\Vert \nabla f_{i}(\protect\bx_{*})\right\Vert ^{2}$,
$\protect\rsigma\triangleq\protect\asigma+n\left\Vert \nabla f(\protect\bx_{*})\right\Vert ^{2}$
and $D\triangleq\left\Vert \protect\bx_{*}-\protect\bx_{1}\right\Vert $.
All rates use the function value gap as the convergence criterion.
In the column of \textquotedbl Type\textquotedbl , \textquotedbl Any\textquotedbl{}
means the rate holds for whatever permutation not limited to RR/SO/IG.
\textquotedbl Random\textquotedbl{} refers to the uniformly sampled
permutation but is not restricted to RR/SO (see Remark \ref{rem:sample}
for a detailed explanation). \textquotedbl Avg\textquotedbl , \textquotedbl Last\textquotedbl{}
and \textquotedbl Tail\textquotedbl{} in the \textquotedbl Output\textquotedbl{}
column stand for $\protect\bx_{K+1}^{\mathrm{avg}}\triangleq\frac{1}{K}\sum_{k=1}^{K}\protect\bx_{k+1}$,
$\protect\bx_{K+1}$ and $\protect\bx_{K+1}^{\mathrm{tail}}\triangleq\frac{1}{\left\lfloor \frac{K}{2}\right\rfloor +1}\sum_{k=\left\lceil \frac{K}{2}\right\rceil }^{K}\protect\bx_{k+1}$,
respectively. In the last column, \textquotedbl\ding{51}\textquotedbl{}
means $\psi(\protect\bx)$ can be taken arbitrarily and \textquotedbl\ding{55}\textquotedbl{}
implies $\psi(\protect\bx)=0$.}
\vspace{0.1in}

\centering{}%
\begin{tabular}{|>{\centering}m{0.135\textwidth}|c|c|c|c|c|}
\hline 
\multicolumn{6}{|c|}{$F(\bx)=f(\bx)+\psi(\bx)$ where $f(\bx)=\frac{1}{n}\sum_{i=1}^{n}f_{i}(\bx)$
and $f_{i}(\bx)$ and $\psi(\bx)$ are convex}\tabularnewline
\hline 
Settings & References & Rate & Type & Output & $\psi(\bx)$\tabularnewline
\hline 
 & \cite{mishchenko2020random} & $\O\left(\frac{L^{1/3}\sigma_{\mathrm{any}}^{2/3}D^{4/3}}{K^{2/3}}\right)$ & IG & Avg & \ding{55}\tabularnewline
 & \textbf{Ours} (Theorem \ref{thm:smooth-cvx-main}) & $\widetilde{\O}\left(\frac{L^{1/3}\sigma_{\mathrm{any}}^{2/3}D^{4/3}}{K^{2/3}}\right)$ & Any & Last & \ding{51}\tabularnewline
\cline{2-6} \cline{3-6} \cline{4-6} \cline{5-6} \cline{6-6} 
$L$-smooth $f_{i}(\bx)$ & \cite{mishchenko2020random,nguyen2021unified} & $\O\left(\frac{L^{1/3}\sigma_{\mathrm{any}}^{2/3}D^{4/3}}{n^{1/3}K^{2/3}}\right)$ & RR/SO & Avg & \ding{55}\tabularnewline
 & \textbf{Ours} (Theorem \ref{thm:smooth-cvx-main}) & $\widetilde{\O}\left(\frac{L^{1/3}\sigma_{\mathrm{rand}}^{2/3}D^{4/3}}{n^{1/3}K^{2/3}}\right)$\footnote{Note that when $\psi(\bx)=0$, there is $\rsigma=\asigma+n\left\Vert \nabla f(\bx_{*})\right\Vert ^{2}=\asigma$
due to $\nabla f(\bx_{*})=\mathbf{0}$.}\saveFN\sfnapp\ & Random & Last & \ding{51}\tabularnewline
 & \cite{pmlr-v202-cha23a} & $\Omega\left(\frac{L^{1/3}\sigma_{\mathrm{any}}^{2/3}D^{4/3}}{n^{1/3}K^{2/3}}\right)$ & RR & Last & \ding{55}\tabularnewline
\hline 
 & \cite{mishchenko2020random,nguyen2021unified} & $\widetilde{\O}\left(\frac{L^{2}\asigma}{\mu^{3}K^{2}}\right)$ & IG & Last & \ding{55}\tabularnewline
 & \textbf{Ours} (Theorem \ref{thm:smooth-str-main}) & $\widetilde{\O}\left(\frac{L\asigma}{\mu^{2}K^{2}}\right)$ & Any & Last & \ding{51}\tabularnewline
$L$-smooth $f_{i}(\bx)$, & \cite{safran2020good} & $\Omega\left(\frac{G^{2}}{\mu K^{2}}\right)$\footnote{This lower bound is established under $L=\mu$ and additionally requires
$\left\Vert \nabla f_{i}(\bx)\right\Vert \leq G$. Under the same
condition, our above upper bound $\widetilde{\O}\left(\frac{L\asigma}{\mu^{2}K^{2}}\right)$
will be $\widetilde{\O}\left(\frac{G^{2}}{\mu K^{2}}\right)$ and
hence almost matches this lower bound.} & IG & Last & \ding{55}\tabularnewline
\cline{2-6} \cline{3-6} \cline{4-6} \cline{5-6} \cline{6-6} 
$\mu$-strongly convex $f(\bx)$ & \cite{pmlr-v202-cha23a} & $\widetilde{\O}\left(\frac{L\asigma}{\mu^{2}nK^{2}}\right)$ & RR & Tail & \ding{55}\tabularnewline
 & \textbf{Ours} (Theorem \ref{thm:smooth-str-main}) & $\widetilde{\O}\left(\frac{L\rsigma}{\mu^{2}nK^{2}}\right)$\useFN\sfnapp\ & Random & Last & \ding{51}\tabularnewline
 & \cite{safran2021random,pmlr-v202-cha23a} & $\Omega\left(\frac{L\asigma}{\mu^{2}nK^{2}}\right)$ & RR/SO & Last & \ding{55}\tabularnewline
\hline 
 & \textbf{Ours} (Theorem \ref{thm:smooth-str-full}) & $\widetilde{\O}\left(\frac{L\asigma}{\mu^{2}K^{2}}\right)$ & Any & Last & \ding{51}\tabularnewline
\cline{2-6} \cline{3-6} \cline{4-6} \cline{5-6} \cline{6-6} 
$L$-smooth $f_{i}(\bx)$,\\
$\mu$-strongly convex $\psi(\bx)$ & \cite{mishchenko2022proximal} & $\widetilde{\O}\left(\frac{L^{2}\rsigma}{\mu^{3}nK^{2}}\right)$\footnote{The original rate $\widetilde{\O}\left(\frac{L\rsigma}{\mu^{3}nK^{2}}\right)$
is only proved for $\left\Vert \bx_{K+1}-\bx_{*}\right\Vert ^{2}$
and hence cannot measure the function value gap when a general $\psi(\bx)$
exists as explained in Section \ref{sec:intro}. However, for the
convenience of comparison, we still consider the traditional conversion
$F(\bx_{K+1})-F(\bx_{*})=\O(L\left\Vert \bx_{K+1}-\bx_{*}\right\Vert ^{2})$
here, though it does not hold in general.}\saveFN\sfnappa\  & RR/SO & Last\useFN\sfnappa\  & \ding{51}\tabularnewline
 & \textbf{Ours} (Theorem \ref{thm:smooth-str-full}) & $\widetilde{\O}\left(\frac{L\rsigma}{\mu^{2}nK^{2}}\right)$ & Random & Last & \ding{51}\tabularnewline
\hline 
\end{tabular}
\end{table}
\end{minipage}

\section{Important Notations}

We summarize important notations used in the analysis as follows for
the sake of readability.
\begin{itemize}
\item Objective function: $F(x)\triangleq f(\bx)+\psi(\bx)$ where $f(\bx)\triangleq\frac{1}{n}\sum_{i=1}^{n}f_{i}(\bx)$.
\item Number of epochs: $K\geq2$.
\item Initial distance: $D\triangleq\left\Vert \bx_{*}-\bx_{1}\right\Vert $.
\item Optimal function value: $F_{*}\triangleq F(\bx_{*})$.
\item The averaged smooth parameter: $\bL\triangleq\frac{1}{n}\sum_{i=1}^{n}L_{i}$.
\item The largest smooth parameter: $L^{*}\triangleq\max_{i\in\left[n\right]}L_{i}$.
\item Two quantities measuring the uncertainty: $\asigma\triangleq\frac{1}{n}\sum_{i=1}^{n}\left\Vert \nabla f_{i}(\bx_{*})\right\Vert ^{2}$
and $\rsigma\triangleq\sigma_{\mathrm{any}}^{2}+n\left\Vert \nabla f(\bx_{*})\right\Vert ^{2}$.
\item The averaged Lipschitz parameter: $\bG\triangleq\frac{1}{n}\sum_{i=1}^{n}G_{i}$.
\item The Bregman divergence induced by $f_{\sigma_{k}^{i}}(\bx)$: $B_{\sigma_{k}^{i}}(\cdot,\cdot)\triangleq B_{f_{\sigma_{k}^{i}}}(\cdot,\cdot)$.
\end{itemize}

\section{Full Theorems and Proofs\label{sec:full-thms}}

In this section, we provide full statements of all theorems with proofs.
Lemmas used in the proof can be found in Sections \ref{sec:lemmas}
and \ref{sec:tech}.

\subsection{Smooth Functions}

All results presented in this subsection are for smooth components
$f_{i}(\bx)$.

First, we consider the general convex case, i.e., $\mu_{f}=\mu_{\psi}=0$.
The key tools in the proof are Lemmas \ref{lem:smooth-core} and \ref{lem:ineq}.
\begin{thm}
\label{thm:smooth-cvx-full}(Full version of Theorem \ref{thm:smooth-cvx-main})
Under Assumptions \ref{assu:minimizer}, \ref{assu:basic} (with $\mu_{\psi}=0$)
and \ref{assu:smooth}, taking the stepsize $\eta_{k}=\eta\leq\frac{1}{4n\sqrt{2\bL L^{*}(1+\log K)}},\forall k\in\left[K\right]$:
\begin{itemize}
\item Regardless of how the permutation $\sigma_{k}$ is generated in every
epoch, Algorithm \ref{alg:shuffling-SGD} guarantees
\[
F(\bx_{K+1})-F_{*}\leq\O\left(\frac{D^{2}}{n\eta K}+(n\eta)^{2}\bL\asigma(1+\log K)\right).
\]
Setting $\eta=\frac{1}{4n\sqrt{2\bL L^{*}(1+\log K)}}\land\frac{D^{2/3}}{n\sqrt[3]{\bL\asigma K(1+\log K)}}$
to get
\[
F(\bx_{K+1})-F_{*}\leq\O\left(\frac{D^{2}\sqrt{\bL L^{*}(1+\log K)}}{K}+\frac{D^{4/3}\sqrt[3]{\bL\asigma(1+\log K)}}{K^{2/3}}\right).
\]
\item Suppose the permutation $\sigma_{k}$ is uniformly sampled without
replacement, Algorithm \ref{alg:shuffling-SGD} guarantees
\[
\E\left[F(\bx_{K+1})-F_{*}\right]\leq\O\left(\frac{D^{2}}{n\eta K}+n\eta^{2}\bL\rsigma(1+\log K)\right).
\]
Setting $\eta=\frac{1}{4n\sqrt{2\bL L^{*}(1+\log K)}}\land\frac{D^{2/3}}{\sqrt[3]{n^{2}\bL\rsigma K(1+\log K)}}$
to get
\[
\E\left[F(\bx_{K+1})-F_{*}\right]\leq\O\left(\frac{D^{2}\sqrt{\bL L^{*}(1+\log K)}}{K}+\frac{D^{4/3}\sqrt[3]{\bL\rsigma(1+\log K)}}{n^{1/3}K^{2/3}}\right).
\]
\end{itemize}
\end{thm}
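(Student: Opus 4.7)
The plan is to follow the auxiliary-sequence framework sketched in Section \ref{sec:idea}, adapting the last-iterate technique of \cite{zamani2023exact,liu2024revisiting} to the biased-gradient setting produced by shuffling. For each target epoch $k\in[K]$ I would pick nonnegative weights $\{w_{k,\ell}\}_{\ell=0}^{k}$ summing to $1$ and set $\bz_{k}\triangleq w_{k,0}\bx_{*}+\sum_{\ell=1}^{k}w_{k,\ell}\bx_{\ell}$. The weights, together with a companion summability sequence $p_{\ell}$, are engineered so that after writing $F(\bz_{\ell})\leq w_{\ell,0}F(\bx_{*})+\sum_{j}w_{\ell,j}F(\bx_{j})$ and summing $p_{\ell}(F(\bx_{\ell+1})-F(\bz_{\ell}))$ from $\ell=1$ to $k$, the left-hand side collapses by Abel summation to a positive multiple of $F(\bx_{k+1})-F(\bx_{*})$.

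The single-step inequality is obtained from Lemmas \ref{lem:core} and \ref{lem:smooth-res} applied to one epoch of the proximal shuffling step, producing a telescoping quadratic $\frac{\|\bx_{\ell}-\bz_{\ell}\|^{2}-\|\bx_{\ell+1}-\bz_{\ell}\|^{2}}{2n\eta}$, a shuffling residual $R_{\ell}$ controllable by Lemma \ref{lem:var}, and, crucially, an extra term proportional to $B_{f}(\bz_{\ell},\bx_{*})$ that has no counterpart in the usual GD or SGD analyses. In the arbitrary-permutation case, Lemma \ref{lem:var} yields $\sum_{\ell}p_{\ell}R_{\ell}=\O((n\eta)^{2}\bL\asigma(1+\log K))$; in the uniformly-random case the variance reduction of sampling without replacement instead gives the expected bound $\O(n\eta^{2}\bL\rsigma(1+\log K))$, with the additional $n\|\nabla f(\bx_{*})\|^{2}$ in $\rsigma$ arising from the proximal step exactly as in \cite{mishchenko2022proximal}.

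The principal obstacle is the leftover sum $\sum_{\ell}p_{\ell}B_{f}(\bz_{\ell},\bx_{*})$. By convexity of $B_{f}(\cdot,\bx_{*})$ in its first argument, $B_{f}(\bz_{\ell},\bx_{*})\leq\sum_{j=1}^{\ell}w_{\ell,j}B_{f}(\bx_{j},\bx_{*})$; since $-\nabla f(\bx_{*})\in\partial\psi(\bx_{*})$ by first-order optimality on $F$, we also have the crucial strengthening $F(\bx_{\ell+1})-F(\bx_{*})\geq B_{f}(\bx_{\ell+1},\bx_{*})$ for any convex $\psi$. Running the telescoping argument not only at the terminal epoch but at \emph{every} intermediate $k$ then delivers a Gronwall-type system
\[
B_{f}(\bx_{k+1},\bx_{*})\leq U_{k}+\kappa\sum_{\ell=1}^{k}B_{f}(\bx_{\ell},\bx_{*}),\qquad k=1,\dots,K,
\]
with $\kappa=\O(n^{2}\eta^{2}\bL L^{*})$ and $U_{k}$ collecting the good telescoping and residual contributions. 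The new algebraic Lemma \ref{lem:ineq} unwinds this recursion, and the stepsize restriction $\eta\leq\frac{1}{4n\sqrt{2\bL L^{*}(1+\log K)}}$ is precisely what keeps the amplification factor $\kappa(1+\log K)$ bounded by a constant; the logarithmic overhead in the stepsize is the price paid for this self-bounding argument and seems hard to avoid.

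Feeding the resulting control on $\sum_{k}B_{f}(\bx_{k},\bx_{*})$ back into the weighted telescoping estimate yields, after simplification, $F(\bx_{K+1})-F_{*}\leq\O(D^{2}/(n\eta K)+(n\eta)^{2}\bL\asigma(1+\log K))$ in the arbitrary-permutation case, and the analogous expectation bound with $n\eta^{2}\bL\rsigma(1+\log K)$ replacing the second term in the uniformly-random case (where the improvement comes entirely from the sharper expected bound on $R_{\ell}$). Choosing $\eta$ to balance the two terms, as prescribed in the theorem statement, delivers the claimed $\widetilde{\O}(K^{-2/3})$ rates.
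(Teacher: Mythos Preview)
Your proposal is correct and follows essentially the same route as the paper: the paper's proof applies Lemma \ref{lem:smooth-core} (which packages Lemmas \ref{lem:core} and \ref{lem:smooth-res} via the $k$-dependent auxiliary sequences you describe) at every intermediate $k$, uses $F(\bx_{k+1})-F(\bx_{*})\geq B_{f}(\bx_{k+1},\bx_{*})$ to obtain a self-referential recursion, and then closes it with Lemma \ref{lem:ineq} under the stated stepsize bound. One small point of precision: the recursion actually carries harmonic weights $\frac{1}{k-\ell+2}$ rather than uniform weights (so it is not a plain Gronwall sum), and the amplification constant is $8(n\eta\bL)^{2}$ rather than $n^{2}\eta^{2}\bL L^{*}$; the $L^{*}$ enters only through the stepsize restriction that forces $16(n\eta\bL)^{2}(1+\log K)\leq\bL/(2L^{*})\leq 1/2$, which is exactly what makes the geometric series in Lemma \ref{lem:ineq} sum to at most $2$.
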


\begin{proof}
Note that $\eta_{k}=\eta\leq\frac{1}{4n\sqrt{2\bL L^{*}(1+\log K)}}\leq\frac{1}{2n\sqrt{\bL L^{*}}}$
satisfies the requirement of Lemma \ref{lem:smooth-core}. Hence,
for any $k\in\left[K\right]$
\begin{align}
F(\bx_{k+1})-F(\bx_{*}) & \leq\frac{\left\Vert \bx_{*}-\bx_{1}\right\Vert ^{2}}{2n\sum_{\ell=1}^{k}\eta_{\ell}}+\sum_{\ell=1}^{k}\frac{4\eta_{\ell}^{3}R_{\ell}}{\sum_{s=\ell}^{k}\eta_{s}}+\sum_{\ell=2}^{k}\frac{8n^{2}\bL^{2}\eta_{\ell-1}\left(\sum_{s=\ell}^{k}\eta_{s}^{3}\right)}{\left(\sum_{s=\ell}^{k}\eta_{s}\right)\left(\sum_{s=\ell-1}^{k}\eta_{s}\right)}B_{f}(\bx_{\ell},\bx_{*})\nonumber \\
 & =\frac{\left\Vert \bx_{*}-\bx_{1}\right\Vert ^{2}}{2n\eta k}+\sum_{\ell=1}^{k}\frac{4\eta^{2}R_{\ell}}{k-\ell+1}+8(n\eta\bL)^{2}\sum_{\ell=2}^{k}\frac{B_{f}(\bx_{\ell},\bx_{*})}{k-\ell+2},\label{eq:smooth-cvx-full-1}
\end{align}
where $R_{\ell}=\sum_{i=2}^{n}\frac{L_{\sigma_{\ell}^{i}}}{n}\left\Vert \sum_{j=1}^{i-1}\nabla f_{\sigma_{\ell}^{j}}(\bx_{*})\right\Vert ^{2}$.
The definition of $\bx_{*}\in\argmin_{\bx\in\R^{d}}F(\bx)$ implies
$\exists\nabla\psi(\bx_{*})\in\partial\psi(\bx_{*})$ such that $\nabla f(\bx_{*})+\nabla\psi(\bx_{*})=\mathbf{0}$.
Thus, for any $k\in\left[K\right]$
\begin{align*}
F(\bx_{k+1})-F(\bx_{*}) & =F(\bx_{k+1})-F(\bx_{*})-\langle\nabla f(\bx_{*})+\nabla\psi(\bx_{*}),\bx_{k+1}-\bx_{*}\rangle\\
 & =B_{f}(\bx_{k+1},\bx_{*})+B_{\psi}(\bx_{k+1},\bx_{*})\geq B_{f}(\bx_{k+1},\bx_{*}),
\end{align*}
which implies
\begin{equation}
B_{f}(\bx_{k+1},\bx_{*})\leq\frac{\left\Vert \bx_{*}-\bx_{1}\right\Vert ^{2}}{2n\eta k}+\sum_{\ell=1}^{k}\frac{4\eta^{2}R_{\ell}}{k-\ell+1}+8(n\eta\bL)^{2}\sum_{\ell=2}^{k}\frac{B_{f}(\bx_{\ell},\bx_{*})}{k-\ell+2},\forall k\in\left[K\right].\label{eq:smooth-cvx-full-B}
\end{equation}
Besides, by taking $k=K$ in (\ref{eq:smooth-cvx-full-1}), we obtain
\begin{equation}
F(\bx_{K+1})-F(\bx_{*})\leq\frac{\left\Vert \bx_{*}-\bx_{1}\right\Vert ^{2}}{2n\eta K}+\sum_{\ell=1}^{K}\frac{4\eta^{2}R_{\ell}}{K-\ell+1}+8(n\eta\bL)^{2}\sum_{\ell=2}^{K}\frac{B_{f}(\bx_{\ell},\bx_{*})}{K-\ell+2}.\label{eq:smooth-cvx-full-F}
\end{equation}

First, by Lemma \ref{lem:var}, for any permutation $\sigma_{\ell}$
used in the $\ell$-th epoch we have
\begin{equation}
R_{\ell}\leq n^{2}\bL\asigma,\forall\ell\in\left[K\right].\label{eq:smooth-cvx-full-var-a}
\end{equation}
Hence, for any $k\in\left[K\right]$
\begin{align*}
B_{f}(\bx_{k+1},\bx_{*}) & \overset{\eqref{eq:smooth-cvx-full-B},\eqref{eq:smooth-cvx-full-var-a}}{\leq}\frac{\left\Vert \bx_{*}-\bx_{1}\right\Vert ^{2}}{2n\eta k}+\sum_{\ell=1}^{k}\frac{4\eta^{2}n^{2}\bL\asigma}{k-\ell+1}+8(n\eta\bL)^{2}\sum_{\ell=2}^{k}\frac{B_{f}(\bx_{\ell},\bx_{*})}{k-\ell+2}\\
 & \leq\frac{\left\Vert \bx_{*}-\bx_{1}\right\Vert ^{2}}{2n\eta k}+4(n\eta)^{2}\bL\asigma(1+\log k)+8(n\eta\bL)^{2}\sum_{\ell=2}^{k}\frac{B_{f}(\bx_{\ell},\bx_{*})}{k-\ell+2}.
\end{align*}
In addition,
\begin{align*}
F(\bx_{K+1})-F(\bx_{*}) & \overset{\eqref{eq:smooth-cvx-full-F},\eqref{eq:smooth-cvx-full-var-a}}{\leq}\frac{\left\Vert \bx_{*}-\bx_{1}\right\Vert ^{2}}{2n\eta K}+\sum_{\ell=1}^{K}\frac{4\eta^{2}n^{2}\bL\asigma}{K-\ell+1}+8(n\eta\bL)^{2}\sum_{\ell=2}^{K}\frac{B_{f}(\bx_{\ell},\bx_{*})}{K-\ell+2}\\
 & \leq\frac{\left\Vert \bx_{*}-\bx_{1}\right\Vert ^{2}}{2n\eta K}+4(n\eta)^{2}\bL\asigma(1+\log K)+8(n\eta\bL)^{2}\sum_{\ell=2}^{K}\frac{B_{f}(\bx_{\ell},\bx_{*})}{K-\ell+2}.
\end{align*}
We apply Lemma \ref{lem:ineq} with $d_{k+1}=\begin{cases}
B_{f}(\bx_{k+1},\bx_{*}) & k\in\left[K-1\right]\\
F(\bx_{K+1})-F(\bx_{*}) & k=K
\end{cases},a=\frac{\left\Vert \bx_{*}-\bx_{1}\right\Vert ^{2}}{2n\eta},b=4(n\eta)^{2}\bL\asigma,c=8(n\eta\bL)^{2}$ to obtain
\[
B_{f}(\bx_{k+1},\bx_{*})\leq\left(\frac{\left\Vert \bx_{*}-\bx_{1}\right\Vert ^{2}}{2n\eta k}+4(n\eta)^{2}\bL\asigma(1+\log k)\right)\sum_{i=0}^{k-1}\left(16(n\eta\bL)^{2}(1+\log k)\right)^{i},\forall k\in\left[K-1\right],
\]
and
\[
F(\bx_{K+1})-F(\bx_{*})\leq\left(\frac{\left\Vert \bx_{*}-\bx_{1}\right\Vert ^{2}}{2n\eta K}+4(n\eta)^{2}\bL\asigma(1+\log K)\right)\sum_{i=0}^{K-1}\left(16(n\eta\bL)^{2}(1+\log K)\right)^{i}.
\]
By $\eta_{k}=\eta\leq\frac{1}{4n\sqrt{2\bL L^{*}(1+\log K)}}$, we
have for any $k\in\left[K\right]$,
\[
\sum_{i=0}^{k-1}\left(16(n\eta\bL)^{2}(1+\log k)\right)^{i}\leq\sum_{i=0}^{k-1}\left(\frac{\bL(1+\log k)}{2L^{*}(1+\log K)}\right)^{i}\leq\sum_{i=0}^{\infty}\frac{1}{2^{i}}=2.
\]
So there is
\begin{align*}
B_{f}(\bx_{k+1},\bx_{*}) & \leq\frac{\left\Vert \bx_{*}-\bx_{1}\right\Vert ^{2}}{n\eta k}+8(n\eta)^{2}\bL\asigma(1+\log k),\forall k\in\left[K-1\right],
\end{align*}
and
\begin{align*}
F(\bx_{K+1})-F(\bx_{*}) & \leq\frac{\left\Vert \bx_{*}-\bx_{1}\right\Vert ^{2}}{n\eta K}+8(n\eta)^{2}\bL\asigma(1+\log K)\\
 & =\O\left(\frac{\left\Vert \bx_{*}-\bx_{1}\right\Vert ^{2}}{n\eta K}+(n\eta)^{2}\bL\asigma(1+\log K)\right).
\end{align*}
Finally, taking $\eta=\frac{1}{4n\sqrt{2\bL L^{*}(1+\log K)}}\land\frac{\left\Vert \bx_{*}-\bx_{1}\right\Vert ^{2/3}}{n\sqrt[3]{\bL\asigma K(1+\log K)}}$
to obtain
\[
F(\bx_{K+1})-F(\bx_{*})\leq\O\left(\frac{\left\Vert \bx_{*}-\bx_{1}\right\Vert ^{2}\sqrt{\bL L^{*}(1+\log K)}}{K}+\frac{\left\Vert \bx_{*}-\bx_{1}\right\Vert ^{4/3}\sqrt[3]{\bL\asigma(1+\log K)}}{K^{2/3}}\right).
\]

For the case of randomly generated permutations, by Lemma \ref{lem:var},
we have
\begin{equation}
\E\left[R_{\ell}\right]\leq\frac{2}{3}n\bL\rsigma,\forall\ell\in\left[K\right].\label{eq:smooth-cvx-full-var-r}
\end{equation}
Hence, for any $k\in\left[K\right]$,
\begin{align*}
\E\left[B_{f}(\bx_{k+1},\bx_{*})\right] & \overset{\eqref{eq:smooth-cvx-full-B},\eqref{eq:smooth-cvx-full-var-r}}{\leq}\frac{\left\Vert \bx_{*}-\bx_{1}\right\Vert ^{2}}{2n\eta k}+\sum_{\ell=1}^{k}\frac{\frac{8}{3}\eta^{2}n\bL\rsigma}{k-\ell+1}+8(n\eta\bL)^{2}\sum_{\ell=2}^{k}\frac{\E\left[B_{f}(\bx_{\ell},\bx_{*})\right]}{k-\ell+2}\\
 & \leq\frac{\left\Vert \bx_{*}-\bx_{1}\right\Vert ^{2}}{2n\eta k}+\frac{8}{3}n\eta^{2}\bL\rsigma(1+\log k)+8(n\eta\bL)^{2}\sum_{\ell=2}^{k}\frac{\E\left[B_{f}(\bx_{\ell},\bx_{*})\right]}{k-\ell+2}.
\end{align*}
Moreover,
\begin{align*}
\E\left[F(\bx_{K+1})-F(\bx_{*})\right] & \overset{\eqref{eq:smooth-cvx-full-F},\eqref{eq:smooth-cvx-full-var-r}}{\leq}\frac{\left\Vert \bx_{*}-\bx_{1}\right\Vert ^{2}}{2n\eta K}+\sum_{\ell=1}^{K}\frac{\frac{8}{3}\eta^{2}n\bL\rsigma}{K-\ell+1}+8(n\eta\bL)^{2}\sum_{\ell=2}^{K}\frac{\E\left[B_{f}(\bx_{\ell},\bx_{*})\right]}{K-\ell+2}\\
 & \leq\frac{\left\Vert \bx_{*}-\bx_{1}\right\Vert ^{2}}{2n\eta K}+\frac{8}{3}n\eta^{2}\bL\rsigma(1+\log K)+8(n\eta\bL)^{2}\sum_{\ell=2}^{K}\frac{B_{f}(\bx_{\ell},\bx_{*})}{K-\ell+2}.
\end{align*}
We apply Lemma \ref{lem:ineq} with $d_{k+1}=\begin{cases}
\E\left[B_{f}(\bx_{k+1},\bx_{*})\right] & k\in\left[K-1\right]\\
\E\left[F(\bx_{K+1})-F(\bx_{*})\right] & k=K
\end{cases},a=\frac{\left\Vert \bx_{*}-\bx_{1}\right\Vert ^{2}}{2n\eta},b=\frac{8}{3}n\eta^{2}\bL\rsigma,c=8(n\eta\bL)^{2}$ and then follow the similar steps used before to obtain
\begin{align*}
\E\left[F(\bx_{K+1})-F(\bx_{*})\right] & \leq\frac{\left\Vert \bx_{*}-\bx_{1}\right\Vert ^{2}}{n\eta K}+\frac{16}{3}n\eta^{2}\bL\rsigma(1+\log K)\\
 & =\O\left(\frac{\left\Vert \bx_{*}-\bx_{1}\right\Vert ^{2}}{n\eta K}+n\eta^{2}\bL\rsigma(1+\log K)\right).
\end{align*}
Finally, taking $\eta=\frac{1}{4n\sqrt{2\bL L^{*}(1+\log K)}}\land\frac{\left\Vert \bx_{*}-\bx_{1}\right\Vert ^{2/3}}{\sqrt[3]{n^{2}\bL\rsigma K(1+\log K)}}$
to obtain
\[
F(\bx_{K+1})-F(\bx_{*})\leq\O\left(\frac{\left\Vert \bx_{*}-\bx_{1}\right\Vert ^{2}\sqrt{\bL L^{*}(1+\log K)}}{K}+\frac{\left\Vert \bx_{*}-\bx_{1}\right\Vert ^{4/3}\sqrt[3]{\bL\rsigma(1+\log K)}}{n^{1/3}K^{2/3}}\right).
\]
\end{proof}

Next, we provide the full theorem when at least one of $f(\bx)$ and
$\psi(\bx)$ is strongly convex. The key step is to bound $F(\bx_{K+1})-F(\bx_{*})$
by $\left\Vert \bx_{*}-\bx_{\left\lceil \frac{K}{2}\right\rceil +1}\right\Vert ^{2}$
using Theorem \ref{thm:smooth-cvx-full} and bound $\left\Vert \bx_{*}-\bx_{\left\lceil \frac{K}{2}\right\rceil +1}\right\Vert ^{2}$
by $\left\Vert \bx_{*}-\bx_{1}\right\Vert ^{2}$ using Lemma \ref{lem:smooth-distance}.
\begin{thm}
\label{thm:smooth-str-full}(Full version of Theorem \ref{thm:smooth-str-main})
Under Assumptions \ref{assu:minimizer}, \ref{assu:basic}, \ref{assu:smooth}
and \ref{assu:str} with $\mu_{F}\triangleq\mu_{f}+2\mu_{\psi}>0$,
taking the stepsize $\eta_{k}=\eta\leq\frac{1}{4n\sqrt{2\bL L^{*}(1+\log K)}\lor\mu_{\psi}},\forall k\in\left[K\right]$:
\begin{itemize}
\item Regardless of how the permutation $\sigma_{k}$ is generated in every
epoch, Algorithm \ref{alg:shuffling-SGD} guarantees
\[
F(\bx_{K+1})-F_{*}\leq\mathcal{O}\left(\frac{D^{2}}{n\eta K}e^{-n\eta\mu_{F}K}+\frac{n\eta\bL\asigma}{\mu_{F}K}+(n\eta)^{2}\bL\asigma(1+\log K)\right).
\]
Setting $\eta=\frac{1}{4n\sqrt{2\bL L^{*}(1+\log K)}\lor\mu_{\psi}}\land\frac{u}{n\mu_{F}K}$
where $u=1\lor\log\frac{\mu_{F}^{3}D^{2}K^{2}}{\bL\asigma(1+\log K)}$
to get
\[
F(\bx_{K+1})-F_{*}\leq\mathcal{O}\left(\frac{D^{2}\left(\sqrt{\bL L^{*}(1+\log K)}\lor\mu_{\psi}\right)}{Ke^{\frac{\mu_{F}K}{\sqrt{\bL L^{*}(1+\log K)}\lor\mu_{\psi}}}}+\frac{(1+u^{2})\bL\asigma(1+\log K)}{\mu_{F}^{2}K^{2}}\right).
\]
\item Suppose the permutation $\sigma_{k}$ is uniformly sampled without
replacement, Algorithm \ref{alg:shuffling-SGD} guarantees
\[
\E\left[F(\bx_{K+1})-F_{*}\right]\leq\O\left(\frac{D^{2}}{n\eta K}e^{-n\eta\mu_{F}K}+\frac{\eta\bL\rsigma}{\mu_{F}K}+n\eta^{2}\bL\rsigma(1+\log K)\right).
\]
Setting $\eta=\frac{1}{4n\sqrt{2\bL L^{*}(1+\log K)}\lor\mu_{\psi}}\land\frac{u}{n\mu_{F}K}$
where $u=1\lor\log\frac{n\mu_{F}^{3}D^{2}K^{2}}{\bL\rsigma(1+\log K)}$
to get
\[
\E\left[F(\bx_{K+1})-F_{*}\right]\leq\mathcal{O}\left(\frac{D^{2}\left(\sqrt{\bL L^{*}(1+\log K)}\lor\mu_{\psi}\right)}{Ke^{\frac{\mu_{F}K}{\sqrt{\bL L^{*}(1+\log K)}\lor\mu_{\psi}}}}+\frac{(1+u^{2})\bL\rsigma(1+\log K)}{\mu_{F}^{2}nK^{2}}\right).
\]
\end{itemize}
\end{thm}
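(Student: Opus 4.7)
The plan is to split the $K$ epochs at $k_{0}=\lceil K/2\rceil$: use Lemma~\ref{lem:smooth-distance} to contract $\|\bx_{k_{0}+1}-\bx_{*}\|^{2}$ exponentially on the first $k_{0}$ epochs via strong convexity, then treat $\bx_{k_{0}+1}$ as a fresh initial point and invoke Theorem~\ref{thm:smooth-cvx-full} on the remaining $K'=K-k_{0}\ge\lfloor K/2\rfloor$ epochs with the same constant stepsize $\eta$. Since $\eta$ does not depend on $k$ and the hypotheses of Theorem~\ref{thm:smooth-cvx-full} do not use strong convexity, they are inherited on the second segment; using $K'\ge K/2$ and $\log K'\le\log K$, the theorem yields
\begin{align*}
F(\bx_{K+1})-F_{*} \le \O\left(\frac{\|\bx_{k_{0}+1}-\bx_{*}\|^{2}}{n\eta K} + (n\eta)^{2}\bL\asigma(1+\log K)\right)
\end{align*}
in the any-permutation case, and the analogous bound with $\rsigma$ and an extra $1/n$ factor on the noise in expectation for the random case.

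Next, Lemma~\ref{lem:smooth-distance} should give a recursion of the form
\begin{align*}
\|\bx_{k+1}-\bx_{*}\|^{2} \le (1-c\,n\eta\mu_{F})^{k}D^{2} + N,
\end{align*}
with noise floor $N=\Theta((n\eta)^{2}\bL\asigma/\mu_{F})$ in the deterministic setting and $N=\Theta(n\eta^{2}\bL\rsigma/\mu_{F})$ in expectation under uniform-without-replacement shuffling---matching the $n$-scaling contrast already visible in Theorem~\ref{thm:smooth-cvx-full}. Setting $k=k_{0}$ and using $(1-c\,n\eta\mu_{F})^{k_{0}}\le e^{-\Omega(n\eta\mu_{F}K)}$, substitution into the previous display and division by $n\eta K$ produce the three advertised terms: the exponentially contracting $\frac{D^{2}}{n\eta K}e^{-n\eta\mu_{F}K}$, the noise-derived $\frac{n\eta\bL\asigma}{\mu_{F}K}$ (resp.\ $\frac{\eta\bL\rsigma}{\mu_{F}K}$), and the residual $(n\eta)^{2}\bL\asigma(1+\log K)$ (resp.\ $n\eta^{2}\bL\rsigma(1+\log K)$) surviving from the restart.

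The tuned stepsize $\eta=\frac{1}{4n\sqrt{2\bL L^{*}(1+\log K)}\vee\mu_{\psi}}\wedge\frac{u}{n\mu_{F}K}$ with $u=1\vee\log\frac{\mu_{F}^{3}D^{2}K^{2}}{\bL\asigma(1+\log K)}$ (and an extra factor $n$ inside the log in the random case) is chosen so that on the second branch $e^{-n\eta\mu_{F}K}\le e^{-u}$ brings the three terms into a common scale of $\O((1+u^{2})\bL\asigma(1+\log K)/(\mu_{F}^{2}K^{2}))$, while on the first branch only the transient term matters and contributes at the linear rate $\exp(-\mu_{F}K/(\sqrt{\bL L^{*}(1+\log K)}\vee\mu_{\psi}))$. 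The main obstacle I expect is establishing Lemma~\ref{lem:smooth-distance} itself: the proximal step must combine the moduli of $f$ and $\psi$ into the per-epoch contraction $1-\Omega(n\eta\mu_{F})$ with the combined modulus $\mu_{F}=\mu_{f}+2\mu_{\psi}$, while the shuffling bias (bounded by Lemma~\ref{lem:var}) must contribute at the correct $n$-scale in each permutation model. This should follow by the same telescoping strategy as in the proof of Theorem~\ref{thm:smooth-cvx-full}, but now retaining a $-\Omega(n\eta\mu_{F})\|\bx_{k}-\bx_{*}\|^{2}$ term in the one-step analysis (from strong convexity) which is absorbed into the multiplicative contraction factor.
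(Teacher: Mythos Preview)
Your proposal is correct and follows essentially the same approach as the paper: split at $k_0=\lceil K/2\rceil$, apply Theorem~\ref{thm:smooth-cvx-full} on the second half with $\bx_{k_0+1}$ as the initial point, and use Lemma~\ref{lem:smooth-distance} (combined with Lemma~\ref{lem:var} for the two permutation regimes) to bound $\|\bx_{k_0+1}-\bx_*\|^2$ via the geometric sum $\sum_\ell(1+n\eta\mu_F)^{-(k_0-\ell+1)}$. One small note: Lemma~\ref{lem:smooth-distance} is already established in the paper (via Lemma~\ref{lem:core} with $\bz=\bx_*$ and Lemma~\ref{lem:smooth-res}), so you need not treat it as an obstacle---just invoke it, and use the stepsize bound $\eta\le\frac{1}{4n\sqrt{2\bL L^*(1+\log K)}\lor\mu_\psi}$ to ensure $n\eta\mu_F$ is bounded so that $(1+n\eta\mu_F)^{-k_0}\le e^{-\Omega(n\eta\mu_F K)}$.
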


\begin{proof}
Suppose we run Algorithm \ref{alg:shuffling-SGD} by $K$ epochs with
the stepsize $\eta_{k}=\eta,\forall k\in\left[K\right]$. Then one
can recognize $\bx_{K+1}$ as obtained by running Algorithm \ref{alg:shuffling-SGD}
with the initial point $\bx_{\left\lceil \frac{K}{2}\right\rceil +1}$
by $K-\left\lceil \frac{K}{2}\right\rceil =\left\lfloor \frac{K}{2}\right\rfloor $
epochs. Note that $\eta\leq\frac{1}{4n\sqrt{2\bL L^{*}(1+\log K)}\lor\mu_{\psi}}\leq\frac{1}{4n\sqrt{2\bL L^{*}(1+\log\left\lfloor \frac{K}{2}\right\rfloor )}}$,
then we can apply Theorem \ref{thm:smooth-cvx-full} to obtain that
for any permutation $\sigma_{k}$
\begin{align}
F(\bx_{K+1})-F(\bx_{*}) & \leq\O\left(\frac{\left\Vert \bx_{*}-\bx_{\left\lceil \frac{K}{2}\right\rceil +1}\right\Vert ^{2}}{n\eta\left\lfloor \frac{K}{2}\right\rfloor }+(n\eta)^{2}\bL\asigma\left(1+\log\left\lfloor \frac{K}{2}\right\rfloor \right)\right)\nonumber \\
 & =\O\left(\frac{\left\Vert \bx_{*}-\bx_{\left\lceil \frac{K}{2}\right\rceil +1}\right\Vert ^{2}}{n\eta K}+(n\eta)^{2}\bL\asigma(1+\log K)\right).\label{eq:smooth-str-full-1}
\end{align}
If the permutation $\sigma_{k}$ is uniformly sampled without replacement,
there is
\begin{align}
\E\left[F(\bx_{K+1})-F(\bx_{*})\right] & \leq\O\left(\frac{\E\left[\left\Vert \bx_{*}-\bx_{\left\lceil \frac{K}{2}\right\rceil +1}\right\Vert ^{2}\right]}{n\eta\left\lfloor \frac{K}{2}\right\rfloor }+n\eta^{2}\bL\rsigma\left(1+\log\left\lfloor \frac{K}{2}\right\rfloor \right)\right)\nonumber \\
 & =\O\left(\frac{\E\left[\left\Vert \bx_{*}-\bx_{\left\lceil \frac{K}{2}\right\rceil +1}\right\Vert ^{2}\right]}{n\eta K}+n\eta^{2}\bL\rsigma(1+\log K)\right).\label{eq:smooth-str-full-2}
\end{align}

Now we use Lemma \ref{lem:smooth-distance} for $k=\left\lceil \frac{K}{2}\right\rceil $
to obtain
\begin{align}
\left\Vert \bx_{*}-\bx_{\left\lceil \frac{K}{2}\right\rceil +1}\right\Vert ^{2} & \leq\frac{\left\Vert \bx_{*}-\bx_{1}\right\Vert ^{2}}{\prod_{s=1}^{\left\lceil \frac{K}{2}\right\rceil }(1+n\eta_{s}(\mu_{f}+2\mu_{\psi}))}+\sum_{\ell=1}^{\left\lceil \frac{K}{2}\right\rceil }\frac{8n\eta_{\ell}^{3}R_{\ell}}{\prod_{s=\ell}^{\left\lceil \frac{K}{2}\right\rceil }(1+n\eta_{s}(\mu_{f}+2\mu_{\psi}))}\nonumber \\
 & =\frac{\left\Vert \bx_{*}-\bx_{1}\right\Vert ^{2}}{(1+n\eta\mu_{F})^{\left\lceil \frac{K}{2}\right\rceil }}+8n\eta^{3}\sum_{\ell=1}^{\left\lceil \frac{K}{2}\right\rceil }\frac{R_{\ell}}{(1+n\eta\mu_{F})^{\left\lceil \frac{K}{2}\right\rceil -\ell+1}}\nonumber \\
 & \overset{(a)}{\leq}\left\Vert \bx_{*}-\bx_{1}\right\Vert ^{2}e^{-\frac{n\eta\mu_{F}\left\lceil \frac{K}{2}\right\rceil }{1+n\eta\mu_{F}}}+8n\eta^{3}\sum_{\ell=1}^{\left\lceil \frac{K}{2}\right\rceil }\frac{R_{\ell}}{(1+n\eta\mu_{F})^{\left\lceil \frac{K}{2}\right\rceil -\ell+1}}\nonumber \\
 & \overset{(b)}{\leq}\left\Vert \bx_{*}-\bx_{1}\right\Vert ^{2}e^{-\frac{5n\eta\mu_{F}K}{32}}+8n\eta^{3}\sum_{\ell=1}^{\left\lceil \frac{K}{2}\right\rceil }\frac{R_{\ell}}{(1+n\eta\mu_{F})^{\left\lceil \frac{K}{2}\right\rceil -\ell+1}},\label{eq:smooth-str-full-distance}
\end{align}
where $(a)$ is by $\frac{1}{1+x}\leq\exp\left(-\frac{x}{1+x}\right)$
and $(b)$ is due to $\left\lceil \frac{K}{2}\right\rceil \geq\frac{K}{2}$
and $n\eta\mu_{F}\leq\frac{\mu_{f}+2\mu_{\psi}}{4\sqrt{2\bL L^{*}(1+\log K)}\lor\mu_{\psi}}\leq\frac{1}{4\sqrt{2}}+2\leq\frac{11}{5}$.

By Lemma \ref{lem:var}, there is $R_{\ell}\leq n^{2}\bL\asigma$
for any permutation $\sigma_{\ell}$, which implies
\begin{align}
\left\Vert \bx_{*}-\bx_{\left\lceil \frac{K}{2}\right\rceil +1}\right\Vert ^{2} & \overset{\eqref{eq:smooth-str-full-distance}}{\leq}\left\Vert \bx_{*}-\bx_{1}\right\Vert ^{2}e^{-\frac{5n\eta\mu_{F}K}{32}}+8(n\eta)^{3}\bL\asigma\sum_{\ell=1}^{\left\lceil \frac{K}{2}\right\rceil }\frac{1}{(1+n\eta\mu_{F})^{\left\lceil \frac{K}{2}\right\rceil -\ell+1}}\nonumber \\
 & \leq\left\Vert \bx_{*}-\bx_{1}\right\Vert ^{2}e^{-\frac{5n\eta\mu_{F}K}{32}}+\frac{8(n\eta)^{2}\bL\asigma}{\mu_{F}}\nonumber \\
 & =\mathcal{O}\left(\left\Vert \bx_{*}-\bx_{1}\right\Vert ^{2}e^{-n\eta\mu_{F}K}+\frac{(n\eta)^{2}\bL\asigma}{\mu_{F}}\right).\label{eq:smooth-str-full-3}
\end{align}
Additionally, if $\sigma_{\ell}$ is uniformly sampled without replacement,
then $\E\left[R_{\ell}\right]\leq\frac{2}{3}n\bL\rsigma$. Therefore
\begin{align}
\E\left[\left\Vert \bx_{*}-\bx_{\left\lceil \frac{K}{2}\right\rceil +1}\right\Vert ^{2}\right] & \overset{\eqref{eq:smooth-str-full-distance}}{\leq}\left\Vert \bx_{*}-\bx_{1}\right\Vert ^{2}e^{-\frac{5n\eta\mu_{F}K}{32}}+\frac{16}{3}n^{2}\eta^{3}\bL\rsigma\sum_{\ell=1}^{\left\lceil \frac{K}{2}\right\rceil }\frac{1}{(1+n\eta\mu_{F})^{\left\lceil \frac{K}{2}\right\rceil -\ell+1}}\nonumber \\
 & \leq\left\Vert \bx_{*}-\bx_{1}\right\Vert ^{2}e^{-\frac{5n\eta\mu_{F}K}{32}}+\frac{16n\eta^{2}\bL\rsigma}{3\mu_{F}}\nonumber \\
 & =\mathcal{O}\left(\left\Vert \bx_{*}-\bx_{1}\right\Vert ^{2}e^{-n\eta\mu_{F}K}+\frac{n\eta^{2}\bL\rsigma}{\mu_{F}}\right).\label{eq:smooth-str-full-4}
\end{align}

Combining (\ref{eq:smooth-str-full-1}) and (\ref{eq:smooth-str-full-3}),
we obtain for any permutation
\[
F(\bx_{K+1})-F(\bx_{*})\leq\mathcal{O}\left(\frac{\left\Vert \bx_{*}-\bx_{1}\right\Vert ^{2}}{n\eta K}e^{-n\eta\mu_{F}K}+\frac{n\eta\bL\asigma}{\mu_{F}K}+(n\eta)^{2}\bL\asigma(1+\log K)\right).
\]
Taking $\eta=\frac{1}{4n\sqrt{2\bL L^{*}(1+\log K)}\lor\mu_{\psi}}\land\frac{u}{n\mu_{F}K}$
where $u=1\lor\log\frac{\mu_{F}^{3}\left\Vert \bx_{*}-\bx_{1}\right\Vert ^{2}K^{2}}{\bL\asigma(1+\log K)}$
to obtain
\[
F(\bx_{K+1})-F(\bx_{*})\leq\mathcal{O}\left(\frac{\left\Vert \bx_{*}-\bx_{1}\right\Vert ^{2}\left(\sqrt{\bL L^{*}(1+\log K)}\lor\mu_{\psi}\right)}{Ke^{\frac{\mu_{F}K}{\sqrt{\bL L^{*}(1+\log K)}\lor\mu_{\psi}}}}+\frac{(1+u^{2})\bL\asigma(1+\log K)}{\mu_{F}^{2}K^{2}}\right).
\]

Combining (\ref{eq:smooth-str-full-2}) and (\ref{eq:smooth-str-full-4}),
we obtain for uniform sampling without replacement
\[
\E\left[F(\bx_{K+1})-F(\bx_{*})\right]\leq\O\left(\frac{\left\Vert \bx_{*}-\bx_{1}\right\Vert ^{2}}{n\eta K}e^{-n\eta\mu_{F}K}+\frac{\eta\bL\rsigma}{\mu_{F}K}+n\eta^{2}\bL\rsigma(1+\log K)\right).
\]
Taking $\eta=\frac{1}{4n\sqrt{2\bL L^{*}(1+\log K)}\lor\mu_{\psi}}\land\frac{u}{n\mu_{F}K}$
where $u=1\lor\log\frac{n\mu_{F}^{3}\left\Vert \bx_{*}-\bx_{1}\right\Vert ^{2}K^{2}}{\bL\rsigma(1+\log K)}$
to get
\[
\E\left[F(\bx_{K+1})-F(\bx_{*})\right]\leq\mathcal{O}\left(\frac{\left\Vert \bx_{*}-\bx_{1}\right\Vert ^{2}\left(\sqrt{\bL L^{*}(1+\log K)}\lor\mu_{\psi}\right)}{Ke^{\frac{\mu_{F}K}{\sqrt{\bL L^{*}(1+\log K)}\lor\mu_{\psi}}}}+\frac{(1+u^{2})\bL\rsigma(1+\log K)}{\mu_{F}^{2}nK^{2}}\right).
\]
\end{proof}

\subsection{Lipschitz Functions}

We focus on Lipschitz components $f_{i}(\bx)$ in this subsection.

First, we analyze the case of $\mu_{\psi}=0$ in Theorem \ref{thm:lip-cvx-full}
with three different stepsizes. It is worth noting that the last stepsize
schedule, which is inspired by \cite{zamani2023exact}, can remove
the extra $\O(\log K)$ term.
\begin{thm}
\label{thm:lip-cvx-full}(Full version of Theorem \ref{thm:lip-cvx-main})
Under Assumptions \ref{assu:minimizer}, \ref{assu:basic} (with $\mu_{\psi}=0$)
and \ref{assu:lip}, regardless of how the permutation $\sigma_{k}$
is generated in every epoch:
\begin{itemize}
\item Taking the stepsize $\eta_{k}=\frac{\eta}{\sqrt{k}},\forall k\in\left[K\right]$,
Algorithm \ref{alg:shuffling-SGD} guarantees
\[
F(\bx_{K+1})-F_{*}\leq\O\left(\left(\frac{D^{2}}{n\eta}+n\eta\bG^{2}(1+\log K)\right)\frac{1}{\sqrt{K}}\right).
\]
Setting $\eta=\frac{D^{2}}{n\bG}$ to get the best dependence on parameters.
\item Taking the stepsize $\eta_{k}=\frac{\eta}{\sqrt{K}},\forall k\in\left[K\right]$,
Algorithm \ref{alg:shuffling-SGD} guarantees
\[
F(\bx_{K+1})-F_{*}\leq\O\left(\left(\frac{D^{2}}{n\eta}+n\eta\bG^{2}(1+\log K)\right)\frac{1}{\sqrt{K}}\right).
\]
Setting $\eta=\frac{D}{n\bG\sqrt{1+\log K}}$ to get the best dependence
on parameters.
\item Taking the stepsize $\eta_{k}=\eta\frac{K-k+1}{K^{3/2}},\forall k\in\left[K\right]$,
Algorithm \ref{alg:shuffling-SGD} guarantees
\[
F(\bx_{K+1})-F_{*}\leq\O\left(\left(\frac{D^{2}}{n\eta}+n\eta\bG^{2}\right)\frac{1}{\sqrt{K}}\right).
\]
Setting $\eta=\frac{D}{n\bG}$ to get the best dependence on parameters.
\end{itemize}
\end{thm}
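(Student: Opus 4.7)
The plan is to adapt the Zamani--Glineur style argument outlined in Section~\ref{sec:idea} to the Lipschitz setting, where the analysis is actually simpler than the smooth case because no $B_f(\bz_k,\bx_*)$ term appears and the shuffling residual admits a clean uniform bound through Assumption~\ref{assu:lip}. I would first derive a single-epoch inequality of the form
\[
2n\eta_k\bigl(F(\bx_{k+1})-F(\by)\bigr)\leq\|\bx_k-\by\|^{2}-\|\bx_{k+1}-\by\|^{2}+C\,n^{2}\eta_k^{2}\bG^{2}
\]
valid for every $\by\in\mathrm{dom}\,\psi$. The derivation expands $\|\bx_k^{i+1}-\by\|^{2}$ along the inner loop, uses convexity of each $f_{\sigma_k^i}$ to introduce $f_{\sigma_k^i}(\bx_k^i)-f_{\sigma_k^i}(\by)$, and combines the $n$ inner inequalities with the standard three-point inequality for the proximal step. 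Splitting $f_{\sigma_k^i}(\bx_k^i)-f_{\sigma_k^i}(\by)=\bigl[f_{\sigma_k^i}(\bx_k)-f_{\sigma_k^i}(\by)\bigr]+\bigl[f_{\sigma_k^i}(\bx_k^i)-f_{\sigma_k^i}(\bx_k)\bigr]$ turns the sum into $n(f(\bx_k)-f(\by))$ plus a residual bounded by $G_{\sigma_k^i}\|\bx_k^i-\bx_k\|\leq\eta_k G_{\sigma_k^i}\sum_{j<i}G_{\sigma_k^j}$, which totals $O(n^{2}\eta_k^{2}\bG^{2})$. A final application of Lipschitz continuity of $f$ swaps $f(\bx_k)$ for $f(\bx_{k+1})$ at cost $O(n^{2}\eta_k^{2}\bG^{2})$, yielding the displayed inequality.

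Second, fixing the horizon $K$, I would introduce an auxiliary sequence $\bz_k^{K}$ that is a convex combination $\bz_k^{K}=w_{k,0}\bx_{*}+\sum_{\ell=1}^{k}w_{k,\ell}\bx_{\ell+1}$, with weights and multipliers $p_k$ tailored to each stepsize schedule so that applying the per-epoch inequality at $\by=\bz_k^{K}$, multiplying by $p_k$, and summing over $k\in[K]$ telescopes to $\|\bx_1-\bx_{*}\|^{2}=D^{2}$ on the quadratic side, while on the objective side convexity of $F$ gives
\[
\sum_{k=1}^{K}p_k\bigl(F(\bx_{k+1})-F(\bz_k^{K})\bigr)\geq\Bigl(\sum_{k=1}^{K}p_k\Bigr)\bigl(F(\bx_{K+1})-F(\bx_{*})\bigr)
\]
once the weights are chosen so that the combination isolates $\bx_{K+1}$ on the left and $\bx_{*}$ on the right. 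Because there is no $B_f$ feedback term, I can avoid the inductive Lemma~\ref{lem:ineq} used in the smooth proof and work directly with one telescoping identity.

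Third, I would plug in each of the three stepsize schedules. For $\eta_k=\eta/\sqrt{k}$ and $\eta_k=\eta/\sqrt{K}$ the weights reduce to essentially uniform and the residual sum contributes a $\sum_k 1/(K-k+1)=\Theta(\log K)$ or $\sum_k 1/k=\Theta(\log K)$ factor, producing the $\widetilde{\O}\bigl((D^{2}/(n\eta)+n\eta\bG^{2}(1+\log K))/\sqrt{K}\bigr)$ bound; balancing $\eta$ recovers the stated $\widetilde{\O}(\bG D/\sqrt{K})$ rate. For the linear-decay stepsize $\eta_k=\eta(K-k+1)/K^{3/2}$, I would pick $p_k$ proportional to $\eta_k$ and $w_{k,\ell}$ proportional to $\eta_\ell$, so that the coefficient of the residual becomes $\sum_k\eta_k^{2}/\sum_k\eta_k$; with the linear decay this ratio is $\Theta(\eta/\sqrt{K})$ with no $\log$, giving the clean $\O(\bG D/\sqrt{K})$ bound.

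The main obstacle is the second step: pinning down the weights $w_{k,\ell}^{K}$ and multipliers $p_k$ so that the weighted telescoping sum of $F(\bx_{k+1})-F(\bz_k^{K})$ collapses exactly to $F(\bx_{K+1})-F(\bx_{*})$ with the normalization matching the stepsize schedule. This bookkeeping is delicate because every $\bz_k^{K}$ depends on the whole tail of iterates and on $K$ itself, and for the linear-decay case the weighting has to be engineered precisely to cancel the $1/(K-k+1)$ factor responsible for the $\log K$ overhead in the other two schedules. The shuffling-specific bookkeeping, by contrast, is localized inside the single-epoch inequality and is straightforward once Assumption~\ref{assu:lip} is invoked.
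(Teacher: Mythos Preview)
Your proposal is correct and takes essentially the same approach as the paper: a per-epoch descent inequality valid for any comparison point (the paper obtains it via Lemma~\ref{lem:core} combined with Lemma~\ref{lem:lip-res}, reaching the same $O(n^{2}\eta_{k}^{2}\bG^{2})$ residual), then the Zamani--Glineur weighting to collapse to $F(\bx_{K+1})-F_{*}$ (packaged as Lemma~\ref{lem:lip-core}), then substitution of the three stepsize schedules. The only imprecision is in your sketch of the linear-decay case: the multipliers that make the telescoping work are $p_{k}\propto\eta_{k}/\sum_{\ell=k}^{K}\eta_{\ell}$ rather than $p_{k}\propto\eta_{k}$, producing residual $\sum_{k}\eta_{k}^{2}/\sum_{\ell=k}^{K}\eta_{\ell}$ rather than $\sum_{k}\eta_{k}^{2}/\sum_{k}\eta_{k}$, though for that particular schedule both quantities are $\Theta(\eta/\sqrt{K})$ so your conclusion is unaffected.
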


\begin{proof}
We invoke Lemma \ref{lem:lip-core} to get
\begin{align*}
F(\bx_{K+1})-F(\bx_{*}) & \leq\frac{\left\Vert \bx_{*}-\bx_{1}\right\Vert ^{2}-\left\Vert \bx_{*}-\bx_{K+1}\right\Vert ^{2}}{2n\sum_{k=1}^{K}\gamma_{k}}+3\bG^{2}n\sum_{k=1}^{K}\frac{\gamma_{k}\eta_{k}}{\sum_{\ell=k}^{K}\gamma_{\ell}}\\
 & \leq\frac{\left\Vert \bx_{*}-\bx_{1}\right\Vert ^{2}}{2n\sum_{k=1}^{K}\gamma_{k}}+3\bG^{2}n\sum_{k=1}^{K}\frac{\gamma_{k}\eta_{k}}{\sum_{\ell=k}^{K}\gamma_{\ell}},
\end{align*}
where $\gamma_{k}=\eta_{k}\prod_{\ell=2}^{k}(1+n\eta_{\ell-1}\mu_{\psi}),\forall k\in\left[K\right]$.
Note that $\mu_{\psi}=0\Rightarrow\gamma_{k}=\eta_{k},\forall k\in\left[K\right]$,
therefore
\begin{equation}
F(\bx_{K+1})-F(\bx_{*})\leq\frac{\left\Vert \bx_{*}-\bx_{1}\right\Vert ^{2}}{2n\sum_{k=1}^{K}\eta_{k}}+3\bG^{2}n\sum_{k=1}^{K}\frac{\eta_{k}^{2}}{\sum_{\ell=k}^{K}\eta_{\ell}}.\label{eq:lip-cvx-full-1}
\end{equation}

If $\eta_{k}=\frac{\eta}{\sqrt{k}},\forall k\in\left[K\right]$, then
\begin{align*}
F(\bx_{K+1})-F(\bx_{*}) & \overset{\eqref{eq:lip-cvx-full-1}}{\leq}\frac{\left\Vert \bx_{*}-\bx_{1}\right\Vert ^{2}}{2n\eta\sum_{k=1}^{K}1/\sqrt{k}}+3\bG^{2}n\eta\sum_{k=1}^{K}\frac{1}{k\sum_{\ell=k}^{K}1/\sqrt{\ell}}\\
 & \overset{(a)}{\leq}\frac{\left\Vert \bx_{*}-\bx_{1}\right\Vert ^{2}}{4n\eta(\sqrt{K+1}-1)}+3\bG^{2}n\eta\sum_{k=1}^{K}\frac{1}{2k(\sqrt{K+1}-\sqrt{k})}\\
 & \overset{(b)}{\leq}\frac{\left\Vert \bx_{*}-\bx_{1}\right\Vert ^{2}}{4n\eta(\sqrt{K+1}-1)}+\frac{3\bG^{2}n\eta}{\sqrt{K+1}}\sum_{k=1}^{K}\frac{1}{k}+\frac{1}{K+1-k}\\
 & =\O\left(\left(\frac{\left\Vert \bx_{*}-\bx_{1}\right\Vert ^{2}}{n\eta}+n\eta\bG^{2}(1+\log K)\right)\frac{1}{\sqrt{K}}\right),
\end{align*}
where $(a)$ is by $\sum_{\ell=k}^{K}1/\sqrt{\ell}\geq\int_{k}^{K+1}1/\sqrt{\ell}\mathrm{d}\ell=2\left(\sqrt{K+1}-\sqrt{k}\right)$
and $(b)$ is due to 
\[
\frac{1}{2k(\sqrt{K+1}-\sqrt{k})}=\frac{\sqrt{K+1}+\sqrt{k}}{2k(K+1-k)}\leq\frac{\sqrt{K+1}}{k(K+1-k)}=\frac{1}{\sqrt{K+1}}\left(\frac{1}{k}+\frac{1}{K+1-k}\right).
\]
Taking $\eta=\frac{\left\Vert \bx_{*}-\bx_{1}\right\Vert }{n\bG}$
to obtain
\[
F(\bx_{K+1})-F(\bx_{*})=\O\left(\frac{\bG\left\Vert \bx_{*}-\bx_{1}\right\Vert (1+\log K)}{\sqrt{K}}\right).
\]

If $\eta_{k}=\frac{\eta}{\sqrt{K}},\forall k\in\left[K\right]$, then
\begin{align*}
F(\bx_{K+1})-F(\bx_{*}) & \overset{\eqref{eq:lip-cvx-full-1}}{\leq}\frac{\left\Vert \bx_{*}-\bx_{1}\right\Vert ^{2}}{2n\eta\sum_{k=1}^{K}1/\sqrt{K}}+\frac{3\bG^{2}n\eta}{\sqrt{K}}\sum_{k=1}^{K}\frac{1}{K-k+1}\\
 & =\O\left(\left(\frac{\left\Vert \bx_{*}-\bx_{1}\right\Vert ^{2}}{n\eta}+n\eta\bG^{2}(1+\log K)\right)\frac{1}{\sqrt{K}}\right).
\end{align*}
Taking $\eta=\frac{\left\Vert \bx_{*}-\bx_{1}\right\Vert }{n\bG\sqrt{1+\log K}}$
to obtain
\[
F(\bx_{K+1})-F(\bx_{*})=\O\left(\frac{\bG\left\Vert \bx_{*}-\bx_{1}\right\Vert \sqrt{1+\log K}}{\sqrt{K}}\right).
\]

If $\eta_{k}=\eta\frac{K-k+1}{K^{3/2}},\forall k\in\left[K\right]$,
then
\begin{align*}
F(\bx_{K+1})-F(\bx_{*}) & \overset{\eqref{eq:lip-cvx-full-1}}{\leq}\frac{\left\Vert \bx_{*}-\bx_{1}\right\Vert ^{2}}{2n\eta\sum_{k=1}^{K}(K-k+1)/K^{3/2}}+\frac{3\bG^{2}n\eta}{K^{3/2}}\sum_{k=1}^{K}\frac{(K-k+1)^{2}}{\sum_{\ell=k}^{K}K-\ell+1}\\
 & =\frac{\sqrt{K}\left\Vert \bx_{*}-\bx_{1}\right\Vert ^{2}}{n\eta(K+1)}+\frac{6\bG^{2}n\eta}{K^{3/2}}\sum_{k=1}^{K}\frac{K-k+1}{K-k+2}\\
 & =\O\left(\left(\frac{\left\Vert \bx_{*}-\bx_{1}\right\Vert ^{2}}{n\eta}+\bG^{2}n\eta\right)\frac{1}{\sqrt{K}}\right),
\end{align*}
Taking $\eta=\frac{\left\Vert \bx_{*}-\bx_{1}\right\Vert }{n\bG}$
to obtain
\[
F(\bx_{K+1})-F(\bx_{*})=\O\left(\frac{\bG\left\Vert \bx_{*}-\bx_{1}\right\Vert }{\sqrt{K}}\right).
\]
\end{proof}

Next, Theorem \ref{thm:lip-ada-full} shows that, under the particularly
carefully designed stepsize, it is possible to obtain an asymptotic
rate with an optimal linear dependence on $D$ without knowing it.
The key is to use $r_{k}=r\lor\max_{\ell\in\left[k\right]}\left\Vert \bx_{\ell}-\bx_{1}\right\Vert $
to approximate $D$, which was orginally introduced by \cite{pmlr-v202-ivgi23a}.
\begin{thm}
\label{thm:lip-ada-full}(Full version of Theorem \ref{thm:lip-ada-main})
Under Assumptions \ref{assu:minimizer}, \ref{assu:basic} (with $\mu_{\psi}=0$)
and \ref{assu:lip}, regardless of how the permutation $\sigma_{k}$
is generated in every epoch, taking the stepsize $\eta_{k}=r_{k}\widetilde{\eta}_{k},\forall k\in\N$
where $r_{k}=r\lor\max_{\ell\in\left[k\right]}\left\Vert \bx_{\ell}-\bx_{1}\right\Vert $
for some $r>0$ and $\widetilde{\eta}_{k}=\frac{c}{n\bG\sqrt{6(1+\delta^{-1})k(1+\log k)^{1+\delta}}},\forall k\in\N$
for some $\delta>0,0<c<1$, Algorithm \ref{alg:shuffling-SGD} guarantees
\[
F(\bx_{K+1})-F_{*}\leq\left(\frac{r_{K+1}}{c\sum_{k=1}^{K}r_{k}/K}+\frac{3c}{1-c}\right)\bG\left(D\lor r\right)\sqrt{\frac{6(1+\delta^{-1})(1+\log K)^{1+\delta}}{K}}.
\]
Moreover, for large enough $K$
\[
F(\bx_{K+1})-F_{*}\leq\mathcal{O}\left(\frac{3c^{2}-c+1}{c(1-c)}\bG\left(D\lor r\right)\sqrt{\frac{(1+\delta^{-1})(1+\log K)^{1+\delta}}{K}}\right).
\]
\end{thm}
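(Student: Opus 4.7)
The plan is to start from the standard Lipschitz bound (Lemma~\ref{lem:lip-core}) specialized to $\mu_{\psi}=0$ so that $\gamma_{k}=\eta_{k}$:
\[
F(\bx_{K+1})-F_{*}\leq\frac{\|\bx_{*}-\bx_{1}\|^{2}-\|\bx_{*}-\bx_{K+1}\|^{2}}{2n\sum_{k=1}^{K}\eta_{k}}+3\bG^{2}n\sum_{k=1}^{K}\frac{\eta_{k}^{2}}{\sum_{\ell=k}^{K}\eta_{\ell}}.
\]
The first step is to handle the leading numerator without knowing $D$: using the identity $\|a\|^{2}-\|b\|^{2}=2\langle a-b,a\rangle-\|a-b\|^{2}$ with $a=\bx_{*}-\bx_{1}$ and $b=\bx_{*}-\bx_{K+1}$ and dropping the negative term gives $\|\bx_{*}-\bx_{1}\|^{2}-\|\bx_{*}-\bx_{K+1}\|^{2}\leq 2D\,\|\bx_{K+1}-\bx_{1}\|\leq 2(D\lor r)\,r_{K+1}$, so one factor of the adaptive radius $r_{K+1}$ and one factor of $D\lor r$ emerge naturally.

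Next, I would exploit the product structure $\eta_{k}=r_{k}\widetilde{\eta}_{k}$ together with the monotonicity of $k\mapsto r_{k}$ to reduce everything to quantities in $\widetilde{\eta}_{k}$ alone. Since $\widetilde{\eta}$ is decreasing, $\sum_{k=1}^{K}r_{k}\widetilde{\eta}_{k}\geq\widetilde{\eta}_{K}\sum_{k=1}^{K}r_{k}=K\bar{r}\,\widetilde{\eta}_{K}$ with $\bar{r}\triangleq\tfrac{1}{K}\sum_{k=1}^{K}r_{k}$; plugging in the explicit value of $\widetilde{\eta}_{K}$ reproduces the first bracket term $\frac{r_{K+1}}{c\bar{r}}\bG(D\lor r)\sqrt{6(1+\delta^{-1})(1+\log K)^{1+\delta}/K}$ of the target bound exactly. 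For the residual sum, monotonicity gives $\sum_{\ell=k}^{K}\eta_{\ell}\geq r_{k}\sum_{\ell=k}^{K}\widetilde{\eta}_{\ell}$, reducing the task to bounding $3\bG^{2}n\sum_{k=1}^{K}r_{k}\widetilde{\eta}_{k}^{2}/\sum_{\ell\geq k}\widetilde{\eta}_{\ell}$.

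The hard part will be showing this residual quantity is at most $\frac{3c}{1-c}\bG(D\lor r)\sqrt{6(1+\delta^{-1})(1+\log K)^{1+\delta}/K}$. Two ingredients should enter: first, the log-corrected choice $\widetilde{\eta}_{k}\propto 1/\sqrt{k(1+\log k)^{1+\delta}}$ is calibrated precisely so that $\sum_{k}\widetilde{\eta}_{k}^{2}$ is summable for $\delta>0$ while $\sum_{k}\widetilde{\eta}_{k}$ still grows like $\sqrt{K}/(1+\log K)^{(1+\delta)/2}$, and an integral comparison (e.g.\ splitting at $k=K/2$ and using $\sum_{\ell\geq k}\widetilde{\eta}_{\ell}\geq(K-k+1)\widetilde{\eta}_{K}$) should extract the correct $1/\sqrt{K}$ rate with the right logarithmic power; second, the stray $r_{k}$ factors have to be converted into $D\lor r$, which I expect requires a DoG-style self-bounding argument using $r_{K+1}\leq D+\max_{k}\|\bx_{k}-\bx_{*}\|$ and a residual bound on $\max_{k}\|\bx_{k}-\bx_{*}\|$ obtained by re-applying Lemma~\ref{lem:lip-core} to each prefix, so that the inequality closes on itself. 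The factor $1-c$ in the final constant is precisely the slack produced by this bootstrap and explains the requirement $c<1$. The asymptotic bound then follows because once the iterates converge, $r_{k}$ stabilizes and $\bar{r}/r_{K+1}\to 1$, so the bracket tends to $\frac{1}{c}+\frac{3c}{1-c}=\frac{3c^{2}-c+1}{c(1-c)}$ as claimed.
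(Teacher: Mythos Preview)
Your plan matches the paper's proof almost exactly: start from Lemma~\ref{lem:lip-core} with $\mu_\psi=0$, bound the numerator by $2(D\lor r)\,r_{K+1}$, use monotonicity of $r_k$ and $\widetilde{\eta}_k$ to decouple the two factors, and then convert the stray $r_k$ into $D\lor r$ via a bootstrap that closes because $c<1$. Two small places where the paper's execution differs from what you sketch: (i) for the self-bounding step the paper does \emph{not} re-apply Lemma~\ref{lem:lip-core} to each prefix (which would be needlessly heavy and risk circularity through the residual term), but instead uses the much simpler one-epoch distance recursion $\|\bx_*-\bx_{k+1}\|^2\le\|\bx_*-\bx_k\|^2+6\bG^2n^2\eta_k^2$ (this is Lemma~\ref{lem:lip-ada-distance}), which after summing and using $\sum_k 6\bG^2n^2\widetilde{\eta}_k^2\le c^2$ yields $\|\bx_{k+1}-\bx_1\|\le 2D+cr_k$ and closes by induction to $r_K\le\frac{3}{1-c}(D\lor r)$; (ii) for the residual sum the paper does not split at $K/2$ but uses $\sum_{\ell\ge k}1/\sqrt{\ell}\ge 2(\sqrt{K+1}-\sqrt{k})$, partial fractions $\frac{1}{k(K+1-k)}=\frac{1}{K+1}(\frac{1}{k}+\frac{1}{K+1-k})$, and then the rearrangement inequality to pair the decreasing weight $1/(1+\log k)^{1+\delta}$ with $1/k$ rather than $1/(K+1-k)$, which is what pins the constant at exactly $\frac{3c}{1-c}$.
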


\begin{proof}
We first invoke Lemma \ref{lem:lip-core} to obtain
\begin{align*}
F(\bx_{K+1})-F(\bx_{*}) & \leq\frac{\left\Vert \bx_{*}-\bx_{1}\right\Vert ^{2}-\left\Vert \bx_{*}-\bx_{K+1}\right\Vert ^{2}}{2n\sum_{k=1}^{K}\gamma_{k}}+3\bG^{2}n\sum_{k=1}^{K}\frac{\gamma_{k}\eta_{k}}{\sum_{\ell=k}^{K}\gamma_{\ell}}\\
 & =\frac{\left\Vert \bx_{*}-\bx_{1}\right\Vert ^{2}-\left\Vert \bx_{*}-\bx_{K+1}\right\Vert ^{2}}{2n\sum_{k=1}^{K}\eta_{k}}+3\bG^{2}n\sum_{k=1}^{K}\frac{\eta_{k}^{2}}{\sum_{\ell=k}^{K}\eta_{\ell}},
\end{align*}
where the last equation holds due to $\gamma_{k}=\eta_{k}\prod_{\ell=2}^{k}(1+n\eta_{\ell-1}\mu_{\psi})=\eta_{k}$
when $\mu_{\psi}=0$. Now if $\left\Vert \bx_{*}-\bx_{1}\right\Vert \geq\left\Vert \bx_{*}-\bx_{K+1}\right\Vert $,
we have 
\begin{align*}
\left\Vert \bx_{*}-\bx_{1}\right\Vert ^{2}-\left\Vert \bx_{*}-\bx_{K+1}\right\Vert ^{2} & =\left(\left\Vert \bx_{*}-\bx_{1}\right\Vert -\left\Vert \bx_{*}-\bx_{K+1}\right\Vert \right)\left(\left\Vert \bx_{*}-\bx_{1}\right\Vert +\left\Vert \bx_{*}-\bx_{K+1}\right\Vert \right)\\
 & \leq2\left\Vert \bx_{1}-\bx_{K+1}\right\Vert \left\Vert \bx_{*}-\bx_{1}\right\Vert \leq2r_{K+1}\left\Vert \bx_{*}-\bx_{1}\right\Vert ,
\end{align*}
where we use $r_{K+1}=r\lor\max_{k\in\left[K+1\right]}\left\Vert \bx_{1}-\bx_{k}\right\Vert \geq\left\Vert \bx_{1}-\bx_{K+1}\right\Vert $
in the last step. If $\left\Vert \bx_{*}-\bx_{1}\right\Vert <\left\Vert \bx_{*}-\bx_{K+1}\right\Vert $,
there is still $\left\Vert \bx_{*}-\bx_{1}\right\Vert ^{2}-\left\Vert \bx_{*}-\bx_{K+1}\right\Vert ^{2}\leq0\leq2r_{K+1}\left\Vert \bx_{*}-\bx_{1}\right\Vert $.
Hence, we always have
\[
\left\Vert \bx_{*}-\bx_{1}\right\Vert ^{2}-\left\Vert \bx_{*}-\bx_{K+1}\right\Vert ^{2}\leq2r_{K+1}\left\Vert \bx_{*}-\bx_{1}\right\Vert .
\]
Now there is
\begin{align}
F(\bx_{K+1})-F(\bx_{*}) & \leq\frac{r_{K+1}\left\Vert \bx_{*}-\bx_{1}\right\Vert }{n\sum_{k=1}^{K}\eta_{k}}+3\bG^{2}n\sum_{k=1}^{K}\frac{\eta_{k}^{2}}{\sum_{\ell=k}^{K}\eta_{\ell}}\nonumber \\
 & =\frac{r_{K+1}\left\Vert \bx_{*}-\bx_{1}\right\Vert }{n\sum_{k=1}^{K}r_{k}\widetilde{\eta}_{k}}+3\bG^{2}n\sum_{k=1}^{K}\frac{r_{k}^{2}\widetilde{\eta}_{k}^{2}}{\sum_{\ell=k}^{K}r_{\ell}\widetilde{\eta}_{\ell}}\nonumber \\
 & \overset{(a)}{\leq}\frac{r_{K+1}\left\Vert \bx_{*}-\bx_{1}\right\Vert }{n\sum_{k=1}^{K}r_{k}\widetilde{\eta}_{k}}+3r_{K}\bG^{2}n\sum_{k=1}^{K}\frac{\widetilde{\eta}_{k}^{2}}{\sum_{\ell=k}^{K}\widetilde{\eta}_{\ell}}\nonumber \\
 & \overset{(b)}{\leq}\frac{r_{K+1}}{\sum_{k=1}^{K}r_{k}}\cdot\frac{\left\Vert \bx_{*}-\bx_{1}\right\Vert }{n\widetilde{\eta}_{K}}+3r_{K}\bG^{2}n\sum_{k=1}^{K}\frac{\widetilde{\eta}_{k}^{2}}{\sum_{\ell=k}^{K}\widetilde{\eta}_{\ell}},\label{eq:lip-ada-full-1}
\end{align}
where $(a)$ is due to $\frac{r_{k}^{2}\widetilde{\eta}_{k}^{2}}{\sum_{\ell=k}^{K}r_{\ell}\widetilde{\eta}_{\ell}}\leq\frac{r_{k}\widetilde{\eta}_{k}^{2}}{\sum_{\ell=k}^{K}\widetilde{\eta}_{\ell}}\leq\frac{r_{K}\widetilde{\eta}_{k}^{2}}{\sum_{\ell=k}^{K}\widetilde{\eta}_{\ell}},\forall k\in\left[K\right]$
because $r_{k}$ is non-decreasing and $(b)$ is by $\widetilde{\eta}_{K}\leq\widetilde{\eta}_{k},\forall k\in\left[K\right]$
because $\widetilde{\eta}_{k}$ is non-increasing.

From $\widetilde{\eta}_{k}=\frac{c}{n\bG\sqrt{6(1+\delta^{-1})k(1+\log k)^{1+\delta}}}$,
we have
\begin{align}
\sum_{k=1}^{K}\frac{\widetilde{\eta}_{k}^{2}}{\sum_{\ell=k}^{K}\widetilde{\eta}_{\ell}} & =\frac{c}{n\bG\sqrt{6(1+\delta^{-1})}}\sum_{k=1}^{K}\frac{1}{k(1+\log k)^{1+\delta}\sum_{\ell=k}^{K}1/\sqrt{\ell(1+\log\ell)^{1+\delta}}}\nonumber \\
 & \leq\frac{c}{n\bG}\sqrt{\frac{(1+\log K)^{1+\delta}}{6(1+\delta^{-1})}}\sum_{k=1}^{K}\frac{1}{k(1+\log k)^{1+\delta}\sum_{\ell=k}^{K}1/\sqrt{\ell}}\nonumber \\
 & \overset{(c)}{\leq}\frac{c}{n\bG}\sqrt{\frac{(1+\log K)^{1+\delta}}{6(1+\delta^{-1})}}\sum_{k=1}^{K}\frac{1}{2k(1+\log k)^{1+\delta}(\sqrt{K+1}-\sqrt{k})}\nonumber \\
 & \overset{(d)}{\leq}\frac{c}{n\bG}\sqrt{\frac{(1+\log K)^{1+\delta}}{6(1+\delta^{-1})(K+1)}}\sum_{k=1}^{K}\frac{1}{(1+\log k)^{1+\delta}}\left(\frac{1}{k}+\frac{1}{K+1-k}\right)\nonumber \\
 & \overset{(e)}{\leq}\frac{c}{n\bG}\sqrt{\frac{(1+\log K)^{1+\delta}}{6(1+\delta^{-1})(K+1)}}\sum_{k=1}^{K}\frac{2}{k(1+\log k)^{1+\delta}}\nonumber \\
 & \overset{(f)}{\leq}\frac{c}{n\bG}\sqrt{\frac{2(1+\delta^{-1})(1+\log K)^{1+\delta}}{3K}},\label{eq:lip-ada-full-2}
\end{align}
where $(c)$ is by $\sum_{\ell=k}^{K}1/\sqrt{\ell}\geq\int_{k}^{K+1}1/\sqrt{\ell}\mathrm{d}\ell=2\left(\sqrt{K+1}-\sqrt{k}\right)$
and $(d)$ is due to
\[
\frac{1}{2k(\sqrt{K+1}-\sqrt{k})}=\frac{\sqrt{K+1}+\sqrt{k}}{2k(K+1-k)}\leq\frac{\sqrt{K+1}}{k(K+1-k)}=\frac{1}{\sqrt{K+1}}\left(\frac{1}{k}+\frac{1}{K+1-k}\right).
\]
$(e)$ is by noticing that both $\frac{1}{k}$ and $\frac{1}{(1+\log k)^{1+\delta}}$
are decreasing sequences, then by rearrangement inequality
\[
\sum_{k=1}^{K}\frac{1}{(1+\log k)^{1+\delta}}\times\frac{1}{K+1-k}\leq\sum_{k=1}^{K}\frac{1}{(1+\log k)^{1+\delta}}\times\frac{1}{k}.
\]
$(f)$ is because
\begin{align}
\sum_{k=1}^{K}\frac{1}{k(1+\log k)^{1+\delta}} & \leq\sum_{k=1}^{\infty}\frac{1}{k(1+\log k)^{1+\delta}}\leq1+\int_{1}^{\infty}\frac{1}{k(1+\log k)^{1+\delta}}\mathrm{d}k\nonumber \\
 & =1+\left(-\delta^{-1}(1+\log k)^{-\delta}\right)\vert_{1}^{\infty}=1+\delta^{-1}.\label{eq:lip-ada-full-ub}
\end{align}

Next let us check the conditions in Lemma \ref{lem:lip-ada-distance}.
For Condition \ref{enu:condition-1}, we notice that $\widetilde{\eta}_{k}$
is positive defined on $\N$ from its definition. For Condition \ref{enu:condition-2},
we compute
\[
\sum_{k=1}^{\infty}6\bG^{2}n^{2}\widetilde{\eta}_{k}^{2}=\frac{c^{2}}{1+\delta^{-1}}\sum_{k=1}^{\infty}\frac{1}{k(1+\log k)^{1+\delta}}\overset{\eqref{eq:lip-ada-full-ub}}{\leq}c^{2}<1.
\]
Then we can invoke Lemma \ref{lem:lip-ada-distance} to get
\[
\left\Vert \bx_{k}-\bx_{1}\right\Vert \leq\frac{2}{1-c}\left\Vert \bx_{*}-\bx_{1}\right\Vert +\frac{c}{1-c}r,\forall k\in\N,
\]
which implies
\begin{align}
r_{K} & =r\lor\max_{k\in\left[K\right]}\left\Vert \bx_{k}-\bx_{1}\right\Vert \leq r+\frac{2}{1-c}\left\Vert \bx_{*}-\bx_{1}\right\Vert +\frac{c}{1-c}r\nonumber \\
 & =\frac{2\left\Vert \bx_{*}-\bx_{1}\right\Vert +r}{1-c}\leq\frac{3}{1-c}\left(\left\Vert \bx_{*}-\bx_{1}\right\Vert \lor r\right).\label{eq:lip-ada-full-3}
\end{align}

Plugging (\ref{eq:lip-ada-full-2}) and (\ref{eq:lip-ada-full-3})
into (\ref{eq:lip-ada-full-1}) to get
\begin{align}
F(\bx_{K+1})-F(\bx_{*})\leq & \frac{r_{K+1}}{\sum_{k=1}^{K}r_{k}}\cdot\frac{\left\Vert \bx_{*}-\bx_{1}\right\Vert }{n\widetilde{\eta}_{K}}+\frac{3c}{1-c}\bG\left(\left\Vert \bx_{*}-\bx_{1}\right\Vert \lor r\right)\sqrt{\frac{6(1+\delta^{-1})(1+\log K)^{1+\delta}}{K}}\nonumber \\
= & \frac{r_{K+1}}{\sum_{k=1}^{K}r_{k}}\cdot\frac{\left\Vert \bx_{*}-\bx_{1}\right\Vert \bG\sqrt{6(1+\delta^{-1})K(1+\log K)^{1+\delta}}}{c}\nonumber \\
 & +\frac{3c}{1-c}\bG\left(\left\Vert \bx_{*}-\bx_{1}\right\Vert \lor r\right)\sqrt{\frac{6(1+\delta^{-1})(1+\log K)^{1+\delta}}{K}}\nonumber \\
\leq & \left(\frac{r_{K+1}}{c\sum_{k=1}^{K}r_{k}/K}+\frac{3c}{1-c}\right)\bG\left(\left\Vert \bx_{*}-\bx_{1}\right\Vert \lor r\right)\sqrt{\frac{6(1+\delta^{-1})(1+\log K)^{1+\delta}}{K}},\label{eq:lip-ada-full-4}
\end{align}
where we use the definition of $\widetilde{\eta}_{k}=\frac{c}{n\bG\sqrt{6(1+\delta^{-1})k(1+\log k)^{1+\delta}}}$
in the second line.

Finally, note that $r_{k}$ is non-decreasing with a uniform upper
bound $\frac{3}{1-c}\left(\left\Vert \bx_{*}-\bx_{1}\right\Vert \lor r\right)$
by (\ref{eq:lip-ada-full-3}). Hence, there exists $\widetilde{r}=\lim_{k\to\infty}r_{k}<\infty$,
which implies
\[
\lim_{K\to\infty}\frac{r_{K+1}}{\sum_{k=1}^{K}r_{k}/K}=\frac{\widetilde{r}}{\widetilde{r}}=1.
\]
Thus, for large enough $K$, $\frac{r_{K+1}}{\sum_{k=1}^{K}r_{k}/K}=\mathcal{O}(1)$.
Combining (\ref{eq:lip-ada-full-4}), the following bound holds asymptotically
\[
F(\bx_{K+1})-F(\bx_{*})\leq\mathcal{O}\left(\frac{3c^{2}-c+1}{c(1-c)}\bG\left(\left\Vert \bx_{*}-\bx_{1}\right\Vert \lor r\right)\sqrt{\frac{(1+\delta^{-1})(1+\log K)^{1+\delta}}{K}}\right).
\]
\end{proof}

Finally, we turn the attention to the case of strongly convex $\psi(\bx)$.
Instead of $\eta_{k}=\frac{2}{n\mu_{\psi}k}$ presented in Theorem
\ref{thm:lip-str-main}, we show that $\eta_{k}=\frac{m}{n\mu_{\psi}k}$
for any $m\in\N$ guarantees the last-iterate convergence rate $O(m\log K/K)$.
\begin{thm}
\label{thm:lip-str-full}(Full version of Theorem \ref{thm:lip-str-main})
Under Assumptions \ref{assu:minimizer}, \ref{assu:basic} (with $\mu_{\psi}>0$)
and \ref{assu:lip}, regardless of how the permutation $\sigma_{k}$
is generated in every epoch, taking the stepsize $\eta_{k}=\frac{m}{n\mu_{\psi}k},\forall k\in\left[K\right]$
where $m\in\N$ can be chosen arbitrarily, Algorithm \ref{alg:shuffling-SGD}
guarantees
\[
F(\bx_{K+1})-F_{*}\leq\O\left(\frac{\mu_{\psi}D^{2}}{\binom{K+m}{m}}+\frac{\bG^{2}m\log K}{\mu_{\psi}K}\right).
\]
\end{thm}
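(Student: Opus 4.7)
}
The approach is to apply Lemma \ref{lem:lip-core} exactly as in the proof of Theorem \ref{thm:lip-cvx-full}, but to exploit $\mu_\psi>0$ to get much faster growth of the weights $\gamma_k=\eta_k\prod_{\ell=2}^k(1+n\eta_{\ell-1}\mu_\psi)$. The lemma yields
\[
F(\bx_{K+1})-F_*\leq\frac{\|\bx_*-\bx_1\|^2}{2n\sum_{k=1}^K\gamma_k}+3\bG^2 n\sum_{k=1}^K\frac{\gamma_k\eta_k}{\sum_{\ell=k}^K\gamma_\ell},
\]
so the whole task reduces to computing $\gamma_k$ explicitly and bounding the two sums.

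The first step is to observe that $\eta_k=\tfrac{m}{n\mu_\psi k}$ makes the product telescope: $1+n\eta_{\ell-1}\mu_\psi=\tfrac{\ell-1+m}{\ell-1}$, so $\prod_{\ell=2}^k(1+n\eta_{\ell-1}\mu_\psi)=\binom{k-1+m}{m}$. Combining this with the identity $\tfrac{m}{k}\binom{k-1+m}{m}=\binom{k-1+m}{m-1}$ yields the clean closed form
\[
\gamma_k=\frac{1}{n\mu_\psi}\binom{k-1+m}{m-1}.
\]
The hockey-stick identity $\sum_{j=m-1}^n\binom{j}{m-1}=\binom{n+1}{m}$ then gives
\[
\sum_{k=1}^K\gamma_k=\frac{\binom{K+m}{m}-1}{n\mu_\psi},\qquad \sum_{\ell=k}^K\gamma_\ell=\frac{\binom{K+m}{m}-\binom{k-1+m}{m}}{n\mu_\psi},
\]
so the first term in the Lemma \ref{lem:lip-core} bound is immediately $\O(\mu_\psi D^2/\binom{K+m}{m})$, which is the first term in the claim.

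The main obstacle is the second term, which equals $\tfrac{3\bG^2 m}{\mu_\psi}\sum_{k=1}^K\tfrac{\binom{k-1+m}{m-1}}{k(\binom{K+m}{m}-\binom{k-1+m}{m})}$. The key observation is that $\binom{\ell-1+m}{m-1}$ is non-decreasing in $\ell$, so
\[
\binom{K+m}{m}-\binom{k-1+m}{m}=\sum_{\ell=k}^K\binom{\ell-1+m}{m-1}\;\geq\;(K-k+1)\binom{k-1+m}{m-1}.
\]
This cancels the binomial in the numerator and reduces the sum to $\sum_{k=1}^K\frac{1}{k(K-k+1)}$. Using the partial-fraction identity $\tfrac{1}{k(K-k+1)}=\tfrac{1}{K+1}\bigl(\tfrac{1}{k}+\tfrac{1}{K-k+1}\bigr)$ (exactly as in the proof of Theorem \ref{thm:lip-cvx-full}) bounds this by $\tfrac{2(1+\log K)}{K+1}$, so the second term is $\O(\bG^2 m\log K/(\mu_\psi K))$. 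Combining with the first term gives the stated rate. The only non-routine step is spotting the monotonicity trick above; everything else is algebra with binomial coefficients and the hockey-stick identity.
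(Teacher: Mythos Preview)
Your proposal is correct and follows essentially the same route as the paper: invoke Lemma \ref{lem:lip-core}, compute $\gamma_k=\frac{1}{n\mu_\psi}\binom{k-1+m}{m-1}$ from the telescoping product, apply the hockey-stick identity for the first term, and for the second term use the monotonicity $\binom{\ell-1+m}{m-1}\geq\binom{k-1+m}{m-1}$ to reduce to $\sum_{k=1}^K\frac{1}{k(K-k+1)}$, which is handled by partial fractions. The paper's proof is line-for-line the same argument.
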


\begin{proof}
We invoke Lemma \ref{lem:lip-core} to get
\begin{align*}
F(\bx_{K+1})-F(\bx_{*}) & \leq\frac{\left\Vert \bx_{*}-\bx_{1}\right\Vert ^{2}-\left\Vert \bx_{*}-\bx_{K+1}\right\Vert ^{2}}{2n\sum_{k=1}^{K}\gamma_{k}}+3\bG^{2}n\sum_{k=1}^{K}\frac{\gamma_{k}\eta_{k}}{\sum_{\ell=k}^{K}\gamma_{\ell}}\\
 & \leq\frac{\left\Vert \bx_{*}-\bx_{1}\right\Vert ^{2}}{2n\sum_{k=1}^{K}\gamma_{k}}+3\bG^{2}n\sum_{k=1}^{K}\frac{\gamma_{k}\eta_{k}}{\sum_{\ell=k}^{K}\gamma_{\ell}},
\end{align*}
where $\gamma_{k}=\eta_{k}\prod_{\ell=2}^{k}(1+n\eta_{\ell-1}\mu_{\psi}),\forall k\in\left[K\right]$.
When $\eta_{k}=\frac{m}{n\mu_{\psi}k},\forall k\in\left[K\right]$,
we have
\[
\gamma_{k}=\frac{m}{n\mu_{\psi}k}\prod_{\ell=2}^{k}\frac{\ell-1+m}{\ell-1}=\frac{1}{n\mu_{\psi}}\cdot\frac{(k+m-1)!}{(m-1)!k!}=\frac{\binom{k+m-1}{m-1}}{n\mu_{\psi}}.
\]
Thus
\begin{align*}
F(\bx_{K+1})-F(\bx_{*}) & \leq\frac{\mu_{\psi}\left\Vert \bx_{*}-\bx_{1}\right\Vert ^{2}}{2\sum_{k=1}^{K}\binom{k+m-1}{m-1}}+\frac{3\bG^{2}m}{\mu_{\psi}}\sum_{k=1}^{K}\frac{\binom{k+m-1}{m-1}}{k\sum_{\ell=k}^{K}\binom{\ell+m-1}{m-1}}\\
 & \overset{(a)}{=}\frac{\mu_{\psi}\left\Vert \bx_{*}-\bx_{1}\right\Vert ^{2}}{2\left(\binom{K+m}{m}-1\right)}+\frac{3\bG^{2}m}{\mu_{\psi}}\sum_{k=1}^{K}\frac{\binom{k+m-1}{m-1}}{k\sum_{\ell=k}^{K}\binom{\ell+m-1}{m-1}}\\
 & \overset{(b)}{\leq}\frac{\mu_{\psi}\left\Vert \bx_{*}-\bx_{1}\right\Vert ^{2}}{2\left(\binom{K+m}{m}-1\right)}+\frac{3\bG^{2}m}{\mu_{\psi}}\sum_{k=1}^{K}\frac{1}{k(K-k+1)}\\
 & =\frac{\mu_{\psi}\left\Vert \bx_{*}-\bx_{1}\right\Vert ^{2}}{2\left(\binom{K+m}{m}-1\right)}+\frac{3\bG^{2}m}{\mu_{\psi}(K+1)}\sum_{k=1}^{K}\frac{1}{k}+\frac{1}{K-k+1}\\
 & =\O\left(\frac{\mu_{\psi}\left\Vert \bx_{*}-\bx_{1}\right\Vert ^{2}}{\binom{K+m}{m}}+\frac{\bG^{2}m\log K}{\mu_{\psi}K}\right).
\end{align*}
where $(a)$ is due to
\[
\sum_{k=1}^{K}\binom{k+m-1}{m-1}=\sum_{k=1}^{K}\binom{k+m}{m}-\binom{k+m-1}{m}=\binom{K+m}{m}-1,
\]
and $(b)$ is by $\binom{\ell+m-1}{m-1}\geq\binom{k+m-1}{m-1},\forall\ell\geq k$.
\end{proof}

\section{Theoretical Analysis\label{sec:lemmas}}

In this section, we provide our analysis and several lemmas.

\subsection{Core Lemmas}

This subsection contains three core lemmas that play fundamental roles
in the analysis.

Inspired by \cite{mishchenko2020random}, we first introduce the following
descent inequality, which provides the progress on the objective value
in one epoch. We emphasize that Lemma \ref{lem:core} holds for any
point $\bz\in\R^{d}$ rather than only $\bx_{*}$. This important
fact ensures that we can use the key idea proposed by \cite{zamani2023exact}
and later developed by \cite{liu2024revisiting}. However, some barriers
would appear if we apply the technique used in \cite{zamani2023exact}
directly. Hence, our analysis departs from the previous paper and
needs new tools, e.g., Lemma \ref{lem:ineq}. The reader could refer
to Lemmas \ref{lem:smooth-core} and \ref{lem:lip-core} for how our
analysis proceeds.
\begin{lem}
\label{lem:core}Under Assumption \ref{assu:basic}, for any $k\in\left[K\right]$,
permutation $\sigma_{k}$ and $\bz\in\R^{d}$, Algorithm \ref{alg:shuffling-SGD}
guarantees 
\begin{align*}
F(\bx_{k+1})-F(\bz)\leq & \frac{\left\Vert \bz-\bx_{k}\right\Vert ^{2}}{2n\eta_{k}}-\left(\frac{1}{\eta_{k}}+n\mu_{\psi}\right)\frac{\left\Vert \bz-\bx_{k+1}\right\Vert ^{2}}{2n}-\frac{\left\Vert \bx_{k+1}-\bx_{k}\right\Vert ^{2}}{2n\eta_{k}}\\
 & +\frac{1}{n}\sum_{i=1}^{n}B_{\sigma_{k}^{i}}(\bx_{k+1},\bx_{k}^{i})-B_{\sigma_{k}^{i}}(\bz,\bx_{k}^{i}).
\end{align*}
\end{lem}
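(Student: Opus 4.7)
The plan is to combine per-step convexity inequalities for the inner loop with a strong-convexity inequality for the outer proximal step, using Bregman divergence to keep the expression exact (rather than applying smoothness right away). Let me work out the ingredients before assembling them.

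First, for each $i \in [n]$, by the definition of the Bregman divergence,
\[
f_{\sigma_k^i}(\bx_{k+1}) - f_{\sigma_k^i}(\bz) = B_{\sigma_k^i}(\bx_{k+1},\bx_k^i) - B_{\sigma_k^i}(\bz,\bx_k^i) + \langle \nabla f_{\sigma_k^i}(\bx_k^i), \bx_{k+1} - \bz\rangle.
\]
I average this identity over $i=1,\dots,n$. On the right, the first two terms already match the Bregman terms in the lemma. For the inner product, I use the inner-loop update rule, which gives $\eta_k\nabla f_{\sigma_k^i}(\bx_k^i) = \bx_k^i - \bx_k^{i+1}$; telescoping yields
\[
\tfrac{1}{n}\sum_{i=1}^n \langle \nabla f_{\sigma_k^i}(\bx_k^i), \bx_{k+1}-\bz\rangle = \tfrac{1}{n\eta_k}\langle \bx_k - \bx_k^{n+1}, \bx_{k+1} - \bz\rangle,
\]
since $\bx_k^1 = \bx_k$.

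Next I handle $\psi(\bx_{k+1}) - \psi(\bz)$. The first-order optimality condition for the proximal subproblem shows that $\nabla\psi(\bx_{k+1}) \triangleq \frac{\bx_k^{n+1}-\bx_{k+1}}{n\eta_k}$ is a valid subgradient. By Assumption \ref{assu:basic} (strong convexity of $\psi$ with parameter $\mu_\psi \geq 0$),
\[
\psi(\bx_{k+1}) - \psi(\bz) \leq \tfrac{1}{n\eta_k}\langle \bx_k^{n+1} - \bx_{k+1}, \bx_{k+1} - \bz\rangle - \tfrac{\mu_\psi}{2}\|\bz - \bx_{k+1}\|^2.
\]
Adding this to the averaged $f$-inequality, the two inner products collapse to $\tfrac{1}{n\eta_k}\langle \bx_k - \bx_{k+1}, \bx_{k+1} - \bz\rangle$, and the $\bx_k^{n+1}$ term disappears — this cancellation is the point of the proof.

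Finally I apply the three-point identity $\langle a-b, b-c\rangle = \tfrac{1}{2}(\|a-c\|^2 - \|a-b\|^2 - \|b-c\|^2)$ with $a=\bx_k$, $b=\bx_{k+1}$, $c=\bz$ to rewrite that inner product as a difference of squared norms. Combining the $-\|\bz-\bx_{k+1}\|^2/(2n\eta_k)$ produced this way with the $-\tfrac{\mu_\psi}{2}\|\bz-\bx_{k+1}\|^2$ term yields exactly the coefficient $-(\tfrac{1}{\eta_k}+n\mu_\psi)/(2n)$ in front of $\|\bz-\bx_{k+1}\|^2$, while the other two squared-norm terms match those in the claim.

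The proof is essentially a bookkeeping exercise, so there is no single ``hard step''; the one delicate point is ensuring that the proximal-step subgradient and the telescoped inner-loop gradients combine into a single inner product $\langle \bx_k - \bx_{k+1}, \bx_{k+1}-\bz\rangle$ so that the three-point identity applies cleanly. Keeping the inner-loop contributions in Bregman form (rather than invoking smoothness) is what allows the lemma to be stated in this generality, so that $\bz$ can later be taken to be an auxiliary point $\bz_k$ as outlined in Section \ref{sec:idea}.
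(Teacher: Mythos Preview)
Your proof is correct and follows essentially the same approach as the paper: both use the Bregman identity for $f_{\sigma_k^i}$, the first-order optimality condition for the proximal step together with strong convexity of $\psi$, and the three-point identity, combining them so that the intermediate point $\bx_k^{n+1}$ cancels. The only minor difference is presentation order (the paper computes $\langle \bg_k,\bx_{k+1}-\bz\rangle$ in two ways and equates them, whereas you add the $f$- and $\psi$-inequalities directly), and the paper explicitly notes at the outset that it suffices to take $\bz\in\mathrm{dom}\,\psi$ since otherwise $F(\bz)=+\infty$ and the inequality is trivial.
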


\begin{proof}
It is enough to only consider the case $\bz\in\mathrm{dom}\psi$.
Let $\bg_{k}\triangleq\sum_{i=1}^{n}\nabla f_{\sigma_{k}^{i}}(\bx_{k}^{i})$,
then $\bx_{k}^{n+1}=\bx_{k}^{1}-\eta_{k}\bg_{k}=\bx_{k}-\eta_{k}\bg_{k}$.
Observe that
\[
\bx_{k+1}=\argmin_{\bx\in\R^{d}}n\psi(\bx)+\frac{\left\Vert \bx-\bx_{k}^{n+1}\right\Vert ^{2}}{2\eta_{k}}=\argmin_{\bx\in\R^{d}}n\psi(\bx)+\langle\bg_{k},\bx-\bx_{k}\rangle+\frac{\left\Vert \bx-\bx_{k}\right\Vert ^{2}}{2\eta_{k}}.
\]
By the first-order optimality condition, there exists $\nabla\psi(\bx_{k+1})\in\partial\psi(\bx_{k+1})$
such that
\[
\mathbf{0}=n\nabla\psi(\bx_{k+1})+\bg_{k}+\frac{\bx_{k+1}-\bx_{k}}{\eta_{k}}.
\]
Therefore for any $\bz\in\mathrm{dom}\psi$
\begin{align}
\langle\bg_{k},\bx_{k+1}-\bz\rangle & =n\langle\nabla\psi(\bx_{k+1}),\bz-\bx_{k+1}\rangle+\frac{\langle\bx_{k}-\bx_{k+1},\bx_{k+1}-\bz\rangle}{\eta_{k}}\nonumber \\
 & \overset{(a)}{\leq}n\left(\psi(\bz)-\psi(\bx_{k+1})-\frac{\mu_{\psi}}{2}\left\Vert \bz-\bx_{k+1}\right\Vert ^{2}\right)+\frac{\langle\bx_{k}-\bx_{k+1},\bx_{k+1}-\bz\rangle}{\eta_{k}}\nonumber \\
 & =n\left(\psi(\bz)-\psi(\bx_{k+1})-\frac{\mu_{\psi}}{2}\left\Vert \bz-\bx_{k+1}\right\Vert ^{2}\right)+\frac{\left\Vert \bz-\bx_{k}\right\Vert ^{2}-\left\Vert \bz-\bx_{k+1}\right\Vert ^{2}-\left\Vert \bx_{k+1}-\bx_{k}\right\Vert ^{2}}{2\eta_{k}}\nonumber \\
 & =n\left(\psi(\bz)-\psi(\bx_{k+1})\right)+\frac{\left\Vert \bz-\bx_{k}\right\Vert ^{2}}{2\eta_{k}}-\left(\frac{1}{\eta_{k}}+n\mu_{\psi}\right)\frac{\left\Vert \bz-\bx_{k+1}\right\Vert ^{2}}{2}-\frac{\left\Vert \bx_{k+1}-\bx_{k}\right\Vert ^{2}}{2\eta_{k}},\label{eq:core-1}
\end{align}
where $(a)$ is by the strong convexity of $\psi$. Note that (recall
$B_{\sigma_{k}^{i}}\triangleq B_{f_{\sigma_{k}^{i}}}$)
\[
\langle\nabla f_{\sigma_{k}^{i}}(\bx_{k}^{i}),\bx_{k+1}-\bz\rangle=f_{\sigma_{k}^{i}}(\bx_{k+1})-f_{\sigma_{k}^{i}}(\bz)-B_{\sigma_{k}^{i}}(\bx_{k+1},\bx_{k}^{i})+B_{\sigma_{k}^{i}}(\bz,\bx_{k}^{i}),
\]
which implies 
\begin{align}
\langle\bg_{k},\bx_{k+1}-\bz\rangle & =\sum_{i=1}^{n}\langle\nabla f_{\sigma_{k}^{i}}(\bx_{k}^{i}),\bx_{k+1}-\bz\rangle=\sum_{i=1}^{n}f_{\sigma_{k}^{i}}(\bx_{k+1})-f_{\sigma_{k}^{i}}(\bz)-B_{\sigma_{k}^{i}}(\bx_{k+1},\bx_{k}^{i})+B_{\sigma_{k}^{i}}(\bz,\bx_{k}^{i})\nonumber \\
 & =n\left(f(\bx_{k+1})-f(\bz)\right)-\sum_{i=1}^{n}B_{\sigma_{k}^{i}}(\bx_{k+1},\bx_{k}^{i})-B_{\sigma_{k}^{i}}(\bz,\bx_{k}^{i}).\label{eq:core-2}
\end{align}
Plugging (\ref{eq:core-2}) into (\ref{eq:core-1}) and rearranging
the terms to obtain
\begin{align*}
F(\bx_{k+1})-F(\bz)\leq & \frac{\left\Vert \bz-\bx_{k}\right\Vert ^{2}}{2n\eta_{k}}-\left(\frac{1}{\eta_{k}}+n\mu_{\psi}\right)\frac{\left\Vert \bz-\bx_{k+1}\right\Vert ^{2}}{2n}-\frac{\left\Vert \bx_{k+1}-\bx_{k}\right\Vert ^{2}}{2n\eta_{k}}\\
 & +\frac{1}{n}\sum_{i=1}^{n}B_{\sigma_{k}^{i}}(\bx_{k+1},\bx_{k}^{i})-B_{\sigma_{k}^{i}}(\bz,\bx_{k}^{i}).
\end{align*}
\end{proof}

Note that Lemma \ref{lem:core} includes a term $\frac{1}{n}\sum_{i=1}^{n}B_{\sigma_{k}^{i}}(\bx_{k+1},\bx_{k}^{i})-B_{\sigma_{k}^{i}}(\bz,\bx_{k}^{i})$,
which can be bounded easily when $n=1$. However, for a general $n$,
we need to make extra efforts to find a proper bound on it.

In Lemma \ref{lem:smooth-res}, we first provide the bound on the
extra term for smooth functions. There are several points we want
to discuss here. First, we need the stepsize to satisfy $\eta_{k}\leq\frac{1}{2n\sqrt{\bL L^{*}}}$.
Similar requirements also existed in previous works (e.g., \cite{mishchenko2020random,pmlr-v202-cha23a}).
Besides the normal term $\bL\left\Vert \bx_{k+1}-\bx_{k}\right\Vert ^{2}$,
there are two unusual quantities appearing in the R.H.S. of the inequality.
One is the term $B_{f}(\bz,\bx_{*})$, which leads to the main difficulty
in the analysis (see Lemma \ref{lem:smooth-core} for details). The
other is the residual $R_{k}$. Fortunately, it can bounded as shown
in Lemma \ref{lem:var} in Section \ref{sec:tech}.
\begin{lem}
\label{lem:smooth-res}Under Assumptions \ref{assu:minimizer}, \ref{assu:basic}
and \ref{assu:smooth}, suppose $\eta_{k}\leq\frac{1}{2n\sqrt{\bL L^{*}}},\forall k\in\left[K\right]$,
then for any $k\in\left[K\right]$, permutation $\sigma_{k}$ and
$\bz\in\R^{d}$, Algorithm \ref{alg:shuffling-SGD} guarantees 
\[
\frac{1}{n}\sum_{i=1}^{n}B_{\sigma_{k}^{i}}(\bx_{k+1},\bx_{k}^{i})-B_{\sigma_{k}^{i}}(\bz,\bx_{k}^{i})\leq\bL\left\Vert \bx_{k+1}-\bx_{k}\right\Vert ^{2}+8\eta_{k}^{2}n^{2}\bL^{2}B_{f}(\bz,\bx_{*})+4\eta_{k}^{2}R_{k},
\]
where $R_{k}\triangleq\sum_{i=2}^{n}\frac{L_{\sigma_{k}^{i}}}{n}\left\Vert \sum_{j=1}^{i-1}\nabla f_{\sigma_{k}^{j}}(\bx_{*})\right\Vert ^{2},\forall k\in\left[K\right]$.
\end{lem}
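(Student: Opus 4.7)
I would prove the lemma by handling the two halves of the LHS separately. For the positive part, Lemma~\ref{lem:cocore} immediately gives $B_{\sigma_{k}^{i}}(\bx_{k+1},\bx_{k}^{i}) \leq \tfrac{L_{\sigma_{k}^{i}}}{2}\|\bx_{k+1}-\bx_{k}^{i}\|^{2}$, and the elementary split $\|\bx_{k+1}-\bx_{k}^{i}\|^{2} \leq 2\|\bx_{k+1}-\bx_{k}\|^{2}+2\|\bx_{k}-\bx_{k}^{i}\|^{2}$, averaged with the weights $L_{\sigma_{k}^{i}}/n$ and using $\tfrac{1}{n}\sum_{i}L_{\sigma_{k}^{i}}=\bL$, already peels off the claimed $\bL\|\bx_{k+1}-\bx_{k}\|^{2}$ term. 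What remains is to bound the intra-epoch residual $\tfrac{1}{n}\sum_{i}L_{\sigma_{k}^{i}}\|\bx_{k}-\bx_{k}^{i}\|^{2}$, while using the still-intact negative contribution $-\tfrac{1}{n}\sum_{i}B_{\sigma_{k}^{i}}(\bz,\bx_{k}^{i})$ from the LHS.

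For this residual I would open up $\bx_{k}-\bx_{k}^{i}=-\eta_{k}\sum_{j=1}^{i-1}\nabla f_{\sigma_{k}^{j}}(\bx_{k}^{j})$ and decompose every inner gradient around both \emph{$\bz$} and $\bx_{*}$:
\[
\nabla f_{\sigma_{k}^{j}}(\bx_{k}^{j}) = [\nabla f_{\sigma_{k}^{j}}(\bx_{k}^{j})-\nabla f_{\sigma_{k}^{j}}(\bz)] + [\nabla f_{\sigma_{k}^{j}}(\bz)-\nabla f_{\sigma_{k}^{j}}(\bx_{*})] + \nabla f_{\sigma_{k}^{j}}(\bx_{*}).
\]
Applying $\|a+b+c\|^{2}\leq 3(\|a\|^{2}+\|b\|^{2}+\|c\|^{2})$ and Cauchy--Schwarz on the first two bracketed sums, the last piece collapses into a constant multiple of $R_{k}$ directly from its definition, which is where the $\eta_{k}^{2}R_{k}$ term in the bound originates. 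The middle piece, after the cocoercivity bound $\|\nabla f_{\sigma_{k}^{j}}(\bz)-\nabla f_{\sigma_{k}^{j}}(\bx_{*})\|^{2}\leq 2L_{\sigma_{k}^{j}}B_{\sigma_{k}^{j}}(\bz,\bx_{*})$ and the combinatorial factors $(i-1)\leq n$ and $\sum_{i}L_{\sigma_{k}^{i}}\leq n\bL$, aggregates into an $\eta_{k}^{2}n^{2}\bL^{2}B_{f}(\bz,\bx_{*})$ contribution, matching the stated bound up to the constant $8$.

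The delicate piece is $\|\nabla f_{\sigma_{k}^{j}}(\bx_{k}^{j})-\nabla f_{\sigma_{k}^{j}}(\bz)\|^{2}$: cocoercivity turns it into $2L_{\sigma_{k}^{j}}B_{\sigma_{k}^{j}}(\bz,\bx_{k}^{j})$, a Bregman divergence taken at the \emph{current} iterate rather than the optimum, and there is no way to convert this into $B_{f}(\bz,\bx_{*})$ for free. My plan here is to lean precisely on the preserved negative term $-\tfrac{1}{n}\sum_{i}B_{\sigma_{k}^{i}}(\bz,\bx_{k}^{i})$: the stepsize hypothesis $\eta_{k}\leq \tfrac{1}{2n\sqrt{\bL L^{*}}}$ is calibrated so that, after the $\eta_{k}^{2}$ prefactor and the double sum over $(i,j)$, the total coefficient of each $B_{\sigma_{k}^{j}}(\bz,\bx_{k}^{j})$ is at most the $\tfrac{1}{n}$ available from the negative part, so these awkward Bregman terms can be cancelled outright.

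I expect the main obstacle to be this final bookkeeping step. Keeping the $L_{\sigma_{k}^{j}}$ factors inside the double sum is essential---if one immediately majorizes them by $L^{*}$, the comparison with $\bL^{2}$ in the claim becomes too loose---so the calculation has to carefully separate the roles of $\bL$ (which appears as $\sum_{i}L_{\sigma_{k}^{i}}=n\bL$) and $L^{*}$ (which only appears through the stepsize constraint via $\eta_{k}^{2}n^{2}\bL L^{*}\leq \tfrac{1}{4}$). Verifying that these two estimates conspire to yield exactly the constants $8$ in front of $n^{2}\bL^{2}B_{f}(\bz,\bx_{*})$ and $4$ in front of $R_{k}$, together with full cancellation of the $B_{\sigma_{k}^{j}}(\bz,\bx_{k}^{j})$ terms by the negative LHS contribution, is the step where the whole argument either closes cleanly or breaks.
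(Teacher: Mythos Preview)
Your approach is essentially the paper's, including the three-way decomposition around $\bz$ and $\bx_{*}$ and the use of the negative Bregman term to absorb the $\|\nabla f_{\sigma_{k}^{j}}(\bx_{k}^{j})-\nabla f_{\sigma_{k}^{j}}(\bz)\|^{2}$ contributions (the paper phrases the cancellation in gradient-norm form by first applying cocoercivity to the \emph{negative} term, but this is equivalent to your plan of applying it to the positive term). You also correctly located the pressure point.

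Two adjustments are needed at that pressure point. First, the symmetric split $\|a+b+c\|^{2}\leq 3(\|a\|^{2}+\|b\|^{2}+\|c\|^{2})$ is too lossy: after the double sum the first piece carries a coefficient $6\eta_{k}^{2}n^{2}\bL L^{*}$ against the available budget $1$, and the stepsize hypothesis only gives $6\eta_{k}^{2}n^{2}\bL L^{*}\leq\tfrac{3}{2}$. The paper uses the asymmetric split $\|a+b+c\|^{2}\leq 2\|a\|^{2}+4\|b\|^{2}+4\|c\|^{2}$, which drops the factor on the cancellable piece to $2$ and makes the comparison exactly $4\eta_{k}^{2}n^{2}\bL L^{*}\leq 1$; this is also what produces the constants $8$ and $4$ in the claim. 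Second, for the middle piece, plain Cauchy--Schwarz followed by cocoercivity leaves $L_{\sigma_{k}^{j}}$ attached to $B_{\sigma_{k}^{j}}(\bz,\bx_{*})$, so the aggregate is $\eta_{k}^{2}n^{2}\bL L^{*}B_{f}(\bz,\bx_{*})$ rather than $\eta_{k}^{2}n^{2}\bL^{2}B_{f}(\bz,\bx_{*})$. To get $\bL^{2}$ as stated you need the \emph{weighted} Cauchy--Schwarz $\bigl\|\sum_{j}a_{j}\bigr\|^{2}\leq\bigl(\sum_{j}L_{\sigma_{k}^{j}}\bigr)\bigl(\sum_{j}\|a_{j}\|^{2}/L_{\sigma_{k}^{j}}\bigr)$, which decouples the index of $L$ from the index of the Bregman divergence and lets both sums collapse to $n\bL$ and $nB_{f}$ separately.
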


\begin{proof}
By Lemma \ref{lem:cocore}, there are
\begin{align*}
B_{\sigma_{k}^{i}}(\bx_{k+1},\bx_{k}^{i}) & \leq\frac{L_{\sigma_{k}^{i}}}{2}\left\Vert \bx_{k+1}-\bx_{k}^{i}\right\Vert ^{2}\leq L_{\sigma_{k}^{i}}\left(\left\Vert \bx_{k+1}-\bx_{k}\right\Vert ^{2}+\left\Vert \bx_{k}^{i}-\bx_{k}\right\Vert ^{2}\right),\\
B_{\sigma_{k}^{i}}(\bz,\bx_{k}^{i}) & \geq\frac{\left\Vert \nabla f_{\sigma_{k}^{i}}(\bx_{k}^{i})-\nabla f_{\sigma_{k}^{i}}(\bz)\right\Vert ^{2}}{2L_{\sigma_{k}^{i}}}.
\end{align*}
Therefore
\begin{align}
\frac{1}{n}\sum_{i=1}^{n}B_{\sigma_{k}^{i}}(\bx_{k+1},\bx_{k}^{i})-B_{\sigma_{k}^{i}}(\bz,\bx_{k}^{i}) & \leq\frac{1}{n}\sum_{i=1}^{n}L_{\sigma_{k}^{i}}\left(\left\Vert \bx_{k+1}-\bx_{k}\right\Vert ^{2}+\left\Vert \bx_{k}^{i}-\bx_{k}\right\Vert ^{2}\right)-\frac{\left\Vert \nabla f_{\sigma_{k}^{i}}(\bx_{k}^{i})-\nabla f_{\sigma_{k}^{i}}(\bz)\right\Vert ^{2}}{2L_{\sigma_{k}^{i}}}\nonumber \\
 & =\bL\left\Vert \bx_{k+1}-\bx_{k}\right\Vert ^{2}+\frac{1}{n}\sum_{i=1}^{n}L_{\sigma_{k}^{i}}\left\Vert \bx_{k}^{i}-\bx_{k}\right\Vert ^{2}-\frac{\left\Vert \nabla f_{\sigma_{k}^{i}}(\bx_{k}^{i})-\nabla f_{\sigma_{k}^{i}}(\bz)\right\Vert ^{2}}{2L_{\sigma_{k}^{i}}}.\label{eq:smooth-res-1}
\end{align}
Note that $\bx_{k}=\bx_{k}^{1}$, hence
\begin{align}
 & \sum_{i=1}^{n}L_{\sigma_{k}^{i}}\left\Vert \bx_{k}^{i}-\bx_{k}\right\Vert ^{2}=\sum_{i=2}^{n}L_{\sigma_{k}^{i}}\left\Vert \bx_{k}^{i}-\bx_{k}^{1}\right\Vert ^{2}=\eta_{k}^{2}\sum_{i=2}^{n}L_{\sigma_{k}^{i}}\left\Vert \sum_{j=1}^{i-1}\nabla f_{\sigma_{k}^{j}}(\bx_{k}^{j})\right\Vert ^{2}\nonumber \\
= & \eta_{k}^{2}\sum_{i=2}^{n}L_{\sigma_{k}^{i}}\left\Vert \sum_{j=1}^{i-1}\nabla f_{\sigma_{k}^{j}}(\bx_{k}^{j})-\nabla f_{\sigma_{k}^{j}}(\bz)+\nabla f_{\sigma_{k}^{j}}(\bz)-\nabla f_{\sigma_{k}^{j}}(\bx_{*})+\nabla f_{\sigma_{k}^{j}}(\bx_{*})\right\Vert ^{2}\nonumber \\
\leq & \eta_{k}^{2}\sum_{i=2}^{n}L_{\sigma_{k}^{i}}\left(2\left\Vert \sum_{j=1}^{i-1}\nabla f_{\sigma_{k}^{j}}(\bx_{k}^{j})-\nabla f_{\sigma_{k}^{j}}(\bz)\right\Vert ^{2}+4\left\Vert \sum_{j=1}^{i-1}\nabla f_{\sigma_{k}^{j}}(\bz)-\nabla f_{\sigma_{k}^{j}}(\bx_{*})\right\Vert ^{2}+4\left\Vert \sum_{j=1}^{i-1}\nabla f_{\sigma_{k}^{j}}(\bx_{*})\right\Vert ^{2}\right)\nonumber \\
= & 2\eta_{k}^{2}\sum_{i=2}^{n}L_{\sigma_{k}^{i}}\left\Vert \sum_{j=1}^{i-1}\nabla f_{\sigma_{k}^{j}}(\bx_{k}^{j})-\nabla f_{\sigma_{k}^{j}}(\bz)\right\Vert ^{2}+4\eta_{k}^{2}\sum_{i=2}^{n}L_{\sigma_{k}^{i}}\left\Vert \sum_{j=1}^{i-1}\nabla f_{\sigma_{k}^{j}}(\bz)-\nabla f_{\sigma_{k}^{j}}(\bx_{*})\right\Vert ^{2}\nonumber \\
 & +4\eta_{k}^{2}\sum_{i=2}^{n}L_{\sigma_{k}^{i}}\left\Vert \sum_{j=1}^{i-1}\nabla f_{\sigma_{k}^{j}}(\bx_{*})\right\Vert ^{2}.\label{eq:smooth-res-2}
\end{align}
We bound
\begin{align}
 & \sum_{i=2}^{n}L_{\sigma_{k}^{i}}\left\Vert \sum_{j=1}^{i-1}\nabla f_{\sigma_{k}^{j}}(\bx_{k}^{j})-\nabla f_{\sigma_{k}^{j}}(\bz)\right\Vert ^{2}\nonumber \\
\leq & \sum_{i=2}^{n}\sum_{j=1}^{i-1}L_{\sigma_{k}^{i}}(i-1)\left\Vert \nabla f_{\sigma_{k}^{j}}(\bx_{k}^{j})-\nabla f_{\sigma_{k}^{j}}(\bz)\right\Vert ^{2}=\sum_{j=1}^{n-1}\left(\sum_{i=j+1}^{n}L_{\sigma_{k}^{i}}(i-1)\right)\left\Vert \nabla f_{\sigma_{k}^{j}}(\bx_{k}^{j})-\nabla f_{\sigma_{k}^{j}}(\bz)\right\Vert ^{2}\nonumber \\
\leq & \sum_{j=1}^{n-1}n^{2}\bL\left\Vert \nabla f_{\sigma_{k}^{j}}(\bx_{k}^{j})-\nabla f_{\sigma_{k}^{j}}(\bz)\right\Vert ^{2}\leq\sum_{i=1}^{n}n^{2}\bL\left\Vert \nabla f_{\sigma_{k}^{i}}(\bx_{k}^{i})-\nabla f_{\sigma_{k}^{i}}(\bz)\right\Vert ^{2},\label{eq:smooth-res-3}
\end{align}
and
\begin{align}
 & \sum_{i=2}^{n}L_{\sigma_{k}^{i}}\left\Vert \sum_{j=1}^{i-1}\nabla f_{\sigma_{k}^{j}}(\bz)-\nabla f_{\sigma_{k}^{j}}(\bx_{*})\right\Vert ^{2}\nonumber \\
\overset{(a)}{\leq} & \sum_{i=2}^{n}L_{\sigma_{k}^{i}}\times2\left(\sum_{j=1}^{i-1}L_{\sigma_{k}^{j}}\right)\left(\sum_{\ell=1}^{i-1}B_{\sigma_{k}^{\ell}}(\bz,\bx_{*})\right)=\sum_{i=2}^{n}\sum_{j=1}^{i-1}\sum_{\ell=1}^{i-1}2L_{\sigma_{k}^{i}}L_{\sigma_{k}^{j}}B_{\sigma_{k}^{\ell}}(\bz,\bx_{*})\nonumber \\
= & \sum_{j=1}^{n-1}\sum_{\ell=1}^{n-1}\left(\sum_{i=j\lor\ell+1}^{n}2L_{\sigma_{k}^{i}}\right)L_{\sigma_{k}^{j}}B_{\sigma_{k}^{\ell}}(\bz,\bx_{*})\leq2n\bL\sum_{j=1}^{n-1}\sum_{\ell=1}^{n-1}L_{\sigma_{k}^{j}}B_{\sigma_{k}^{\ell}}(\bz,\bx_{*})\nonumber \\
\leq & 2n^{2}\bL^{2}\sum_{\ell=1}^{n-1}B_{\sigma_{k}^{\ell}}(\bz,\bx_{*})\overset{(b)}{\leq}2n^{2}\bL^{2}\sum_{\ell=1}^{n}B_{\sigma_{k}^{\ell}}(\bz,\bx_{*})=2n^{3}\bL^{2}B_{f}(\bz,\bx_{*}),\label{eq:smooth-res-4}
\end{align}
where $(a)$ is due to Lemma \ref{lem:cocore} and $(b)$ is by $B_{\sigma_{k}^{\ell}}\geq0$.

Plugging (\ref{eq:smooth-res-3}) and (\ref{eq:smooth-res-4}) into
(\ref{eq:smooth-res-2}) to get
\[
\sum_{i=1}^{n}L_{\sigma_{k}^{i}}\left\Vert \bx_{k}^{i}-\bx_{k}\right\Vert ^{2}\leq\sum_{i=1}^{n}2\eta_{k}^{2}n^{2}\bL\left\Vert \nabla f_{\sigma_{k}^{i}}(\bx_{k}^{i})-\nabla f_{\sigma_{k}^{i}}(\bz)\right\Vert ^{2}+8\eta_{k}^{2}n^{3}\bL^{2}B_{f}(\bz,\bx_{*})+4\eta_{k}^{2}\sum_{i=2}^{n}L_{\sigma_{k}^{i}}\left\Vert \sum_{j=1}^{i-1}\nabla f_{\sigma_{k}^{j}}(\bx_{*})\right\Vert ^{2}.
\]
Combing (\ref{eq:smooth-res-1}), we obtain
\begin{align*}
\frac{1}{n}\sum_{i=1}^{n}B_{\sigma_{k}^{i}}(\bx_{k+1},\bx_{k}^{i})-B_{\sigma_{k}^{i}}(\bz,\bx_{k}^{i})\leq & \bL\left\Vert \bx_{k+1}-\bx_{k}\right\Vert ^{2}+8\eta_{k}^{2}n^{2}\bL^{2}B_{f}(\bz,\bx_{*})+4\eta_{k}^{2}\sum_{i=2}^{n}\frac{L_{\sigma_{k}^{i}}}{n}\left\Vert \sum_{j=1}^{i-1}\nabla f_{\sigma_{k}^{j}}(\bx_{*})\right\Vert ^{2}\\
 & +\frac{1}{n}\sum_{i=1}^{n}\left(2\eta_{k}^{2}n^{2}\bL-\frac{1}{2L_{\sigma_{k}^{i}}}\right)\left\Vert \nabla f_{\sigma_{k}^{i}}(\bx_{k}^{i})-\nabla f_{\sigma_{k}^{i}}(\bz)\right\Vert ^{2}\\
\leq & \bL\left\Vert \bx_{k+1}-\bx_{k}\right\Vert ^{2}+8\eta_{k}^{2}n^{2}\bL^{2}B_{f}(\bz,\bx_{*})+4\eta_{k}^{2}\sum_{i=2}^{n}\frac{L_{\sigma_{k}^{i}}}{n}\left\Vert \sum_{j=1}^{i-1}\nabla f_{\sigma_{k}^{j}}(\bx_{*})\right\Vert ^{2},
\end{align*}
where the last inequality is due to $2\eta_{k}^{2}n^{2}\bL\leq\frac{1}{2L^{*}}\leq\frac{1}{2L_{\sigma_{k}^{i}}},\forall i\in\left[n\right]$
from the condition $\eta_{k}\leq\frac{1}{2n\sqrt{\bL L^{*}}},\forall k\in\left[K\right]$.
\end{proof}

Next, let us consider the Lipschitz case, i.e., $\left\Vert \nabla f_{i}(\bx)\right\Vert \leq G_{i},\forall\bx\in\R^{d},i\in\left[n\right]$.
Unlike Lemma \ref{lem:smooth-res}, the following lemma shows that
the extra term $\frac{1}{n}\sum_{i=1}^{n}B_{\sigma_{k}^{i}}(\bx_{k+1},\bx_{k}^{i})-B_{\sigma_{k}^{i}}(\bz,\bx_{k}^{i})$
can be always bounded by $\bG,n,\eta_{k}$ and $\left\Vert \bx_{k+1}-\bx_{k}\right\Vert $
now regardless of what $\bz$ is.
\begin{lem}
\label{lem:lip-res}Under Assumptions \ref{assu:basic} and \ref{assu:lip},
for any $k\in\left[K\right]$, permutation $\sigma_{k}$ and $\bz\in\R^{d}$,
Algorithm \ref{alg:shuffling-SGD} guarantees 
\[
\frac{1}{n}\sum_{i=1}^{n}B_{\sigma_{k}^{i}}(\bx_{k+1},\bx_{k}^{i})-B_{\sigma_{k}^{i}}(\bz,\bx_{k}^{i})\leq2\bG\left\Vert \bx_{k+1}-\bx_{k}\right\Vert +\bG^{2}n\eta_{k}.
\]
\end{lem}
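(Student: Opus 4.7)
The plan is to drop the subtracted term outright and then apply a one-line Lipschitz bound to the remaining Bregman divergence. By convexity of each $f_{\sigma_k^i}$ we have $B_{\sigma_k^i}(\bz,\bx_k^i)\geq 0$, so
\[
\frac{1}{n}\sum_{i=1}^{n}B_{\sigma_{k}^{i}}(\bx_{k+1},\bx_{k}^{i})-B_{\sigma_{k}^{i}}(\bz,\bx_{k}^{i})\leq\frac{1}{n}\sum_{i=1}^{n}B_{\sigma_{k}^{i}}(\bx_{k+1},\bx_{k}^{i}),
\]
which is precisely the observation that makes the $\bz$-dependence disappear (mirroring how Assumption \ref{assu:lip} allows the bound to be $\bz$-independent). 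The remaining task is therefore to bound $\sum_i B_{\sigma_k^i}(\bx_{k+1},\bx_k^i)$.

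For any convex $g$ with $\|\nabla g\|\leq G$ everywhere, combining $g(\bx)-g(\by)\leq G\|\bx-\by\|$ with $|\langle\nabla g(\by),\bx-\by\rangle|\leq G\|\bx-\by\|$ yields the elementary estimate $B_g(\bx,\by)\leq 2G\|\bx-\by\|$. I will apply this to $g=f_{\sigma_k^i}$ at $\bx=\bx_{k+1}$, $\by=\bx_k^i$ to obtain
\[
B_{\sigma_k^i}(\bx_{k+1},\bx_k^i)\leq 2G_{\sigma_k^i}\|\bx_{k+1}-\bx_k^i\|\leq 2G_{\sigma_k^i}\bigl(\|\bx_{k+1}-\bx_k\|+\|\bx_k^i-\bx_k\|\bigr),
\]
where the last step is the triangle inequality splitting off the target term $\|\bx_{k+1}-\bx_k\|$.

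Averaging the $\|\bx_{k+1}-\bx_k\|$ contribution over $i$ produces $\frac{2}{n}\sum_{i=1}^n G_{\sigma_k^i}\cdot\|\bx_{k+1}-\bx_k\|=2\bG\|\bx_{k+1}-\bx_k\|$ since the sum of the $G_{\sigma_k^i}$ is permutation-invariant and equals $n\bG$. For the remaining piece, the shuffling update gives $\bx_k^i-\bx_k=-\eta_k\sum_{j=1}^{i-1}\nabla f_{\sigma_k^j}(\bx_k^j)$, so $\|\bx_k^i-\bx_k\|\leq \eta_k\sum_{j=1}^{i-1}G_{\sigma_k^j}$, yielding the contribution $\frac{2\eta_k}{n}\sum_{i=1}^n G_{\sigma_k^i}\sum_{j=1}^{i-1}G_{\sigma_k^j}$.

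The only remaining step is the combinatorial identity
\[
\sum_{i=1}^{n}G_{\sigma_{k}^{i}}\sum_{j=1}^{i-1}G_{\sigma_{k}^{j}}=\frac{1}{2}\left[\Bigl(\sum_{i=1}^{n}G_{\sigma_{k}^{i}}\Bigr)^{2}-\sum_{i=1}^{n}G_{\sigma_{k}^{i}}^{2}\right]\leq\frac{(n\bG)^{2}}{2},
\]
which turns the second contribution into $\frac{2\eta_k}{n}\cdot\frac{(n\bG)^2}{2}=n\eta_k\bG^2$, matching the second term in the claim. There is no real obstacle here: the proof is essentially a double application of the Lipschitz bound, and the key accounting point is keeping the factor $\tfrac{1}{2}$ from the $j<i$ restriction so that one lands on $n\eta_k\bG^2$ rather than $2n\eta_k\bG^2$; the permutation $\sigma_k$ plays no role because every bound depends on the $G_{\sigma_k^i}$ only through $\bG$.
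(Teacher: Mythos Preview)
Your proposal is correct and follows essentially the same route as the paper: drop $B_{\sigma_k^i}(\bz,\bx_k^i)\geq 0$, bound $B_{\sigma_k^i}(\bx_{k+1},\bx_k^i)\leq 2G_{\sigma_k^i}\|\bx_{k+1}-\bx_k^i\|$, split via the triangle inequality, and use the $j<i$ combinatorial identity to land on $\bG^2 n\eta_k$. The only cosmetic difference is that the paper obtains the $2G_{\sigma_k^i}\|\bx_{k+1}-\bx_k^i\|$ bound via $f_{\sigma_k^i}(\bx_{k+1})-f_{\sigma_k^i}(\bx_k^i)\leq\langle\nabla f_{\sigma_k^i}(\bx_{k+1}),\bx_{k+1}-\bx_k^i\rangle$ rather than the direct Lipschitz estimate $g(\bx)-g(\by)\leq G\|\bx-\by\|$, but these are equivalent here.
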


\begin{proof}
By noticing $B_{\sigma_{k}^{i}}\geq0$ because of the covnexity of
$f_{\sigma_{k}^{i}}(\bx)$ from Assumption \ref{assu:basic}, we have
\begin{equation}
\frac{1}{n}\sum_{i=1}^{n}B_{\sigma_{k}^{i}}(\bx_{k+1},\bx_{k}^{i})-B_{\sigma_{k}^{i}}(\bz,\bx_{k}^{i})\leq\frac{1}{n}\sum_{i=1}^{n}B_{\sigma_{k}^{i}}(\bx_{k+1},\bx_{k}^{i}).\label{eq:lip-res-1}
\end{equation}
Under Assumption \ref{assu:lip}
\begin{align*}
B_{\sigma_{k}^{i}}(\bx_{k+1},\bx_{k}^{i}) & =f_{\sigma_{k}^{i}}(\bx_{k+1})-f_{\sigma_{k}^{i}}(\bx_{k}^{i})-\langle\nabla f_{\sigma_{k}^{i}}(\bx_{k}^{i}),\bx_{k+1}-\bx_{k}^{i}\rangle\\
 & \overset{(a)}{\leq}\langle\nabla f_{\sigma_{k}^{i}}(\bx_{k+1}),\bx_{k+1}-\bx_{k}^{i}\rangle-\langle\nabla f_{\sigma_{k}^{i}}(\bx_{k}^{i}),\bx_{k+1}-\bx_{k}^{i}\rangle\\
 & \leq\left(\left\Vert \nabla f_{\sigma_{k}^{i}}(\bx_{k+1})\right\Vert +\left\Vert \nabla f_{\sigma_{k}^{i}}(\bx_{k}^{i})\right\Vert \right)\left\Vert \bx_{k+1}-\bx_{k}^{i}\right\Vert \\
 & \leq2G_{\sigma_{k}^{i}}\left\Vert \bx_{k+1}-\bx_{k}^{i}\right\Vert \leq2G_{\sigma_{k}^{i}}\left(\left\Vert \bx_{k+1}-\bx_{k}\right\Vert +\left\Vert \bx_{k}^{i}-\bx_{k}\right\Vert \right),
\end{align*}
where $(a)$ is by the convexity of $f_{\sigma_{k}^{i}}(\bx)$ from
Assumption \ref{assu:basic}. Therefore
\begin{equation}
\frac{1}{n}\sum_{i=1}^{n}B_{\sigma_{k}^{i}}(\bx_{k+1},\bx_{k}^{i})\leq\frac{1}{n}\sum_{i=1}^{n}2G_{\sigma_{k}^{i}}\left(\left\Vert \bx_{k+1}-\bx_{k}\right\Vert +\left\Vert \bx_{k}^{i}-\bx_{k}\right\Vert \right)=2\bG\left\Vert \bx_{k+1}-\bx_{k}\right\Vert +\frac{2}{n}\sum_{i=1}^{n}G_{\sigma_{k}^{i}}\left\Vert \bx_{k}^{i}-\bx_{k}\right\Vert .\label{eq:lip-res-2}
\end{equation}
Note that $\bx_{k}=\bx_{k}^{1}$, hence
\begin{align}
\frac{2}{n}\sum_{i=1}^{n}G_{\sigma_{k}^{i}}\left\Vert \bx_{k}^{i}-\bx_{k}\right\Vert  & =\frac{2}{n}\sum_{i=2}^{n}G_{\sigma_{k}^{i}}\left\Vert \bx_{k}^{i}-\bx_{k}^{1}\right\Vert \leq\frac{2}{n}\sum_{i=2}^{n}\sum_{j=1}^{i-1}G_{\sigma_{k}^{i}}\left\Vert \bx_{k}^{j+1}-\bx_{k}^{j}\right\Vert =\frac{2\eta_{k}}{n}\sum_{i=2}^{n}\sum_{j=1}^{i-1}G_{\sigma_{k}^{i}}\left\Vert \nabla f_{\sigma_{k}^{j}}(\bx_{k}^{j})\right\Vert \nonumber \\
 & \leq\frac{2\eta_{k}}{n}\sum_{i=2}^{n}\sum_{j=1}^{i-1}G_{\sigma_{k}^{i}}G_{\sigma_{k}^{j}}=\frac{\eta_{k}}{n}\left[\left(\sum_{i=1}^{n}G_{\sigma_{k}^{i}}\right)^{2}-\sum_{i=1}^{n}G_{\sigma_{k}^{i}}^{2}\right]\leq\bG^{2}(n-1)\eta_{k}\leq\bG^{2}n\eta_{k}.\label{eq:lip-res-3}
\end{align}
Combining (\ref{eq:lip-res-1}), (\ref{eq:lip-res-2}) and (\ref{eq:lip-res-3}),
we obtain
\[
\frac{1}{n}\sum_{i=1}^{n}B_{\sigma_{k}^{i}}(\bx_{k+1},\bx_{k}^{i})-B_{\sigma_{k}^{i}}(\bz,\bx_{k}^{i})\leq2\bG\left\Vert \bx_{k+1}-\bx_{k}\right\Vert +\bG^{2}n\eta_{k}.
\]
\end{proof}

\subsection{Smooth Functions}

In this subsection, we deal with smooth functions and will prove two
important results.

First, we present the most important result for the smooth case, Lemma
\ref{lem:smooth-core}. The main difficulty is to deal with the extra
term $B_{f}(\bz,\bx_{*})$ after using Lemmas \ref{lem:core} and
\ref{lem:smooth-res}. Suppose we follow the same way used in the
previous works, i.e., setting $\bz=\bz_{k}$ for a carefully designed
sequence $\left\{ \bz_{k},\forall k\in\left[K\right]\right\} $. We
can only bound $F(\bx_{K+1})-F(\bx_{*})$ by $\O(\sum_{k=1}^{K}B_{f}(\bz_{k},\bx_{*}))$.

A key observation is that $B_{f}(\bz_{k},\bx_{*})=\O(\sum_{\ell=1}^{k}B_{f}(\bx_{\ell},\bx_{*}))$
because $\bz_{k}$ will be taken as a convex combination of $\bx_{*},\bx_{1},\cdots,\bx_{k}$.
Thus, we can bound $F(\bx_{K+1})-F(\bx_{*})$ by $\O(\sum_{k=1}^{K}B_{f}(\bx_{k},\bx_{*}))$.
However, this is not enough since we still need the bound on $B_{f}(\bx_{k},\bx_{*})$
for every $k\in\left[K\right]$. Temporarily assume $\psi(\bx)=0$,
we can find there is $B_{f}(\bx_{k},\bx_{*})=F(\bx_{k})-F(\bx_{*})$.
So if $F(\bx_{k})-F(\bx_{*})$ are small enough for every $k\in\left[K\right]$,
we can hope that $F(\bx_{K+1})-F(\bx_{*})$ is also small. This thought
inspires us to bound $F(\bx_{k+1})-F(\bx_{*})$ for every $k\in\left[K\right]$
instead of only bounding $F(\bx_{K+1})-F(\bx_{*})$. Hence, departing
from the existing works that only bound the function value gap once
for time $K$, we prove the following anytime inequality, which can
finally help us prove the last-iterate rate (see the proof of Theorem
\ref{thm:smooth-cvx-full}). In addition, we would like to emphasize
that the sequence $\left\{ \bz_{\ell},\forall\ell\in\left[k\right]\right\} $
now is defined differently for every $k\in\left[K\right]$ as mentioned
in Section \ref{sec:idea}.
\begin{lem}
\label{lem:smooth-core}Under Assumptions \ref{assu:minimizer}, \ref{assu:basic}
and \ref{assu:smooth}, suppose $\eta_{k}\leq\frac{1}{2n\sqrt{\bL L^{*}}},\forall k\in\left[K\right]$,
then for any permutation $\sigma_{k},\forall k\in\left[K\right]$,
Algorithm \ref{alg:shuffling-SGD} guarantees
\[
F(\bx_{k+1})-F(\bx_{*})\leq\frac{\left\Vert \bx_{*}-\bx_{1}\right\Vert ^{2}}{2n\sum_{\ell=1}^{k}\eta_{\ell}}+\sum_{\ell=1}^{k}\frac{4\eta_{\ell}^{3}R_{\ell}}{\sum_{s=\ell}^{k}\eta_{s}}+\sum_{\ell=2}^{k}\frac{8n^{2}\bL^{2}\eta_{\ell-1}\left(\sum_{s=\ell}^{k}\eta_{s}^{3}\right)}{\left(\sum_{s=\ell}^{k}\eta_{s}\right)\left(\sum_{s=\ell-1}^{k}\eta_{s}\right)}B_{f}(\bx_{\ell},\bx_{*}),\forall k\in\left[K\right],
\]
where $R_{\ell}$ is defined in Lemma \ref{lem:smooth-res}.
\end{lem}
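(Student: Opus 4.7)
The plan is to adapt the auxiliary-sequence technique of \cite{zamani2023exact,liu2024revisiting} to the proximal shuffling setting, producing an \emph{anytime} bound (one valid for every $k\in[K]$). Fix $k$. Merging Lemma~\ref{lem:core} (dropping the $\mu_\psi$ term, which is nonnegative) with Lemma~\ref{lem:smooth-res}, and using $\eta_\ell \leq 1/(2n\sqrt{\bL L^*}) \leq 1/(2n\bL)$ to absorb $\bL\|\bx_{\ell+1}-\bx_\ell\|^2$ into the negative $\|\bx_{\ell+1}-\bx_\ell\|^2/(2n\eta_\ell)$, one obtains the clean per-epoch inequality
\[
F(\bx_{\ell+1})-F(\bz) \leq \frac{\|\bz-\bx_\ell\|^2-\|\bz-\bx_{\ell+1}\|^2}{2n\eta_\ell} + 8n^2\bL^2\eta_\ell^2\, B_f(\bz,\bx_*) + 4\eta_\ell^2 R_\ell
\]
valid for every $\ell\in[k]$ and any $\bz\in\R^d$.

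Next I would define, for this fixed $k$, an auxiliary sequence $\{\bz_\ell\}_{\ell=1}^k$ in which each $\bz_\ell$ is a convex combination of $\bx_*$ and the later iterates $\bx_{\ell+2},\ldots,\bx_{k+1}$, with $\bz_k=\bx_*$. Writing $S_\ell \triangleq \sum_{s=\ell}^k \eta_s$, the weights would be chosen (following the Zamani--Gasnikov/Liu--Zhou recipe) so that (i) aggregating the per-epoch inequality with multipliers proportional to $\eta_\ell/S_\ell$ telescopes the distance terms $\|\bz_\ell-\bx_\ell\|^2-\|\bz_\ell-\bx_{\ell+1}\|^2$ into the single boundary contribution $\|\bx_*-\bx_1\|^2$, and (ii) applying convexity of $F$ to each $F(\bz_\ell)$, followed by a double-sum exchange, lower-bounds the LHS by $S_1\,[F(\bx_{k+1})-F(\bx_*)]$. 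Dividing through by $S_1$ then produces both the $\|\bx_*-\bx_1\|^2/(2nS_1)$ main term and the residual sum $\sum_{\ell=1}^k 4\eta_\ell^3 R_\ell/S_\ell$.

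For the Bregman terms, convexity of $B_f(\cdot,\bx_*)$ in its first argument together with $B_f(\bx_*,\bx_*)=0$ expresses each $B_f(\bz_\ell,\bx_*)$ as a convex combination of $B_f(\bx_j,\bx_*)$ for $j>\ell+1$. Exchanging the order of summation, the total coefficient attached to $B_f(\bx_\ell,\bx_*)$ in the aggregated inequality collapses---thanks to the precise form of the chosen weights---into $8n^2\bL^2\eta_{\ell-1}\left(\sum_{s=\ell}^k\eta_s^3\right)/(S_\ell S_{\ell-1})$, matching the target.

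The delicate part will be engineering the weights in $\bz_\ell$ so that three algebraic constraints are met simultaneously: the clean telescoping of $\|\bz_\ell-\bx_\ell\|^2-\|\bz_\ell-\bx_{\ell+1}\|^2$, the correct aggregation into $F(\bx_{k+1})-F(\bx_*)$ on the LHS, and the exact $\eta_{\ell-1}/(S_\ell S_{\ell-1})$ factor for the Bregman coefficient. Unlike the single-terminal-$K$ arguments in \cite{zamani2023exact,liu2024revisiting}, here the auxiliary sequence is necessarily $k$-dependent---we need the bound for every $k\in[K]$ so that it can subsequently be fed, together with the variance bound on $R_\ell$ from Lemma~\ref{lem:var}, into the recursion of Lemma~\ref{lem:ineq} to close the loop on $B_f(\bx_k,\bx_*)$. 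Verifying the weighted cancellations uniformly in $k$, with arbitrary (not necessarily monotone) stepsizes and in the presence of the shuffling residual $R_\ell$, is the main bookkeeping challenge.
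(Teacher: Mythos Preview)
Your clean per-epoch inequality, the multipliers $\eta_\ell/S_\ell$, and especially the insight that the auxiliary sequence must be rebuilt for each terminal index $k$ are all exactly what the paper does. The gap is the \emph{direction} of the auxiliary sequence. You build $\bz_\ell$ backward from $\bz_k=\bx_*$ as a convex combination of $\bx_*$ and \emph{future} iterates $\bx_{\ell+2},\ldots,\bx_{k+1}$; the paper builds it forward from $\bz_0=\bx_*$ as a convex combination of $\bx_*$ and \emph{past} iterates $\bx_1,\ldots,\bx_s$, namely $\bz_s=(1-v_{s-1}/v_s)\bx_s+(v_{s-1}/v_s)\bz_{s-1}$ with $v_s=\eta_k/S_s$.

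This direction matters in three concrete places. First, the telescoping hinges on $\|\bz_s-\bx_s\|^2=(v_{s-1}/v_s)^2\|\bz_{s-1}-\bx_s\|^2\le (v_{s-1}/v_s)\|\bz_{s-1}-\bx_s\|^2$, which requires $\bz_s$ to contain $\bx_s$ itself; with your recursion neither $\bx_\ell$ nor $\bx_{\ell+1}$ sits inside $\bz_\ell$, and the needed contraction (in the \emph{right} inequality direction for an upper bound) does not materialize. Second, the forward construction makes the surviving boundary term $v_0\|\bz_0-\bx_1\|^2=\|\bx_*-\bx_1\|^2\cdot\eta_k/S_1$, yielding the first term of the lemma after division; in your backward scheme the positive boundary term is $\|\bz_1-\bx_1\|^2$ with $\bz_1$ a mixture of $\bx_*$ and later iterates, which is not $\|\bx_*-\bx_1\|^2$. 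Third, expanding $B_f(\bz_s,\bx_*)$ via convexity over past iterates is precisely what, after exchanging the sums, produces the coefficient $\eta_{\ell-1}\bigl(\sum_{s=\ell}^k\eta_s^3\bigr)/(S_\ell S_{\ell-1})$ and the index range $\ell\in\{2,\ldots,k\}$; your future-iterate expansion would instead yield sums of $\eta_s^3$ over $s<\ell$ and would introduce a $B_f(\bx_{k+1},\bx_*)$ term that the stated lemma does not contain. Reversing the recursion to the forward form fixes all three issues simultaneously and the rest of your sketch then goes through verbatim.
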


\begin{proof}
Fix $k\in\left[K\right]$, we define the non-decreasing sequence
\begin{align}
v_{s} & \triangleq\frac{\eta_{k}}{\sum_{\ell=s}^{k}\eta_{\ell}},\forall s\in\left[k\right],\label{eq:smooth-core-v}\\
v_{0} & \triangleq v_{1},\label{eq:smooth-core-v0}
\end{align}
and the auxiliary points $\bz_{0}\triangleq\bx_{*}$ and 
\begin{equation}
\bz_{s}\triangleq\left(1-\frac{v_{s-1}}{v_{s}}\right)\bx_{s}+\frac{v_{s-1}}{v_{s}}\bz_{s-1},\forall s\in\left[k\right].\label{eq:smooth-core-z}
\end{equation}
Equivalently, we can write $\bz_{s}$ as
\begin{equation}
\bz_{s}\triangleq\frac{v_{0}}{v_{s}}\bx_{*}+\sum_{\ell=1}^{s}\frac{v_{\ell}-v_{\ell-1}}{v_{s}}\bx_{\ell},\forall s\in\left\{ 0\right\} \cup\left[k\right].\label{eq:smooth-core-z-equivalent}
\end{equation}
Note that $\bz_{s}\in\mathrm{dom}\psi,\forall s\in\left\{ 0\right\} \cup\left[k\right]$
since it is a convex combination of $\bx_{*},\bx_{1},\cdots\bx_{s}$
due to $v_{s}\geq v_{s-1}$ and all of which fall into $\mathrm{dom}\psi$.

We invoke Lemma \ref{lem:core} with $k=s$ (this $k$ is for $k$
in Lemma \ref{lem:core} and is not our current fixed $k$) and $\bz=\bz_{s}$
to obtain
\begin{align*}
F(\bx_{s+1})-F(\bz_{s}) & \leq\frac{\left\Vert \bz_{s}-\bx_{s}\right\Vert ^{2}}{2n\eta_{s}}-\left(\frac{1}{\eta_{s}}+n\mu_{\psi}\right)\frac{\left\Vert \bz_{s}-\bx_{s+1}\right\Vert ^{2}}{2n}-\frac{\left\Vert \bx_{s+1}-\bx_{s}\right\Vert ^{2}}{2n\eta_{s}}+\frac{1}{n}\sum_{i=1}^{n}B_{\sigma_{s}^{i}}(\bx_{s+1},\bx_{s}^{i})-B_{\sigma_{s}^{i}}(\bz_{s},\bx_{s}^{i})\\
 & \overset{(a)}{\leq}\frac{\left\Vert \bz_{s}-\bx_{s}\right\Vert ^{2}}{2n\eta_{s}}-\frac{\left\Vert \bz_{s}-\bx_{s+1}\right\Vert ^{2}}{2n\eta_{s}}-\frac{\left\Vert \bx_{s+1}-\bx_{s}\right\Vert ^{2}}{2n\eta_{s}}+\frac{1}{n}\sum_{i=1}^{n}B_{\sigma_{s}^{i}}(\bx_{s+1},\bx_{s}^{i})-B_{\sigma_{s}^{i}}(\bz_{s},\bx_{s}^{i})\\
 & \overset{(b)}{\leq}\frac{\left\Vert \bz_{s}-\bx_{s}\right\Vert ^{2}}{2n\eta_{s}}-\frac{\left\Vert \bz_{s}-\bx_{s+1}\right\Vert ^{2}}{2n\eta_{s}}-\frac{\left\Vert \bx_{s+1}-\bx_{s}\right\Vert ^{2}}{2n\eta_{s}}+\bL\left\Vert \bx_{s+1}-\bx_{s}\right\Vert ^{2}+8\eta_{s}^{2}n^{2}\bL^{2}B_{f}(\bz_{s},\bx_{*})+4\eta_{s}^{2}R_{s}\\
 & \overset{(c)}{\leq}\frac{\left\Vert \bz_{s}-\bx_{s}\right\Vert ^{2}}{2n\eta_{s}}-\frac{\left\Vert \bz_{s}-\bx_{s+1}\right\Vert ^{2}}{2n\eta_{s}}+8\eta_{s}^{2}n^{2}\bL^{2}B_{f}(\bz_{s},\bx_{*})+4\eta_{s}^{2}R_{s},
\end{align*}
where $(a)$ is due to $\mu_{\psi}\geq0$, $(b)$ is by Lemma \ref{lem:smooth-res}
and $(c)$ holds because of $\bL\leq\frac{1}{2n\eta_{s}}$ from the
requirment of $\eta_{k}\leq\frac{1}{2n\sqrt{\bL L^{*}}},\forall k\in\left[K\right]$.
Note that
\[
\left\Vert \bz_{s}-\bx_{s}\right\Vert ^{2}\overset{\eqref{eq:smooth-core-z}}{=}\frac{v_{s-1}^{2}}{v_{s}^{2}}\left\Vert \bz_{s-1}-\bx_{s}\right\Vert ^{2}\leq\frac{v_{s-1}}{v_{s}}\left\Vert \bz_{s-1}-\bx_{s}\right\Vert ^{2},
\]
where the last inequality is due to $v_{s-1}\leq v_{s}$. Hence
\begin{align}
F(\bx_{s+1})-F(\bz_{s}) & \leq\frac{\frac{v_{s-1}}{v_{s}}\left\Vert \bz_{s-1}-\bx_{s}\right\Vert ^{2}}{2n\eta_{s}}-\frac{\left\Vert \bz_{s}-\bx_{s+1}\right\Vert ^{2}}{2n\eta_{s}}+8\eta_{s}^{2}n^{2}\bL^{2}B_{f}(\bz_{s},\bx_{*})+4\eta_{s}^{2}R_{s}\nonumber \\
\Rightarrow\eta_{s}v_{s}\left(F(\bx_{s+1})-F(\bz_{s})\right) & \leq\frac{v_{s-1}\left\Vert \bz_{s-1}-\bx_{s}\right\Vert ^{2}}{2n}-\frac{v_{s}\left\Vert \bz_{s}-\bx_{s+1}\right\Vert ^{2}}{2n}+\underbrace{8\eta_{s}^{3}v_{s}n^{2}\bL^{2}B_{f}(\bz_{s},\bx_{*})+4\eta_{s}^{3}v_{s}R_{s}}_{\triangleq Q_{s}},\label{eq:smooth-core-1}
\end{align}
Summing (\ref{eq:smooth-core-1}) from $s=1$ to $k$ to obtain
\begin{equation}
\sum_{s=1}^{k}\eta_{s}v_{s}\left(F(\bx_{s+1})-F(\bz_{s})\right)\leq\frac{v_{0}\left\Vert \bz_{0}-\bx_{1}\right\Vert ^{2}}{2n}-\frac{v_{k}\left\Vert \bz_{k}-\bx_{k+1}\right\Vert ^{2}}{2n}+\sum_{s=1}^{k}Q_{s}\leq\frac{v_{0}\left\Vert \bx_{*}-\bx_{1}\right\Vert ^{2}}{2n}+\sum_{s=1}^{k}Q_{s},\label{eq:smooth-core-2}
\end{equation}
where the last line is by $\bz_{0}=\bx_{*}$ and $\left\Vert \bz_{k}-\bx_{k+1}\right\Vert ^{2}\geq0$. 

By the convexity of $F(\bx)$ and (\ref{eq:smooth-core-z-equivalent}),
we can bound
\[
F(\bz_{s})\leq\frac{v_{0}}{v_{s}}F(\bx_{*})+\sum_{\ell=1}^{s}\frac{v_{\ell}-v_{\ell-1}}{v_{s}}F(\bx_{\ell})=F(\bx_{*})+\sum_{\ell=1}^{s}\frac{v_{\ell}-v_{\ell-1}}{v_{s}}\left(F(\bx_{\ell})-F(\bx_{*})\right),
\]
which implies
\begin{align*}
\sum_{s=1}^{k}\eta_{s}v_{s}\left(F(\bx_{s+1})-F(\bz_{s})\right)\geq & \sum_{s=1}^{k}\eta_{s}\left(v_{s}\left(F(\bx_{s+1})-F(\bx_{*})\right)-\sum_{\ell=1}^{s}(v_{\ell}-v_{\ell-1})\left(F(\bx_{\ell})-F(\bx_{*})\right)\right)\\
= & \sum_{s=1}^{k}\eta_{s}v_{s}\left(F(\bx_{s+1})-F(\bx_{*})\right)-\sum_{s=1}^{k}\sum_{\ell=1}^{s}\eta_{s}(v_{\ell}-v_{\ell-1})\left(F(\bx_{\ell})-F(\bx_{*})\right)\\
= & \sum_{s=1}^{k}\eta_{s}v_{s}\left(F(\bx_{s+1})-F(\bx_{*})\right)-\sum_{s=1}^{k}\left(\sum_{\ell=s}^{k}\eta_{\ell}\right)(v_{s}-v_{s-1})\left(F(\bx_{s})-F(\bx_{*})\right)\\
= & \eta_{k}v_{k}\left(F(\bx_{k+1})-F(\bx_{*})\right)-\left(\sum_{\ell=1}^{k}\eta_{\ell}\right)\left(v_{1}-v_{0}\right)\left(F(\bx_{1})-F(\bx_{*})\right)\\
 & +\sum_{s=2}^{k}\left[\eta_{s-1}v_{s-1}-\left(\sum_{\ell=s}^{k}\eta_{\ell}\right)(v_{s}-v_{s-1})\right]\left(F(\bx_{s})-F(\bx_{*})\right).
\end{align*}
Note that $v_{1}\overset{\eqref{eq:smooth-core-v0}}{=}v_{0}$ and
for $2\leq s\leq k$
\[
\eta_{s-1}v_{s-1}-\left(\sum_{\ell=s}^{k}\eta_{\ell}\right)(v_{s}-v_{s-1})=\left(\sum_{\ell=s-1}^{k}\eta_{\ell}\right)v_{s-1}-\left(\sum_{\ell=s}^{k}\eta_{\ell}\right)v_{s}\overset{\eqref{eq:smooth-core-v}}{=}\eta_{k}-\eta_{k}=0.
\]
Hence, we know
\begin{equation}
\sum_{s=1}^{k}\eta_{s}v_{s}\left(F(\bx_{s+1})-F(\bz_{s})\right)\geq\eta_{k}v_{k}\left(F(\bx_{k+1})-F(\bx_{*})\right).\label{eq:smooth-core-3}
\end{equation}

Plugging (\ref{eq:smooth-core-3}) into (\ref{eq:smooth-core-2}),
we obtain
\begin{align}
\eta_{k}v_{k}\left(F(\bx_{k+1})-F(\bx_{*})\right) & \leq\frac{v_{0}\left\Vert \bx_{*}-\bx_{1}\right\Vert ^{2}}{2n}+\sum_{s=1}^{k}Q_{s}\nonumber \\
\Rightarrow F(\bx_{k+1})-F(\bx_{*}) & \leq\frac{v_{0}}{\eta_{k}v_{k}}\cdot\frac{\left\Vert \bx_{*}-\bx_{1}\right\Vert ^{2}}{2n}+\frac{1}{\eta_{k}v_{k}}\sum_{s=1}^{k}Q_{s}\overset{\eqref{eq:smooth-core-v},\eqref{eq:smooth-core-v0}}{=}\frac{\left\Vert \bx-\bx_{1}\right\Vert ^{2}}{2n\sum_{\ell=1}^{k}\eta_{\ell}}+\frac{1}{\eta_{k}}\sum_{s=1}^{k}Q_{s}\nonumber \\
 & =\frac{\left\Vert \bx_{*}-\bx_{1}\right\Vert ^{2}}{2n\sum_{\ell=1}^{k}\eta_{\ell}}+\underbrace{\frac{1}{\eta_{k}}\sum_{s=1}^{k}4\eta_{s}^{3}v_{s}R_{s}}_{(i)}+\underbrace{\frac{1}{\eta_{k}}\sum_{s=1}^{k}8\eta_{s}^{3}v_{s}n^{2}\bL^{2}B_{f}(\bz_{s},\bx_{*})}_{(ii)}.\label{eq:smooth-core-4}
\end{align}
For term $(i)$, we have
\begin{equation}
(i)\overset{\eqref{eq:smooth-core-v}}{=}\sum_{s=1}^{k}\frac{4\eta_{s}^{3}R_{s}}{\sum_{\ell=s}^{k}\eta_{\ell}}=\sum_{\ell=1}^{k}\frac{4\eta_{\ell}^{3}R_{\ell}}{\sum_{s=\ell}^{k}\eta_{s}}.\label{eq:smooth-core-i}
\end{equation}
For term $(ii)$, by the convexity of the first argument of $B_{f}(\cdot,\cdot)$
(which is implied by the convexity of $f(\bx)$) and (\ref{eq:smooth-core-z-equivalent}),
we first bound
\[
B_{f}(\bz_{s},\bx_{*})\leq\frac{v_{0}}{v_{s}}B_{f}(\bx_{*},\bx_{*})+\sum_{\ell=1}^{s}\frac{v_{\ell}-v_{\ell-1}}{v_{s}}B_{f}(\bx_{\ell},\bx_{*})=\sum_{\ell=1}^{s}\frac{v_{\ell}-v_{\ell-1}}{v_{s}}B_{f}(\bx_{\ell},\bx_{*}),
\]
which implies
\begin{align}
(ii) & \leq\frac{1}{\eta_{k}}\sum_{s=1}^{k}\sum_{\ell=1}^{s}8\eta_{s}^{3}n^{2}\bL^{2}(v_{\ell}-v_{\ell-1})B_{f}(\bx_{\ell},\bx_{*})\nonumber \\
 & \overset{\eqref{eq:smooth-core-v0}}{=}\frac{1}{\eta_{k}}\sum_{s=2}^{k}\sum_{\ell=2}^{s}8\eta_{s}^{3}n^{2}\bL^{2}(v_{\ell}-v_{\ell-1})B_{f}(\bx_{\ell},\bx_{*})\nonumber \\
 & =\frac{1}{\eta_{k}}\sum_{\ell=2}^{k}8n^{2}\bL^{2}\left(\sum_{s=\ell}^{k}\eta_{s}^{3}\right)(v_{\ell}-v_{\ell-1})B_{f}(\bx_{\ell},\bx_{*})\nonumber \\
 & \overset{\eqref{eq:smooth-core-v}}{=}\frac{1}{\eta_{k}}\sum_{\ell=2}^{k}8n^{2}\bL^{2}\left(\sum_{s=\ell}^{k}\eta_{s}^{3}\right)\left(\frac{\eta_{k}}{\sum_{s=\ell}^{k}\eta_{s}}-\frac{\eta_{k}}{\sum_{s=\ell-1}^{k}\eta_{s}}\right)B_{f}(\bx_{\ell},\bx_{*})\nonumber \\
 & =\sum_{\ell=2}^{k}\frac{8n^{2}\bL^{2}\eta_{\ell-1}\left(\sum_{s=\ell}^{k}\eta_{s}^{3}\right)}{\left(\sum_{s=\ell}^{k}\eta_{s}\right)\left(\sum_{s=\ell-1}^{k}\eta_{s}\right)}B_{f}(\bx_{\ell},\bx_{*}).\label{eq:smooth-core-ii}
\end{align}
Plugging (\ref{eq:smooth-core-i}) and (\ref{eq:smooth-core-ii})
into (\ref{eq:smooth-core-4}), we finally obtain
\[
F(\bx_{k+1})-F(\bx_{*})\leq\frac{\left\Vert \bx_{*}-\bx_{1}\right\Vert ^{2}}{2n\sum_{\ell=1}^{k}\eta_{\ell}}+\sum_{\ell=1}^{k}\frac{4\eta_{\ell}^{3}R_{\ell}}{\sum_{s=\ell}^{k}\eta_{s}}+\sum_{\ell=2}^{k}\frac{8n^{2}\bL^{2}\eta_{\ell-1}\left(\sum_{s=\ell}^{k}\eta_{s}^{3}\right)}{\left(\sum_{s=\ell}^{k}\eta_{s}\right)\left(\sum_{s=\ell-1}^{k}\eta_{s}\right)}B_{f}(\bx_{\ell},\bx_{*}).
\]
\end{proof}

The second result, Lemma \ref{lem:smooth-distance}, is particularly
useful for the strongly convex case, i.e., $\mu_{\psi}>0$ or $\mu_{f}>0$.
Though the distance from the optimum is not the desired bound we want,
it can finally help us to bound the function value. The key idea here
is using $\left\Vert \bx_{*}-\bx_{\left\lceil \frac{K}{2}\right\rceil +1}\right\Vert ^{2}$
to bound $F(\bx_{K+1})-F(\bx_{*})$ and applying Lemma \ref{lem:smooth-distance}
to bound $\left\Vert \bx_{*}-\bx_{\left\lceil \frac{K}{2}\right\rceil +1}\right\Vert ^{2}$.
A similar argument showed up in \cite{pmlr-v202-cha23a} but was used
to bound $F(\bx_{K+1}^{\mathrm{tail}})-F(\bx_{*})$ (recall $\bx_{K+1}^{\mathrm{tail}}\triangleq\frac{1}{\left\lfloor \frac{K}{2}\right\rfloor +1}\sum_{k=\left\lceil \frac{K}{2}\right\rceil }^{K}\bx_{k+1}$).
In contrast, our work directly bounds the last iterate instead of
the tail average iterate.
\begin{lem}
\label{lem:smooth-distance}Under Assumptions \ref{assu:minimizer},
\ref{assu:basic}, \ref{assu:smooth} and \ref{assu:str}, suppose
$\eta_{k}\leq\frac{1}{2n\sqrt{\bL L^{*}}},\forall k\in\left[K\right]$,
then for any permutation $\sigma_{k},\forall k\in\left[K\right]$,
Algorithm \ref{alg:shuffling-SGD} guarantees
\[
\left\Vert \bx_{k+1}-\bx_{*}\right\Vert ^{2}\leq\frac{\left\Vert \bx_{*}-\bx_{1}\right\Vert ^{2}}{\prod_{s=1}^{k}(1+n\eta_{s}(\mu_{f}+2\mu_{\psi}))}+\sum_{\ell=1}^{k}\frac{8n\eta_{\ell}^{3}R_{\ell}}{\prod_{s=\ell}^{k}(1+n\eta_{s}(\mu_{f}+2\mu_{\psi}))},\forall k\in\left[K\right],
\]
where $R_{\ell}$ is defined in Lemma \ref{lem:smooth-res}.
\end{lem}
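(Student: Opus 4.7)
The plan is to turn the one-epoch descent inequality from Lemma \ref{lem:core} into a contractive one-step recursion on $\|\bx_{k+1}-\bx_*\|^2$, and then simply unroll that recursion. Concretely, I would instantiate Lemma \ref{lem:core} with $\bz=\bx_*$ to obtain, for each $k\in[K]$,
\[
F(\bx_{k+1})-F(\bx_*)\leq\frac{\|\bx_*-\bx_k\|^2}{2n\eta_k}-\Bigl(\tfrac{1}{\eta_k}+n\mu_\psi\Bigr)\frac{\|\bx_*-\bx_{k+1}\|^2}{2n}-\frac{\|\bx_{k+1}-\bx_k\|^2}{2n\eta_k}+\frac{1}{n}\sum_{i=1}^{n}B_{\sigma_k^i}(\bx_{k+1},\bx_k^i)-B_{\sigma_k^i}(\bx_*,\bx_k^i),
\]
and then bound the residual term via Lemma \ref{lem:smooth-res} also with $\bz=\bx_*$. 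Because $B_f(\bx_*,\bx_*)=0$, the cross term disappears and we are left with $\bL\|\bx_{k+1}-\bx_k\|^2+4\eta_k^2 R_k$.

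Next I would absorb the $\bL\|\bx_{k+1}-\bx_k\|^2$ into the negative $\tfrac{1}{2n\eta_k}\|\bx_{k+1}-\bx_k\|^2$ using the stepsize hypothesis $\eta_k\leq\frac{1}{2n\sqrt{\bL L^*}}\leq\frac{1}{2n\bL}$ (valid since $L^*\geq\bL$). On the left I would apply strong convexity: using the first-order optimality condition $\nabla f(\bx_*)+\nabla\psi(\bx_*)=\mathbf{0}$ together with Assumptions \ref{assu:basic} and \ref{assu:str},
\[
F(\bx_{k+1})-F(\bx_*)=B_f(\bx_{k+1},\bx_*)+B_\psi(\bx_{k+1},\bx_*)\geq\frac{\mu_f+\mu_\psi}{2}\|\bx_{k+1}-\bx_*\|^2.
\]
Multiplying the resulting inequality by $2n\eta_k$ and collecting the coefficient of $\|\bx_{k+1}-\bx_*\|^2$ gives
\[
\bigl(1+n\eta_k(\mu_f+2\mu_\psi)\bigr)\,\|\bx_{k+1}-\bx_*\|^2\leq\|\bx_k-\bx_*\|^2+8n\eta_k^3 R_k,
\]
which is exactly the one-step contractive recursion we need (note that the factor $\mu_f+2\mu_\psi$ arises because $\mu_\psi$ appears both in the proximal term on the right and in the strong convexity bound on the left).

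The final step is routine: divide by $1+n\eta_k(\mu_f+2\mu_\psi)$ and iterate the recursion from $s=1$ up to $s=k$, starting from $\bx_1$. This produces the product factor $\prod_{s=1}^{k}(1+n\eta_s(\mu_f+2\mu_\psi))$ in the first term and the partial products $\prod_{s=\ell}^{k}(1+n\eta_s(\mu_f+2\mu_\psi))$ weighting each residual $8n\eta_\ell^3 R_\ell$, yielding the claimed bound. I don't expect a real obstacle here; the only subtlety is matching the coefficient $\mu_f+2\mu_\psi$, which forces a careful accounting of where $\mu_\psi$ enters (once from the proximal step in Lemma \ref{lem:core}, once from strong convexity of $\psi$ in the lower bound on $F(\bx_{k+1})-F(\bx_*)$).
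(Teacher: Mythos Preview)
Your proposal is correct and follows essentially the same approach as the paper: apply Lemma~\ref{lem:core} with $\bz=\bx_*$, bound the Bregman residual via Lemma~\ref{lem:smooth-res} (so that $B_f(\bx_*,\bx_*)=0$), absorb $\bL\|\bx_{k+1}-\bx_k\|^2$ using the stepsize bound, lower-bound the left side by $\tfrac{\mu_f+\mu_\psi}{2}\|\bx_{k+1}-\bx_*\|^2$ via first-order optimality, and unroll the resulting one-step contraction. Your accounting of the factor $\mu_f+2\mu_\psi$ is exactly right.
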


\begin{proof}
We invoke Lemma \ref{lem:core} with $\bz=\bx_{*}$ to obtain
\begin{align*}
F(\bx_{k+1})-F(\bx_{*})\leq & \frac{\left\Vert \bx_{*}-\bx_{k}\right\Vert ^{2}}{2n\eta_{k}}-\left(\frac{1}{\eta_{k}}+n\mu_{\psi}\right)\frac{\left\Vert \bx_{*}-\bx_{k+1}\right\Vert ^{2}}{2n}-\frac{\left\Vert \bx_{k+1}-\bx_{k}\right\Vert ^{2}}{2n\eta_{k}}\\
 & +\frac{1}{n}\sum_{i=1}^{n}B_{\sigma_{k}^{i}}(\bx_{k+1},\bx_{k}^{i})-B_{\sigma_{k}^{i}}(\bx_{*},\bx_{k}^{i})\\
\overset{(a)}{\leq} & \frac{\left\Vert \bx_{*}-\bx_{k}\right\Vert ^{2}}{2n\eta_{k}}-\left(\frac{1}{\eta_{k}}+n\mu_{\psi}\right)\frac{\left\Vert \bx_{*}-\bx_{k+1}\right\Vert ^{2}}{2n}-\frac{\left\Vert \bx_{k+1}-\bx_{k}\right\Vert ^{2}}{2n\eta_{k}}\\
 & +\bL\left\Vert \bx_{k+1}-\bx_{k}\right\Vert ^{2}+8\eta_{k}^{2}n^{2}\bL^{2}B_{f}(\bx_{*},\bx_{*})+4\eta_{k}^{2}R_{k}\\
\overset{(b)}{\leq} & \frac{\left\Vert \bx_{*}-\bx_{k}\right\Vert ^{2}}{2n\eta_{k}}-\left(\frac{1}{\eta_{k}}+n\mu_{\psi}\right)\frac{\left\Vert \bx_{*}-\bx_{k+1}\right\Vert ^{2}}{2n}+4\eta_{k}^{2}R_{k},
\end{align*}
where $(a)$ is by Lemma \ref{lem:smooth-res} and $(b)$ holds due
to $\bL\leq\frac{1}{2n\eta_{k}}$ from the requirment of $\eta_{k}\leq\frac{1}{2n\sqrt{\bL L^{*}}},\forall k\in\left[K\right]$
and $B_{f}(\bx_{*},\bx_{*})=0$. Note that $\bx_{*}\in\argmin_{\bx\in\R^{d}}F(\bx)$
implies $\exists\nabla\psi(\bx_{*})\in\partial\psi(\bx_{*})$ such
that $\nabla f(\bx_{*})+\nabla\psi(\bx_{*})=\mathbf{0}$, hence
\begin{align*}
F(\bx_{k+1})-F(\bx_{*}) & =F(\bx_{k+1})-F(\bx_{*})-\langle\nabla f(\bx_{*})+\nabla\psi(\bx_{*}),\bx_{k+1}-\bx_{*}\rangle\\
 & =B_{f}(\bx_{k+1},\bx_{*})+B_{\psi}(\bx_{k+1},\bx_{*})\geq\frac{\mu_{f}+\mu_{\psi}}{2}\left\Vert \bx_{*}-\bx_{k+1}\right\Vert ^{2},
\end{align*}
where the last inequality is due to the strong convexity of $f$ and
$\psi$. So we know
\begin{align*}
\frac{\mu_{f}+\mu_{\psi}}{2}\left\Vert \bx_{*}-\bx_{k+1}\right\Vert ^{2} & \leq\frac{\left\Vert \bx_{*}-\bx_{k}\right\Vert ^{2}}{2n\eta_{k}}-\left(\frac{1}{\eta_{k}}+n\mu_{\psi}\right)\frac{\left\Vert \bx_{*}-\bx_{k+1}\right\Vert ^{2}}{2n}+4\eta_{k}^{2}R_{k}\\
\Rightarrow(1+n\eta_{k}(\mu_{f}+2\mu_{\psi}))\left\Vert \bx_{*}-\bx_{k+1}\right\Vert ^{2} & \leq\left\Vert \bx_{*}-\bx_{k}\right\Vert ^{2}+8n\eta_{k}^{3}R_{k}\\
\Rightarrow\left\Vert \bx_{*}-\bx_{k+1}\right\Vert ^{2} & \leq\frac{\left\Vert \bx_{*}-\bx_{1}\right\Vert ^{2}}{\prod_{s=1}^{k}(1+n\eta_{s}(\mu_{f}+2\mu_{\psi}))}+\frac{\sum_{\ell=1}^{k}8n\eta_{\ell}^{3}R_{\ell}\prod_{s=1}^{\ell-1}(1+n\eta_{s}(\mu_{f}+2\mu_{\psi}))}{\prod_{s=1}^{k}(1+n\eta_{s}(\mu_{f}+2\mu_{\psi}))}\\
 & =\frac{\left\Vert \bx_{*}-\bx_{1}\right\Vert ^{2}}{\prod_{s=1}^{k}(1+n\eta_{s}(\mu_{f}+2\mu_{\psi}))}+\sum_{\ell=1}^{k}\frac{8n\eta_{\ell}^{3}R_{\ell}}{\prod_{s=\ell}^{k}(1+n\eta_{s}(\mu_{f}+2\mu_{\psi}))}.
\end{align*}
\end{proof}

\subsection{Lipschitz Functions}

We focus on the Lipschitz case and prove two useful results in this
subsection.

We first introduce Lemma \ref{lem:lip-core}, which is a consequence
of combining Lemmas \ref{lem:core} and \ref{lem:lip-res}. However,
the proof of Lemma \ref{lem:lip-core} is different from both the
prior works \cite{zamani2023exact,liu2024revisiting} and Lemma \ref{lem:smooth-core}.
For any fixed $k\in\left[K\right]$ here, instead of setting $\bz=\bz_{k}$
directly, we will invoke Lemma \ref{lem:core} $k+1$ times with $\bz=\bx_{\ell}$
for $\ell\in\left[k\right]$ and $\bz=\bx_{*}$ and then sum them
up with some weights. The key reason is that we need a refined bound
for proving Theorem \ref{thm:lip-ada-full}. To be more precise, one
can see there is an extra negative term $-\left\Vert \bx_{*}-\bx_{K+1}\right\Vert ^{2}$
showing up in Lemma \ref{lem:lip-core}. It plays an important role
in the proof of Theorem \ref{thm:lip-ada-full}. However, if we totally
follow \cite{zamani2023exact,liu2024revisiting}, the term $-\left\Vert \bx_{*}-\bx_{K+1}\right\Vert ^{2}$
will be replaced by a larger quantity, which cannot help us prove
Theorem \ref{thm:lip-ada-full}.
\begin{lem}
\label{lem:lip-core}Under Assumptions \ref{assu:minimizer}, \ref{assu:basic}
and \ref{assu:lip}, let $\gamma_{k}\triangleq\eta_{k}\prod_{\ell=2}^{k}(1+n\eta_{\ell-1}\mu_{\psi}),\forall k\in\left[K\right]$.
For any permutation $\sigma_{k},\forall k\in\left[K\right]$, Algorithm
\ref{alg:shuffling-SGD} guarantees
\[
F(\bx_{K+1})-F(\bx_{*})\leq\frac{\left\Vert \bx_{*}-\bx_{1}\right\Vert ^{2}-\left\Vert \bx_{*}-\bx_{K+1}\right\Vert ^{2}}{2n\sum_{k=1}^{K}\gamma_{k}}+3\bG^{2}n\sum_{k=1}^{K}\frac{\gamma_{k}\eta_{k}}{\sum_{\ell=k}^{K}\gamma_{\ell}}.
\]
\end{lem}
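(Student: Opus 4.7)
The plan is to chain Lemma~\ref{lem:core} with the Lipschitz Bregman bound from Lemma~\ref{lem:lip-res}, absorb the cross term $2\bG\|\bx_{k+1}-\bx_k\|$ into $\|\bx_{k+1}-\bx_k\|^2/(2n\eta_k)$ via Young's inequality $2\bG\|\bx_{k+1}-\bx_k\|\le \|\bx_{k+1}-\bx_k\|^2/(2n\eta_k)+2n\eta_k\bG^2$, and obtain the clean one-step inequality
\[
F(\bx_{k+1})-F(\bz)\;\le\;\frac{\|\bz-\bx_k\|^2}{2n\eta_k}-\frac{(1+n\eta_k\mu_\psi)\|\bz-\bx_{k+1}\|^2}{2n\eta_k}+3n\eta_k\bG^2,\qquad\forall\bz\in\R^d,\;\forall k\in[K].
\]
The pivotal algebraic identity $\gamma_k(1+n\eta_k\mu_\psi)/\eta_k=\gamma_{k+1}/\eta_{k+1}$ (which is precisely why $\gamma_k=\eta_k\prod_{\ell=2}^k(1+n\eta_{\ell-1}\mu_\psi)$ is the natural weight for the Lipschitz case with $\mu_\psi\ge 0$) means that after multiplying by $\gamma_k$ the distance terms align so that, whenever a common $\bz$ is used, the right-hand side telescopes to $(\|\bz-\bx_1\|^2-(\gamma_{K+1}/\eta_{K+1})\|\bz-\bx_{K+1}\|^2)/(2n)+3n\bG^2\sum_k\gamma_k\eta_k$.

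From here, following the strategy advertised in the paragraph before the lemma, I would \emph{not} commit to a single $\bz$ but instead invoke the displayed inequality multiple times with reference points drawn from $\{\bx_*,\bx_1,\dots,\bx_K\}$, summing with Zamani--Glineur-style weights. Concretely, introduce the tail sums $v_k\triangleq\sum_{\ell=k}^{K}\gamma_\ell$ and design the references so that (i) on the left, convexity of $F$ collapses all intermediate $F(\bx_{s+1})$ contributions and leaves only $(\sum_k\gamma_k)\bigl(F(\bx_{K+1})-F(\bx_*)\bigr)$, (ii) on the right, the distance terms continue to telescope to $\|\bx_*-\bx_1\|^2-\|\bx_*-\bx_{K+1}\|^2$ divided by $2n\sum_k\gamma_k$, and (iii) the noise penalty $3n\gamma_k\eta_k\bG^2$ from step $k$ gets distributed across the references involving steps $k,k+1,\dots,K$, producing the characteristic form $\sum_k\gamma_k\eta_k/v_k$. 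Such combinatorial accounting is exactly how the $\sum_k\gamma_k\eta_k/\!\sum_{\ell=k}^{K}\gamma_\ell$ structure arises in last-iterate subgradient analyses.

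The principal obstacle is engineering the reference choices so that the negative $-\|\bx_*-\bx_{K+1}\|^2$ term \emph{survives} in the final bound rather than being dropped. In the smooth proof (Lemma~\ref{lem:smooth-core}) it was acceptable to set $\bz_s$ to be a convex combination of $\bx_*,\bx_1,\dots,\bx_s$ and throw away $-\|\bz_k-\bx_{k+1}\|^2$, because the extra $B_f(\bz_s,\bx_*)$ penalty was harmless. Here we cannot afford to discard that negative term, since Theorem~\ref{thm:lip-ada-full} crucially uses $\|\bx_*-\bx_1\|^2-\|\bx_*-\bx_{K+1}\|^2\le 2r_{K+1}\|\bx_*-\bx_1\|$ to get an adaptive linear dependence on $D$. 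This forces the refined, multi-anchor construction hinted at in the proof sketch, rather than the single-combination recipe used for smooth objectives; getting all three properties (LHS collapse, distance telescoping, and retention of $-\|\bx_*-\bx_{K+1}\|^2$) to hold simultaneously is the delicate combinatorial task that the proof must execute.
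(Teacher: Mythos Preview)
Your proposal is correct and follows essentially the same approach as the paper: derive the one-step inequality via Lemmas~\ref{lem:core} and~\ref{lem:lip-res} plus AM--GM, exploit the identity $\gamma_k(1+n\eta_k\mu_\psi)/\eta_k=\gamma_{k+1}/\eta_{k+1}$ to make the distance terms telescope, and then apply the one-step bound at each $k$ separately with $\bz=\bx_\ell$ (for $\ell\in[k]$, weighted by $v_\ell-v_{\ell-1}$) and with $\bz=\bx_*$ (weighted by $v_0$), where the paper's normalization is $v_k=\gamma_K/\sum_{\ell=k}^K\gamma_\ell$. The only cosmetic difference is that the paper defines $v_k$ as the reciprocal of your tail sum (times $\gamma_K$), which makes it non-decreasing and fits the Zamani--Glineur weight pattern directly; with that normalization the three properties you list---LHS collapse to $\gamma_K v_K(F(\bx_{K+1})-F(\bx_*))$, distance telescoping, and retention of $-\|\bx_*-\bx_{K+1}\|^2$---all fall out exactly as you anticipate.
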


\begin{proof}
We define the non-decreasing sequence 
\begin{align}
v_{k} & \triangleq\frac{\gamma_{K}}{\sum_{\ell=k}^{K}\gamma_{\ell}},\forall k\in\left[K\right],\label{eq:lip-core-v}\\
v_{0} & \triangleq v_{1}.\label{eq:lip-core-v0}
\end{align}
Fix $k\in\left[K\right]$, we invoke Lemma \ref{lem:core} to obtain
for any $\bz\in\R^{d}$
\begin{align}
F(\bx_{k+1})-F(\bz)\leq & \frac{\left\Vert \bz-\bx_{k}\right\Vert ^{2}}{2n\eta_{k}}-\left(\frac{1}{\eta_{k}}+n\mu_{\psi}\right)\frac{\left\Vert \bz-\bx_{k+1}\right\Vert ^{2}}{2n}-\frac{\left\Vert \bx_{k+1}-\bx_{k}\right\Vert ^{2}}{2n\eta_{k}}\nonumber \\
 & +\frac{1}{n}\sum_{i=1}^{n}B_{\sigma_{k}^{i}}(\bx_{k+1},\bx_{k}^{i})-B_{\sigma_{k}^{i}}(\bz,\bx_{k}^{i})\nonumber \\
\overset{(a)}{\leq} & \frac{\left\Vert \bz-\bx_{k}\right\Vert ^{2}}{2n\eta_{k}}-\left(\frac{1}{\eta_{k}}+n\mu_{\psi}\right)\frac{\left\Vert \bz-\bx_{k+1}\right\Vert ^{2}}{2n}-\frac{\left\Vert \bx_{k+1}-\bx_{k}\right\Vert ^{2}}{2n\eta_{k}}\nonumber \\
 & +2\bG\left\Vert \bx_{k+1}-\bx_{k}\right\Vert +\bG^{2}n\eta_{k}\nonumber \\
\overset{(b)}{\leq} & \frac{\left\Vert \bz-\bx_{k}\right\Vert ^{2}}{2n\eta_{k}}-\left(\frac{1}{\eta_{k}}+n\mu_{\psi}\right)\frac{\left\Vert \bz-\bx_{k+1}\right\Vert ^{2}}{2n}+3\bG^{2}n\eta_{k}\nonumber \\
\Rightarrow\gamma_{k}\left(F(\bx_{k+1})-F(\bz)\right)\leq & \frac{\gamma_{k}\eta_{k}^{-1}\left\Vert \bz-\bx_{k}\right\Vert ^{2}}{2n}-\gamma_{k}\left(\frac{1}{\eta_{k}}+n\mu_{\psi}\right)\frac{\left\Vert \bz-\bx_{k+1}\right\Vert ^{2}}{2n}+3\bG^{2}n\gamma_{k}\eta_{k}\nonumber \\
\overset{(c)}{=} & \frac{\gamma_{k}\eta_{k}^{-1}\left\Vert \bz-\bx_{k}\right\Vert ^{2}}{2n}-\frac{\gamma_{k+1}\eta_{k+1}^{-1}\left\Vert \bz-\bx_{k+1}\right\Vert ^{2}}{2n}+3\bG^{2}n\gamma_{k}\eta_{k},\label{eq:lip-core-1}
\end{align}
where $(a)$ is due to Lemma \ref{lem:lip-res}, $(b)$ is by AM-GM
inequality and $(c)$ is because of 
\[
\gamma_{k}\left(\frac{1}{\eta_{k}}+n\mu_{\psi}\right)=\left(\frac{1}{\eta_{k}}+n\mu_{\psi}\right)\times\eta_{k}\prod_{\ell=2}^{k}\left(1+n\eta_{\ell-1}\mu_{\psi}\right)=\prod_{\ell=2}^{k+1}\left(1+n\eta_{\ell-1}\mu_{\psi}\right)=\gamma_{k+1}\eta_{k+1}^{-1}.
\]

Taking $\bz=\bx_{\ell}$ for $\ell\in\left[k\right]$ in (\ref{eq:lip-core-1})
and multiplying both sides by $v_{\ell}-v_{\ell-1}$ (it is non-negative
due to $v_{\ell}\geq v_{\ell-1}$) to obtain
\[
\gamma_{k}(v_{\ell}-v_{\ell-1})\left(F(\bx_{k+1})-F(\bx_{\ell})\right)\leq(v_{\ell}-v_{\ell-1})\left(\frac{\gamma_{k}\eta_{k}^{-1}\left\Vert \bx_{\ell}-\bx_{k}\right\Vert ^{2}}{2n}-\frac{\gamma_{k+1}\eta_{k+1}^{-1}\left\Vert \bx_{\ell}-\bx_{k+1}\right\Vert ^{2}}{2n}+3\bG^{2}n\gamma_{k}\eta_{k}\right).
\]
Summing it up from $\ell=1$ to $k$
\begin{align}
 & \sum_{\ell=1}^{k}\gamma_{k}(v_{\ell}-v_{\ell-1})\left(F(\bx_{k+1})-F(\bx_{\ell})\right)\nonumber \\
\leq & \frac{\gamma_{k}\eta_{k}^{-1}\sum_{\ell=1}^{k-1}(v_{\ell}-v_{\ell-1})\left\Vert \bx_{\ell}-\bx_{k}\right\Vert ^{2}}{2n}-\frac{\gamma_{k+1}\eta_{k+1}^{-1}\sum_{\ell=1}^{k}(v_{\ell}-v_{\ell-1})\left\Vert \bx_{\ell}-\bx_{k+1}\right\Vert ^{2}}{2n}+3\bG^{2}n\gamma_{k}\eta_{k}(v_{k}-v_{0}).\label{eq:lip-core-2}
\end{align}
Next, taking $\bz=\bx_{*}$ in (\ref{eq:lip-core-1}) and multiplying
both sides by $v_{0}$ to obtain
\begin{equation}
\gamma_{k}v_{0}\left(F(\bx_{k+1})-F(\bx_{*})\right)\leq\frac{\gamma_{k}\eta_{k}^{-1}v_{0}\left\Vert \bx_{*}-\bx_{k}\right\Vert ^{2}}{2n}-\frac{\gamma_{k+1}\eta_{k+1}^{-1}v_{0}\left\Vert \bx_{*}-\bx_{k+1}\right\Vert ^{2}}{2n}+3\bG^{2}n\gamma_{k}\eta_{k}v_{0}.\label{eq:lip-core-3}
\end{equation}
Adding (\ref{eq:lip-core-3}) to (\ref{eq:lip-core-2}) to get
\begin{align}
 & \sum_{\ell=1}^{k}\gamma_{k}(v_{\ell}-v_{\ell-1})\left(F(\bx_{k+1})-F(\bx_{\ell})\right)+\gamma_{k}v_{0}\left(F(\bx_{k+1})-F(\bx_{*})\right)\nonumber \\
\leq & \frac{\gamma_{k}\eta_{k}^{-1}v_{0}\left\Vert \bx_{*}-\bx_{k}\right\Vert ^{2}}{2n}-\frac{\gamma_{k+1}\eta_{k+1}^{-1}v_{0}\left\Vert \bx_{*}-\bx_{k+1}\right\Vert ^{2}}{2n}+3\bG^{2}n\gamma_{k}\eta_{k}v_{k}\nonumber \\
 & +\frac{\gamma_{k}\eta_{k}^{-1}\sum_{\ell=1}^{k-1}(v_{\ell}-v_{\ell-1})\left\Vert \bx_{\ell}-\bx_{k}\right\Vert ^{2}}{2n}-\frac{\gamma_{k+1}\eta_{k+1}^{-1}\sum_{\ell=1}^{k}(v_{\ell}-v_{\ell-1})\left\Vert \bx_{\ell}-\bx_{k+1}\right\Vert ^{2}}{2n}.\label{eq:lip-core-4}
\end{align}
Summing up (\ref{eq:lip-core-4}) from $k=1$ to $K$ to get
\begin{align}
 & \sum_{k=1}^{K}\sum_{\ell=1}^{k}\gamma_{k}(v_{\ell}-v_{\ell-1})\left(F(\bx_{k+1})-F(\bx_{\ell})\right)+\sum_{k=1}^{K}\gamma_{k}v_{0}\left(F(\bx_{k+1})-F(\bx_{*})\right)\nonumber \\
\leq & \sum_{k=1}^{K}\frac{\gamma_{k}\eta_{k}^{-1}v_{0}\left\Vert \bx_{*}-\bx_{k}\right\Vert ^{2}}{2n}-\frac{\gamma_{k+1}\eta_{k+1}^{-1}v_{0}\left\Vert \bx_{*}-\bx_{k+1}\right\Vert ^{2}}{2n}+\sum_{k=1}^{K}3\bG^{2}n\gamma_{k}\eta_{k}v_{k}\nonumber \\
 & +\sum_{k=1}^{K}\frac{\gamma_{k}\eta_{k}^{-1}\sum_{\ell=1}^{k-1}(v_{\ell}-v_{\ell-1})\left\Vert \bx_{\ell}-\bx_{k}\right\Vert ^{2}}{2n}-\frac{\gamma_{k+1}\eta_{k+1}^{-1}\sum_{\ell=1}^{k}(v_{\ell}-v_{\ell-1})\left\Vert \bx_{\ell}-\bx_{k+1}\right\Vert ^{2}}{2n}\nonumber \\
= & \frac{\gamma_{1}\eta_{1}^{-1}v_{0}\left\Vert \bx_{*}-\bx_{1}\right\Vert ^{2}}{2n}-\frac{\gamma_{K+1}\eta_{K+1}^{-1}v_{0}\left\Vert \bx_{*}-\bx_{K+1}\right\Vert ^{2}}{2n}+\sum_{k=1}^{K}3\bG^{2}n\gamma_{k}\eta_{k}v_{k}\nonumber \\
 & -\frac{\gamma_{K+1}\eta_{K+1}^{-1}\sum_{\ell=1}^{K}(v_{\ell}-v_{\ell-1})\left\Vert \bx_{\ell}-\bx_{K+1}\right\Vert ^{2}}{2n}\nonumber \\
\leq & \frac{v_{0}\left\Vert \bx_{*}-\bx_{1}\right\Vert ^{2}}{2n}-\frac{v_{0}\left\Vert \bx_{*}-\bx_{K+1}\right\Vert ^{2}}{2n}+\sum_{k=1}^{K}3\bG^{2}n\gamma_{k}\eta_{k}v_{k},\label{eq:lip-core-5}
\end{align}
where the last line is by $\gamma_{1}\eta_{1}^{-1}=1$, $\gamma_{K+1}\eta_{K+1}^{-1}=\prod_{\ell=2}^{K+1}(1+n\eta_{\ell-1}\mu_{\psi})\geq1$
and $(v_{\ell}-v_{\ell-1})\left\Vert \bx_{\ell}-\bx_{K+1}\right\Vert ^{2}\geq0$.

For the L.H.S. of (\ref{eq:lip-core-5}), we observe that
\begin{align*}
 & \sum_{k=1}^{K}\sum_{\ell=1}^{k}\gamma_{k}(v_{\ell}-v_{\ell-1})\left(F(\bx_{k+1})-F(\bx_{\ell})\right)\\
= & \sum_{k=1}^{K}\sum_{\ell=1}^{k}\gamma_{k}(v_{\ell}-v_{\ell-1})\left(F(\bx_{k+1})-F(\bx_{*})\right)-\sum_{k=1}^{K}\sum_{\ell=1}^{k}\gamma_{k}(v_{\ell}-v_{\ell-1})\left(F(\bx_{\ell})-F(\bx_{*})\right)\\
= & \sum_{k=1}^{K}\gamma_{k}\left(v_{k}-v_{0}\right)\left(F(\bx_{k+1})-F(\bx_{*})\right)-\sum_{\ell=1}^{K}\left(\sum_{k=\ell}^{K}\gamma_{k}\right)(v_{\ell}-v_{\ell-1})\left(F(\bx_{\ell})-F(\bx_{*})\right).
\end{align*}
Hence
\begin{align*}
 & \sum_{k=1}^{K}\sum_{\ell=1}^{k}\gamma_{k}(v_{\ell}-v_{\ell-1})\left(F(\bx_{k+1})-F(\bx_{\ell})\right)+\sum_{k=1}^{K}\gamma_{k}v_{0}\left(F(\bx_{k+1})-F(\bx_{*})\right)\\
= & \sum_{k=1}^{K}\gamma_{k}v_{k}\left(F(\bx_{k+1})-F(\bx_{*})\right)-\sum_{\ell=1}^{K}\left(\sum_{k=\ell}^{K}\gamma_{k}\right)(v_{\ell}-v_{\ell-1})\left(F(\bx_{\ell})-F(\bx_{*})\right)\\
= & \gamma_{K}v_{K}\left(F(\bx_{K+1})-F(\bx_{*})\right)-\left(\sum_{k=1}^{K}\gamma_{k}\right)\left(v_{1}-v_{0}\right)\left(F(\bx_{1})-F(\bx_{*})\right)\\
 & +\sum_{k=2}^{K}\left[\gamma_{k-1}v_{k-1}-\left(\sum_{\ell=k}^{K}\gamma_{\ell}\right)(v_{k}-v_{k-1})\right]\left(F(\bx_{k})-F(\bx_{*})\right).
\end{align*}
Note that $v_{1}\overset{\eqref{eq:lip-core-v0}}{=}v_{0}$ and for
$2\leq k\leq K$
\[
\gamma_{k-1}v_{k-1}-\left(\sum_{\ell=k}^{K}\gamma_{\ell}\right)(v_{k}-v_{k-1})=\left(\sum_{\ell=k-1}^{K}\gamma_{\ell}\right)v_{k-1}-\left(\sum_{\ell=k}^{K}\gamma_{\ell}\right)v_{k}\overset{\eqref{eq:lip-core-v}}{=}\gamma_{K}-\gamma_{K}=0.
\]
Thus, we know
\begin{equation}
\sum_{k=1}^{K}\sum_{\ell=1}^{k}\gamma_{k}(v_{\ell}-v_{\ell-1})\left(F(\bx_{k+1})-F(\bx_{\ell})\right)+\sum_{k=1}^{K}\gamma_{k}v_{0}\left(F(\bx_{k+1})-F(\bx_{*})\right)=\gamma_{K}v_{K}\left(F(\bx_{K+1})-F(\bx_{*})\right).\label{eq:lip-core-6}
\end{equation}

Plugging (\ref{eq:lip-core-6}) into (\ref{eq:lip-core-5}), we finally
obtain
\begin{align*}
\gamma_{K}v_{K}\left(F(\bx_{K+1})-F(\bx_{*})\right) & \leq\frac{v_{0}\left\Vert \bx_{*}-\bx_{1}\right\Vert ^{2}}{2n}-\frac{v_{0}\left\Vert \bx_{*}-\bx_{K+1}\right\Vert ^{2}}{2n}+\sum_{k=1}^{K}3\bG^{2}n\gamma_{k}\eta_{k}v_{k}\\
\Rightarrow F(\bx_{K+1})-F(\bx_{*}) & \leq\frac{v_{0}}{\gamma_{K}v_{K}}\cdot\frac{\left\Vert \bx_{*}-\bx_{1}\right\Vert ^{2}-\left\Vert \bx_{*}-\bx_{K+1}\right\Vert ^{2}}{2n}+3\bG^{2}n\sum_{k=1}^{K}\frac{\gamma_{k}\eta_{k}v_{k}}{\gamma_{K}v_{K}}\\
 & \overset{\eqref{eq:lip-core-v},\eqref{eq:lip-core-v0}}{=}\frac{\left\Vert \bx_{*}-\bx_{1}\right\Vert ^{2}-\left\Vert \bx_{*}-\bx_{K+1}\right\Vert ^{2}}{2n\sum_{k=1}^{K}\gamma_{k}}+3\bG^{2}n\sum_{k=1}^{K}\frac{\gamma_{k}\eta_{k}}{\sum_{\ell=k}^{K}\gamma_{\ell}}.
\end{align*}
\end{proof}

Next, for a special class of stepsizes (including the stepsize used
in Theorem \ref{thm:lip-ada-full} as a special case), the following
Lemma \ref{lem:lip-ada-distance} gives a uniform bound on the movement
of $\bx_{k}$ from the initial point $\bx_{1}$. A simple but useful
fact implied by this result is that $r_{k}$ is also uniformly upper
bounded. This important corollary will finally lead us to an asymptotic
rate having a linear dependence on $D$ (see the proof of Theorem
\ref{thm:lip-ada-full} for details).
\begin{lem}
\label{lem:lip-ada-distance}Under Assumptions \ref{assu:minimizer},
\ref{assu:basic} (with $\mu_{\psi}=0$) and \ref{assu:lip}, suppose
the following two conditions hold:
\begin{enumerate}
\item \label{enu:condition-1}$\eta_{k}=r_{k}\widetilde{\eta}_{k},\forall k\in\N$
where $r_{k}=r\lor\max_{\ell\in\left[k\right]}\left\Vert \bx_{\ell}-\bx_{1}\right\Vert $
for some $r>0$ and $\widetilde{\eta}_{k}$ is a positive sequence
defined on $\N$.
\item \label{enu:condition-2}$\sum_{k=1}^{\infty}6\bG^{2}n^{2}\widetilde{\eta}_{k}^{2}\leq c^{2}<1$
for some constant $c>0$.
\end{enumerate}
Then for any permutation $\sigma_{k},\forall k\in\N$, Algorithm \ref{alg:shuffling-SGD}
guarantees
\[
\left\Vert \bx_{k}-\bx_{1}\right\Vert \leq\frac{2}{1-c}\left\Vert \bx_{*}-\bx_{1}\right\Vert +\frac{c}{1-c}r,\forall k\in\N.
\]
\end{lem}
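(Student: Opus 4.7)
\textbf{Proof proposal for Lemma \ref{lem:lip-ada-distance}.} The plan is to derive a one-step recursion on the distance from $\bx_*$, unroll it, and then translate the resulting bound into a self-referential inequality for $r_k$ that can be closed by induction.

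First I would apply Lemma \ref{lem:core} with $\bz=\bx_*$ and $\mu_{\psi}=0$, then plug in the Lipschitz estimate of Lemma \ref{lem:lip-res} for the extra Bregman terms, to obtain
\[
F(\bx_{k+1})-F(\bx_*)\leq \frac{\|\bx_*-\bx_k\|^2-\|\bx_*-\bx_{k+1}\|^2-\|\bx_{k+1}-\bx_k\|^2}{2n\eta_k}+2\bG\|\bx_{k+1}-\bx_k\|+\bG^2 n\eta_k.
\]
Using AM--GM, $2\bG\|\bx_{k+1}-\bx_k\|\leq \frac{\|\bx_{k+1}-\bx_k\|^2}{2n\eta_k}+2n\eta_k\bG^2$, the third quadratic term cancels the bad cross term and, since $F(\bx_{k+1})-F(\bx_*)\geq 0$, I arrive at the one-step recursion
\[
\|\bx_*-\bx_{k+1}\|^2\leq \|\bx_*-\bx_k\|^2+6\bG^2 n^2 \eta_k^2.
\]

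Next, telescoping from $\ell=1$ to $k$ and using Condition~\ref{enu:condition-1} ($\eta_\ell=r_\ell\widetilde{\eta}_\ell$) together with the monotonicity $r_\ell\leq r_k$ for $\ell\leq k$, I get
\[
\|\bx_*-\bx_{k+1}\|^2\leq \|\bx_*-\bx_1\|^2+r_k^2\sum_{\ell=1}^{k}6\bG^2 n^2\widetilde{\eta}_\ell^2\leq \|\bx_*-\bx_1\|^2+c^2 r_k^2,
\]
where the last step uses Condition~\ref{enu:condition-2}. Taking square roots, using $\sqrt{a^2+b^2}\leq a+b$ for $a,b\geq0$, and then the triangle inequality, I obtain the key bound
\[
\|\bx_{k+1}-\bx_1\|\leq 2\|\bx_*-\bx_1\|+c\,r_k.
\]

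Finally I would close the argument by induction on $k$ to show $\|\bx_k-\bx_1\|\leq \tfrac{2}{1-c}\|\bx_*-\bx_1\|+\tfrac{c}{1-c}r$. The base case $k=1$ is trivial. For the inductive step, split on whether $r_k=r$ or $r_k=\max_{\ell\leq k}\|\bx_\ell-\bx_1\|$. In the former case the bound $\|\bx_{k+1}-\bx_1\|\leq 2\|\bx_*-\bx_1\|+cr$ already sits below the target since $\tfrac{1}{1-c}\geq 1$. In the latter case I plug the inductive hypothesis into $cr_k$, and the resulting coefficient on $r$ becomes $\tfrac{c^2}{1-c}\leq \tfrac{c}{1-c}$, so the target bound is preserved. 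The only subtlety — and the part that required the careful $\sqrt{a^2+b^2}\leq a+b$ splitting rather than a cruder estimate — is matching the sharp constants $\tfrac{2}{1-c}$ and $\tfrac{c}{1-c}$ in the conclusion; any looser one-step estimate would give $\tfrac{1}{1-c}r$ instead of $\tfrac{c}{1-c}r$ and thus fail to produce the refined bound stated.
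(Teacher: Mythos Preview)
Your proposal is correct and follows essentially the same route as the paper: apply Lemma~\ref{lem:core} with $\bz=\bx_*$ and $\mu_\psi=0$, bound the Bregman residual via Lemma~\ref{lem:lip-res}, use AM--GM to get the one-step recursion $\|\bx_*-\bx_{k+1}\|^2\le\|\bx_*-\bx_k\|^2+6\bG^2n^2\eta_k^2$, telescope, insert $\eta_\ell\le r_k\widetilde\eta_\ell$ and Condition~\ref{enu:condition-2}, and close with an induction. The only cosmetic difference is in the induction step: the paper uses the crude bound $r_k\le r+\max_{\ell\in[k]}\|\bx_\ell-\bx_1\|$ and plugs the hypothesis into the max, whereas you split on which branch of the $\lor$ realizes $r_k$; both yield the same constants.
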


\begin{proof}
We invoke Lemma \ref{lem:core} with $\bz=\bx_{*}$ and $\mu_{\psi}=0$
to obtain
\begin{align*}
F(\bx_{k+1})-F(\bx_{*}) & \leq\frac{\left\Vert \bx_{*}-\bx_{k}\right\Vert ^{2}}{2n\eta_{k}}-\frac{\left\Vert \bx_{*}-\bx_{k+1}\right\Vert ^{2}}{2n\eta_{k}}-\frac{\left\Vert \bx_{k+1}-\bx_{k}\right\Vert ^{2}}{2n\eta_{k}}+\frac{1}{n}\sum_{i=1}^{n}B_{\sigma_{k}^{i}}(\bx_{k+1},\bx_{k}^{i})-B_{\sigma_{k}^{i}}(\bx_{*},\bx_{k}^{i})\\
 & \overset{(a)}{\leq}\frac{\left\Vert \bx_{*}-\bx_{k}\right\Vert ^{2}}{2n\eta_{k}}-\frac{\left\Vert \bx_{*}-\bx_{k+1}\right\Vert ^{2}}{2n\eta_{k}}-\frac{\left\Vert \bx_{k+1}-\bx_{k}\right\Vert ^{2}}{2n\eta_{k}}+2\bG\left\Vert \bx_{k+1}-\bx_{k}\right\Vert +\bG^{2}n\eta_{k}\\
 & \overset{(b)}{\leq}\frac{\left\Vert \bx_{*}-\bx_{k}\right\Vert ^{2}}{2n\eta_{k}}-\frac{\left\Vert \bx_{*}-\bx_{k+1}\right\Vert ^{2}}{2n\eta_{k}}+3\bG^{2}n\eta_{k},
\end{align*}
where the $(a)$ is due to Lemma \ref{lem:lip-res} and $(b)$ is
by AM-GM inequality. Note that $F(\bx_{k+1})-F(\bx_{*})\geq0$, hence
\begin{align}
\left\Vert \bx_{*}-\bx_{k+1}\right\Vert ^{2} & \leq\left\Vert \bx_{*}-\bx_{k}\right\Vert ^{2}+6\bG^{2}n^{2}\eta_{k}^{2}\nonumber \\
\Rightarrow\left\Vert \bx_{*}-\bx_{k+1}\right\Vert ^{2} & \leq\left\Vert \bx_{*}-\bx_{1}\right\Vert ^{2}+\sum_{\ell=1}^{k}6\bG^{2}n^{2}\eta_{\ell}^{2}\nonumber \\
\Rightarrow\left\Vert \bx_{k+1}-\bx_{1}\right\Vert  & \leq\left\Vert \bx_{*}-\bx_{1}\right\Vert +\left\Vert \bx_{*}-\bx_{k+1}\right\Vert \leq\left\Vert \bx_{*}-\bx_{1}\right\Vert +\sqrt{\left\Vert \bx_{*}-\bx_{1}\right\Vert ^{2}+\sum_{\ell=1}^{k}6\bG^{2}n^{2}\eta_{\ell}^{2}}\nonumber \\
 & \leq2\left\Vert \bx_{*}-\bx_{1}\right\Vert +\sqrt{\sum_{\ell=1}^{k}6\bG^{2}n^{2}\eta_{\ell}^{2}}\overset{(c)}{\leq}2\left\Vert \bx_{*}-\bx_{1}\right\Vert +r_{k}\sqrt{\sum_{\ell=1}^{k}6\bG^{2}n^{2}\widetilde{\eta}_{\ell}^{2}},\nonumber \\
 & \overset{(d)}{\leq}2\left\Vert \bx_{*}-\bx_{1}\right\Vert +cr_{k}\leq2\left\Vert \bx_{*}-\bx_{1}\right\Vert +cr+c\max_{\ell\in\left[k\right]}\left\Vert \bx_{\ell}-\bx_{1}\right\Vert ,\label{eq:lip-distance-1}
\end{align}
where $(c)$ is by $\eta_{\ell}^{2}=r_{\ell}^{2}\widetilde{\eta}_{\ell}^{2}\leq r_{k}^{2}\widetilde{\eta}_{\ell}^{2},\forall\ell\in\left[k\right]$
and $(d)$ is from the requirement $\sum_{k=1}^{\infty}6\bG^{2}n^{2}\widetilde{\eta}_{k}^{2}\leq c^{2}.$

Now we use induction to prove (\ref{eq:lip-distance-1}) implies the
following fact when $c<1$,
\begin{equation}
\left\Vert \bx_{k}-\bx_{1}\right\Vert \leq\frac{2}{1-c}\left\Vert \bx_{*}-\bx_{1}\right\Vert +\frac{c}{1-c}r,\forall k\in\N.\label{eq:lip-distance-hypo}
\end{equation}
First if $k=1$, (\ref{eq:lip-distance-hypo}) reduces to $0\leq\frac{2}{1-c}\left\Vert \bx_{*}-\bx_{1}\right\Vert +\frac{c}{1-c}r$,
which is true automatically. Suppose (\ref{eq:lip-distance-hypo})
holds for $1$ to $k$. Then for $k+1$, by (\ref{eq:lip-distance-1})
\begin{align*}
\left\Vert \bx_{k+1}-\bx_{1}\right\Vert  & \leq2\left\Vert \bx_{*}-\bx_{1}\right\Vert +cr+c\max_{\ell\in\left[k\right]}\left\Vert \bx_{\ell}-\bx_{1}\right\Vert \\
 & \leq2\left\Vert \bx_{*}-\bx_{1}\right\Vert +cr+c\left(\frac{2}{1-c}\left\Vert \bx_{*}-\bx_{1}\right\Vert +\frac{c}{1-c}r\right)\\
 & =\frac{2}{1-c}\left\Vert \bx_{*}-\bx_{1}\right\Vert +\frac{c}{1-c}r.
\end{align*}
Therefore, the induction is completed.
\end{proof}

\section{Technical Lemmas\label{sec:tech}}

In this section, we introduce two helpful lemmas used in the analysis.

First, we provide an upper bound for the residual term $R_{k}\triangleq\sum_{i=2}^{n}\frac{L_{\sigma_{k}^{i}}}{n}\left\Vert \sum_{j=1}^{i-1}\nabla f_{\sigma_{k}^{j}}(\bx_{*})\right\Vert ^{2},\forall k\in\left[K\right]$
defined in Lemma \ref{lem:smooth-res}. Specifically, we will prove
two bounds for this term: one in (\ref{eq:var-any}) holds for any
permutation, and the other in (\ref{eq:var-rand}) holds for the permutation
uniformly sampled without replacement. It is worth noting that if
$L_{i}=L$ for any $i\in\left[n\right]$, then one can apply Lemma
1 in \cite{mishchenko2020random} to get the desired bound. However,
$L_{i}$ can be different in our setting, which requires a more careful
analysis.
\begin{lem}
\label{lem:var}Under Assumptions \ref{assu:minimizer} and \ref{assu:smooth},
for any permutation $\sigma$ of $\left[n\right]$, there is
\begin{equation}
\sum_{i=2}^{n}\frac{L_{\sigma^{i}}}{n}\left\Vert \sum_{j=1}^{i-1}\nabla f_{\sigma^{j}}(\bx_{*})\right\Vert ^{2}\leq n^{2}\bL\asigma.\label{eq:var-any}
\end{equation}
If the permutation $\sigma$ is uniformly sampled without replacement,
there is
\begin{equation}
\E\left[\sum_{i=2}^{n}\frac{L_{\sigma^{i}}}{n}\left\Vert \sum_{j=1}^{i-1}\nabla f_{\sigma^{j}}(\bx_{*})\right\Vert ^{2}\right]\leq\frac{2}{3}n\bL\rsigma,\label{eq:var-rand}
\end{equation}
where the expectation is taken over the permutation $\sigma$.
\end{lem}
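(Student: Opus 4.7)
The plan is to handle the two claims separately. For (\ref{eq:var-any}) a crude Cauchy--Schwarz suffices: since $\left\Vert \sum_{j=1}^{i-1}\nabla f_{\sigma^{j}}(\bx_{*})\right\Vert ^{2}\leq(i-1)\sum_{j=1}^{i-1}\left\Vert \nabla f_{\sigma^{j}}(\bx_{*})\right\Vert ^{2}\leq n\sum_{j=1}^{n}\left\Vert \nabla f_{j}(\bx_{*})\right\Vert ^{2}=n^{2}\asigma$ (the last equality holds because $\sigma^{1},\ldots,\sigma^{n}$ is a permutation of $[n]$), substituting this in and using $\sum_{i=2}^{n}L_{\sigma^{i}}\leq n\bL$ immediately yields the desired bound $n^{2}\bL\asigma$.

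For (\ref{eq:var-rand}), the main idea is to re-index the outer sum by the identity $\ell=\sigma^{i}$ of the element at position $i$ rather than by the position itself. Writing $\bg_{\ell}\triangleq\nabla f_{\ell}(\bx_{*})$ and letting $P_{\ell}\triangleq\{m\neq\ell:m\text{ precedes }\ell\text{ in }\sigma\}$, I obtain the deterministic identity
\[
\sum_{i=1}^{n}L_{\sigma^{i}}\left\Vert \sum_{j=1}^{i-1}\bg_{\sigma^{j}}\right\Vert ^{2}=\sum_{\ell=1}^{n}L_{\ell}\left\Vert \sum_{m\in P_{\ell}}\bg_{m}\right\Vert ^{2},
\]
whose left-hand side equals $n$ times the quantity we need to bound (the $i=1$ term vanishes). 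Under a uniform random permutation, I then compute $\E\left[\left\Vert \sum_{m\in P_{\ell}}\bg_{m}\right\Vert ^{2}\right]$ by expanding the square and exploiting two elementary symmetry facts: $\Pr[m\in P_{\ell}]=\tfrac{1}{2}$ for any $m\neq\ell$ (exchange the positions of $m$ and $\ell$), and $\Pr[m,m'\in P_{\ell}]=\tfrac{1}{3}$ for distinct $m,m',\ell$ (each is equally likely to occupy the latest of their three positions). These give
\[
\E\left[\left\Vert \sum_{m\in P_{\ell}}\bg_{m}\right\Vert ^{2}\right]=\frac{1}{6}\sum_{m\neq\ell}\left\Vert \bg_{m}\right\Vert ^{2}+\frac{1}{3}\left\Vert \sum_{m\neq\ell}\bg_{m}\right\Vert ^{2}.
\]

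To finish, I would substitute $\sum_{m\neq\ell}\bg_{m}=n\nabla f(\bx_{*})-\bg_{\ell}$, apply $\left\Vert a-b\right\Vert ^{2}\leq 2\left\Vert a\right\Vert ^{2}+2\left\Vert b\right\Vert ^{2}$ to the second sum, multiply by $L_{\ell}$, sum over $\ell$, and invoke the crude estimates $\sum_{\ell}L_{\ell}=n\bL$, $\sum_{m\neq\ell}\left\Vert \bg_{m}\right\Vert ^{2}\leq n\asigma$, and $\left\Vert \bg_{\ell}\right\Vert ^{2}\leq n\asigma$ (the last because any single squared gradient is dominated by the full sum $n\asigma$). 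The coefficients then combine precisely: the $\asigma$ pieces pick up the factor $\tfrac{1}{6}+\tfrac{1}{2}=\tfrac{2}{3}$ in front of $n^{2}\bL$, while the $\left\Vert \nabla f(\bx_{*})\right\Vert ^{2}$ contribution yields $\tfrac{2}{3}n^{3}\bL$. Dividing by $n$ recovers exactly $\tfrac{2}{3}n\bL(\asigma+n\left\Vert \nabla f(\bx_{*})\right\Vert ^{2})=\tfrac{2}{3}n\bL\rsigma$. The main obstacle I anticipate is the re-indexing identity and the clean verification of $\Pr[m,m'\in P_{\ell}]=\tfrac{1}{3}$, which bypasses the standard sampling-without-replacement variance computation used elsewhere in the literature; after that, the remaining algebra is routine, and the numerical constants happily align so that $\asigma$ and $n\left\Vert \nabla f(\bx_{*})\right\Vert ^{2}$ share the same prefactor $\tfrac{2}{3}n\bL$ to form $\rsigma$.
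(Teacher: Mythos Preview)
Your argument for \eqref{eq:var-any} is essentially the paper's: both apply Cauchy--Schwarz to $\left\Vert\sum_{j=1}^{i-1}\nabla f_{\sigma^{j}}(\bx_{*})\right\Vert^{2}$ and then use $\sum_{i}L_{\sigma^{i}}=n\bL$.

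For \eqref{eq:var-rand} your route is genuinely different from the paper's and, in my view, cleaner. The paper conditions on positions: it expands the square over $j$ and over pairs $(p,q)$, computes $\E[L_{\sigma^{i}}\mid\sigma^{j}]=\frac{n\bL-L_{\sigma^{j}}}{n-1}$ and $\E[L_{\sigma^{i}}\mid\sigma^{p},\sigma^{q}]=\frac{n\bL-L_{\sigma^{p}}-L_{\sigma^{q}}}{n-2}$, and must split off the case $n=2$ because the latter is undefined there. Your re-indexing $\ell=\sigma^{i}$ together with the symmetry facts $\Pr[m\in P_{\ell}]=\tfrac{1}{2}$ and $\Pr[m,m'\in P_{\ell}]=\tfrac{1}{3}$ lands directly on
\[
\E\left[\left\Vert\textstyle\sum_{m\in P_{\ell}}\bg_{m}\right\Vert^{2}\right]=\tfrac{1}{6}\sum_{m\neq\ell}\left\Vert\bg_{m}\right\Vert^{2}+\tfrac{1}{3}\left\Vert\textstyle\sum_{m\neq\ell}\bg_{m}\right\Vert^{2},
\]
valid for all $n\geq2$ without case analysis. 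What each buys: the paper's calculation is more mechanical but heavier; yours is a one-line probabilistic identity that avoids the sampling-without-replacement machinery entirely.

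One bookkeeping point: to make your stated coefficient $\tfrac{1}{6}+\tfrac{1}{2}=\tfrac{2}{3}$ come out, you must use the \emph{equality} $\sum_{m\neq\ell}\left\Vert\bg_{m}\right\Vert^{2}=n\asigma-\left\Vert\bg_{\ell}\right\Vert^{2}$ rather than the inequality $\leq n\asigma$ you wrote. The reason is that after applying $\left\Vert a-b\right\Vert^{2}\leq2\left\Vert a\right\Vert^{2}+2\left\Vert b\right\Vert^{2}$ to $\tfrac{1}{3}\left\Vert n\nabla f(\bx_{*})-\bg_{\ell}\right\Vert^{2}$ you pick up $\tfrac{2}{3}\left\Vert\bg_{\ell}\right\Vert^{2}$, and it is the $-\tfrac{1}{6}\left\Vert\bg_{\ell}\right\Vert^{2}$ from the first term that brings the net coefficient down to $\tfrac{1}{2}$. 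If you drop that negative piece via the inequality, the $\asigma$ coefficient becomes $\tfrac{1}{6}+\tfrac{2}{3}=\tfrac{5}{6}$ and you overshoot the target. With the equality in place, summing $\sum_{\ell}L_{\ell}\left\Vert\bg_{\ell}\right\Vert^{2}\leq n^{2}\bL\asigma$ (your last crude estimate, which the paper also uses) finishes the proof exactly as you claim.
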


\begin{proof}
First note that
\begin{align*}
\sum_{i=2}^{n}\frac{L_{\sigma^{i}}}{n}\left\Vert \sum_{j=1}^{i-1}\nabla f_{\sigma^{j}}(\bx_{*})\right\Vert ^{2} & \leq\sum_{i=2}^{n}\sum_{j=1}^{i-1}\frac{L_{\sigma^{i}}}{n}(i-1)\left\Vert \nabla f_{\sigma^{j}}(\bx_{*})\right\Vert ^{2}=\sum_{j=1}^{n-1}\sum_{i=j+1}^{n}\frac{L_{\sigma^{i}}}{n}(i-1)\left\Vert \nabla f_{\sigma^{j}}(\bx_{*})\right\Vert ^{2}\\
 & \leq n\bL\sum_{j=1}^{n-1}\left\Vert \nabla f_{\sigma^{j}}(\bx_{*})\right\Vert ^{2}\leq n\bL\sum_{j=1}^{n}\left\Vert \nabla f_{\sigma^{j}}(\bx_{*})\right\Vert ^{2}=n^{2}\bL\asigma.
\end{align*}
Next, suppose $\sigma$ is uniformly sampled without replacement. 

When $n=2$
\begin{align*}
\E\left[\sum_{i=2}^{n}\frac{L_{\sigma^{i}}}{n}\left\Vert \sum_{j=1}^{i-1}\nabla f_{\sigma^{j}}(\bx_{*})\right\Vert ^{2}\right] & =\E\left[\frac{L_{\sigma^{2}}}{2}\left\Vert \nabla f_{\sigma^{1}}(\bx_{*})\right\Vert ^{2}\right]=\frac{L_{1}\left\Vert \nabla f_{2}(\bx_{*})\right\Vert ^{2}+L_{2}\left\Vert \nabla f_{1}(\bx_{*})\right\Vert ^{2}}{4}\\
 & =\underbrace{\frac{L_{1}+L_{2}}{2}\cdot\frac{\left\Vert \nabla f_{1}(\bx_{*})\right\Vert ^{2}+\left\Vert \nabla f_{2}(\bx_{*})\right\Vert ^{2}}{2}}_{=\bL\asigma}-\frac{L_{1}\left\Vert \nabla f_{1}(\bx_{*})\right\Vert ^{2}+L_{2}\left\Vert \nabla f_{2}(\bx_{*})\right\Vert ^{2}}{4}\\
 & \leq\bL\asigma\leq\frac{2}{3}n\bL\rsigma,
\end{align*}
where the last inequality holds due to $\asigma\leq\asigma+2\left\Vert \nabla f(\bx_{*})\right\Vert ^{2}=\rsigma$
and $\frac{2}{3}n=\frac{4}{3}>1$.

When $n\geq3$
\begin{align}
 & \E\left[\sum_{i=2}^{n}\frac{L_{\sigma^{i}}}{n}\left\Vert \sum_{j=1}^{i-1}\nabla f_{\sigma^{j}}(\bx_{*})\right\Vert ^{2}\right]=\frac{1}{n}\sum_{i=2}^{n}\E\left[L_{\sigma^{i}}\left\Vert \sum_{j=1}^{i-1}\nabla f_{\sigma^{j}}(\bx_{*})\right\Vert ^{2}\right]\nonumber \\
= & \frac{1}{n}\sum_{i=2}^{n}\E\left[\sum_{j=1}^{i-1}L_{\sigma^{i}}\left\Vert \nabla f_{\sigma^{j}}(\bx_{*})\right\Vert ^{2}+\sum_{1\leq p\neq q\leq i-1}L_{\sigma^{i}}\langle\nabla f_{\sigma^{p}}(\bx_{*}),\nabla f_{\sigma^{q}}(\bx_{*})\rangle\right]\nonumber \\
= & \frac{1}{n}\sum_{i=2}^{n}\sum_{j=1}^{i-1}\E\left[L_{\sigma^{i}}\left\Vert \nabla f_{\sigma^{j}}(\bx_{*})\right\Vert ^{2}\right]+\frac{1}{n}\sum_{i=2}^{n}\sum_{1\leq p\neq q\leq i-1}\E\left[L_{\sigma^{i}}\langle\nabla f_{\sigma^{p}}(\bx_{*}),\nabla f_{\sigma^{q}}(\bx_{*})\rangle\right].\label{eq:var-1}
\end{align}

For $j<i$, we have
\begin{align}
\E\left[L_{\sigma^{i}}\left\Vert \nabla f_{\sigma^{j}}(\bx_{*})\right\Vert ^{2}\right] & =\E\left[\E\left[L_{\sigma^{i}}\mid\sigma^{j}\right]\left\Vert \nabla f_{\sigma^{j}}(\bx_{*})\right\Vert ^{2}\right]=\E\left[\frac{n\bL-L_{\sigma^{j}}}{n-1}\left\Vert \nabla f_{\sigma^{j}}(\bx_{*})\right\Vert ^{2}\right]\nonumber \\
 & =\frac{n}{n-1}\bL\asigma-\frac{\sum_{\ell=1}^{n}L_{\ell}\left\Vert \nabla f_{\ell}(\bx_{*})\right\Vert ^{2}}{n(n-1)}\nonumber \\
\Rightarrow\frac{1}{n}\sum_{i=2}^{n}\sum_{j=1}^{i-1}\E\left[L_{\sigma^{i}}\left\Vert \nabla f_{\sigma^{j}}(\bx_{*})\right\Vert ^{2}\right] & =\frac{1}{n}\sum_{i=2}^{n}\sum_{j=1}^{i-1}\left(\frac{n}{n-1}\bL\asigma-\frac{\sum_{\ell=1}^{n}L_{\ell}\left\Vert \nabla f_{\ell}(\bx_{*})\right\Vert ^{2}}{n(n-1)}\right)\nonumber \\
 & =\frac{n\bL\asigma}{2}-\frac{\sum_{\ell=1}^{n}L_{\ell}\left\Vert \nabla f_{\ell}(\bx_{*})\right\Vert ^{2}}{2n}.\label{eq:var-2}
\end{align}

For $1\leq p\neq q\leq i-1$, we have
\begin{align}
 & \E\left[L_{\sigma^{i}}\langle\nabla f_{\sigma^{p}}(\bx_{*}),\nabla f_{\sigma^{q}}(\bx_{*})\rangle\right]=\E\left[\E\left[L_{\sigma^{i}}\mid\sigma^{p},\sigma^{q}\right]\langle\nabla f_{\sigma^{p}}(\bx_{*}),\nabla f_{\sigma^{q}}(\bx_{*})\rangle\right]\nonumber \\
= & \E\left[\frac{n\bL-L_{\sigma^{p}}-L_{\sigma^{q}}}{n-2}\langle\nabla f_{\sigma^{p}}(\bx_{*}),\nabla f_{\sigma^{q}}(\bx_{*})\rangle\right]\nonumber \\
= & \frac{n\bL}{n-2}\E\left[\langle\nabla f_{\sigma^{p}}(\bx_{*}),\nabla f_{\sigma^{q}}(\bx_{*})\rangle\right]-\frac{1}{n-2}\E\left[\left(L_{\sigma^{p}}+L_{\sigma^{q}}\right)\langle\nabla f_{\sigma^{p}}(\bx_{*}),\nabla f_{\sigma^{q}}(\bx_{*})\rangle\right].\label{eq:var-3}
\end{align}
Note that
\begin{equation}
\E\left[\langle\nabla f_{\sigma^{p}}(\bx_{*}),\nabla f_{\sigma^{q}}(\bx_{*})\rangle\right]=\E\left[\langle\frac{n\nabla f(\bx_{*})-\nabla f_{\sigma^{q}}(\bx_{*})}{n-1},\nabla f_{\sigma^{q}}(\bx_{*})\rangle\right]=\frac{n\left\Vert \nabla f(\bx_{*})\right\Vert ^{2}-\asigma}{n-1},\label{eq:var-4}
\end{equation}
and
\begin{align}
 & \E\left[\left(L_{\sigma^{p}}+L_{\sigma^{q}}\right)\langle\nabla f_{\sigma^{p}}(\bx_{*}),\nabla f_{\sigma^{q}}(\bx_{*})\rangle\right]\nonumber \\
= & 2\E\left[L_{\sigma^{p}}\langle\nabla f_{\sigma^{p}}(\bx_{*}),\nabla f_{\sigma^{q}}(\bx_{*})\rangle\right]\nonumber \\
= & 2\E\left[L_{\sigma^{p}}\langle\nabla f_{\sigma^{p}}(\bx_{*}),\frac{n\nabla f(\bx_{*})-\nabla f_{\sigma^{p}}(\bx_{*})}{n-1}\rangle\right]\nonumber \\
= & \frac{2\langle\sum_{\ell=1}^{n}L_{\ell}\nabla f_{\ell}(\bx_{*}),\nabla f(\bx_{*})\rangle}{n-1}-\frac{2\sum_{\ell=1}^{n}L_{\ell}\left\Vert \nabla f_{\ell}(\bx_{*})\right\Vert ^{2}}{(n-1)n}.\label{eq:var-5}
\end{align}
Plugging (\ref{eq:var-4}) and (\ref{eq:var-5}) into (\ref{eq:var-3})
to obtain
\begin{align*}
 & \E\left[L_{\sigma^{i}}\langle\nabla f_{\sigma^{p}}(\bx_{*}),\nabla f_{\sigma^{q}}(\bx_{*})\rangle\right]\\
= & \frac{n\bL}{n-2}\cdot\frac{n\left\Vert \nabla f(\bx_{*})\right\Vert ^{2}-\asigma}{n-1}-\frac{1}{n-2}\left(\frac{2\langle\sum_{\ell=1}^{n}L_{\ell}\nabla f_{\ell}(\bx_{*}),\nabla f(\bx_{*})\rangle}{n-1}-\frac{2\sum_{\ell=1}^{n}L_{\ell}\left\Vert \nabla f_{\ell}(\bx_{*})\right\Vert ^{2}}{(n-1)n}\right)\\
= & \frac{n^{2}\bL\left\Vert \nabla f(\bx_{*})\right\Vert ^{2}-n\bL\asigma-2\langle\sum_{\ell=1}^{n}L_{\ell}\nabla f_{\ell}(\bx_{*}),\nabla f(\bx_{*})\rangle}{(n-2)(n-1)}+\frac{2\sum_{\ell=1}^{n}L_{\ell}\left\Vert \nabla f_{\ell}(\bx_{*})\right\Vert ^{2}}{(n-2)(n-1)n},
\end{align*}
which implies
\begin{align}
 & \frac{1}{n}\sum_{i=2}^{n}\sum_{1\leq p\neq q\leq i-1}\E\left[L_{\sigma^{i}}\langle\nabla f_{\sigma^{p}}(\bx_{*}),\nabla f_{\sigma^{q}}(\bx_{*})\rangle\right]\nonumber \\
= & \frac{1}{n}\sum_{i=2}^{n}\sum_{1\leq p\neq q\leq i-1}\frac{n^{2}\bL\left\Vert \nabla f(\bx_{*})\right\Vert ^{2}-n\bL\asigma-2\langle\sum_{\ell=1}^{n}L_{\ell}\nabla f_{\ell}(\bx_{*}),\nabla f(\bx_{*})\rangle}{(n-2)(n-1)}+\frac{2\sum_{\ell=1}^{n}L_{\ell}\left\Vert \nabla f_{\ell}(\bx_{*})\right\Vert ^{2}}{(n-2)(n-1)n}\nonumber \\
= & \frac{n^{2}\bL\left\Vert \nabla f(\bx_{*})\right\Vert ^{2}-n\bL\asigma-2\langle\sum_{\ell=1}^{n}L_{\ell}\nabla f_{\ell}(\bx_{*}),\nabla f(\bx_{*})\rangle}{3}+\frac{2\sum_{\ell=1}^{n}L_{\ell}\left\Vert \nabla f_{\ell}(\bx_{*})\right\Vert ^{2}}{3n}.\label{eq:var-6}
\end{align}

Finally, plugging (\ref{eq:var-2}) and (\ref{eq:var-6}) into (\ref{eq:var-1}),
we know
\begin{align*}
 & \E\left[\sum_{i=2}^{n}\frac{L_{\sigma^{i}}}{n}\left\Vert \sum_{j=1}^{i-1}\nabla f_{\sigma^{j}}(\bx_{*})\right\Vert ^{2}\right]\\
= & \frac{n\bL\asigma}{2}-\frac{\sum_{\ell=1}^{n}L_{\ell}\left\Vert \nabla f_{\ell}(\bx_{*})\right\Vert ^{2}}{2n}+\frac{2\sum_{\ell=1}^{n}L_{\ell}\left\Vert \nabla f_{\ell}(\bx_{*})\right\Vert ^{2}}{3n}\\
 & +\frac{n^{2}\bL\left\Vert \nabla f(\bx_{*})\right\Vert ^{2}-n\bL\asigma-2\langle\sum_{\ell=1}^{n}L_{\ell}\nabla f_{\ell}(\bx_{*}),\nabla f(\bx_{*})\rangle}{3}\\
= & \frac{n\bL\asigma}{6}+\frac{\sum_{\ell=1}^{n}L_{\ell}\left\Vert \nabla f_{\ell}(\bx_{*})\right\Vert ^{2}}{6n}+\frac{n^{2}\bL\left\Vert \nabla f(\bx_{*})\right\Vert ^{2}}{3}-\frac{2\langle\sum_{\ell=1}^{n}L_{\ell}\nabla f_{\ell}(\bx_{*}),\nabla f(\bx_{*})\rangle}{3}.
\end{align*}
Note that
\begin{align*}
\frac{\sum_{\ell=1}^{n}L_{\ell}\left\Vert \nabla f_{\ell}(\bx_{*})\right\Vert ^{2}}{6n} & \leq\frac{n\bL\asigma}{6},\\
-\frac{2\langle\sum_{\ell=1}^{n}L_{\ell}\nabla f_{\ell}(\bx_{*}),\nabla f(\bx_{*})\rangle}{3} & \leq\frac{n^{2}\bL\left\Vert \nabla f(\bx_{*})\right\Vert ^{2}}{3}+\frac{\left\Vert \sum_{\ell=1}^{n}L_{\ell}\nabla f_{\ell}(\bx_{*})\right\Vert ^{2}}{3n^{2}\bL}\\
 & \leq\frac{n^{2}\bL\left\Vert \nabla f(\bx_{*})\right\Vert ^{2}}{3}+\frac{\sum_{\ell=1}^{n}L_{\ell}\left\Vert \nabla f_{\ell}(\bx_{*})\right\Vert ^{2}}{3n}\\
 & \leq\frac{n^{2}\bL\left\Vert \nabla f(\bx_{*})\right\Vert ^{2}}{3}+\frac{n\bL\asigma}{3}.
\end{align*}
Hence, there is
\[
\E\left[\sum_{i=2}^{n}\frac{L_{\sigma^{i}}}{n}\left\Vert \sum_{j=1}^{i-1}\nabla f_{\sigma^{j}}(\bx_{*})\right\Vert ^{2}\right]\leq\frac{2}{3}n\bL\left(\asigma+n\left\Vert \nabla f(\bx_{*})\right\Vert ^{2}\right)=\frac{2}{3}n\bL\rsigma.
\]
\end{proof}

Next, we introduce the following algebraic inequality, which is a
useful tool when proving Theorem \ref{thm:smooth-cvx-full}.
\begin{lem}
\label{lem:ineq}Given a sequence $d_{2},\cdots,d_{K},d_{K+1}$, suppose
there exist positive constants $a,b,c$ satisfying 
\begin{equation}
d_{k+1}\leq\frac{a}{k}+b(1+\log k)+c\sum_{\ell=2}^{k}\frac{d_{\ell}}{k-\ell+2},\forall k\in\left[K\right],\label{eq:ineq-condi}
\end{equation}
then the following inequality holds
\begin{equation}
d_{k+1}\leq\left(\frac{a}{k}+b(1+\log k)\right)\sum_{i=0}^{k-1}\left(2c(1+\log k)\right)^{i},\forall k\in\left[K\right].\label{eq:ineq-result}
\end{equation}
\end{lem}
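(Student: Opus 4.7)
The plan is to prove~\eqref{eq:ineq-result} by strong induction on $k$. Introduce the shorthand $M_k \triangleq \frac{a}{k} + b(1+\log k)$, $R_k \triangleq 2c(1+\log k)$, and $B_k \triangleq \sum_{i=0}^{k-1} R_k^i$, so the target becomes $d_{k+1} \leq M_k B_k$. The base case $k=1$ is immediate since the sum in~\eqref{eq:ineq-condi} is empty and $M_1 B_1 = (a+b)\cdot 1 = a+b$. For the inductive step, suppose $d_{\ell+1} \leq M_\ell B_\ell$ for every $\ell \in [k-1]$; equivalently, $d_\ell \leq M_{\ell-1} B_{\ell-1}$ for $\ell \in \{2,\ldots,k\}$. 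Substituting this into~\eqref{eq:ineq-condi} gives
\[
d_{k+1} \leq M_k + c \sum_{\ell=2}^k \frac{M_{\ell-1}\, B_{\ell-1}}{k-\ell+2}.
\]

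The key estimate I would then establish is
\[
\sum_{\ell=2}^k \frac{M_{\ell-1}}{k-\ell+2} \;\leq\; 2(1+\log k)\, M_k \;=\; \frac{R_k}{c}\, M_k.
\]
To prove this, I would split $M_{\ell-1} = \tfrac{a}{\ell-1} + b(1+\log(\ell-1))$ into its two pieces. For the $\tfrac{a}{\ell-1}$ contribution, apply the partial fraction identity $\tfrac{1}{j(k-j+1)} = \tfrac{1}{k+1}\bigl[\tfrac{1}{j} + \tfrac{1}{k-j+1}\bigr]$ (with $j = \ell-1$) and bound each resulting harmonic sum by $1+\log k$ to get at most $\tfrac{2a(1+\log k)}{k}$. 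For the $b(1+\log(\ell-1))$ contribution, use monotonicity $1+\log(\ell-1) \leq 1+\log k$ and again $\sum_{\ell=2}^k \tfrac{1}{k-\ell+2} \leq 1+\log k$ to obtain at most $b(1+\log k)^2$. Adding the two pieces yields $2(1+\log k)\bigl[\tfrac{a}{k} + \tfrac{b(1+\log k)}{2}\bigr] \leq 2(1+\log k) M_k$, as desired.

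To close the induction, observe that since $R_{\ell-1} \leq R_k$ and all quantities are nonnegative, $B_{\ell-1} = \sum_{i=0}^{\ell-2} R_{\ell-1}^i \leq \sum_{i=0}^{k-2} R_k^i$, a constant in $\ell$ which can be factored out of the sum. Combining this with the key estimate yields
\[
d_{k+1} \leq M_k + c \cdot \Bigl(\sum_{i=0}^{k-2} R_k^i\Bigr)\cdot \frac{R_k M_k}{c} = M_k\Bigl(1 + \sum_{i=1}^{k-1} R_k^i\Bigr) = M_k B_k,
\]
completing the induction. The main obstacle is the key estimate: one has to control the cross-sum $\sum_{\ell} \frac{M_{\ell-1}}{k-\ell+2}$ by \emph{exactly} $2(1+\log k) M_k$ — i.e., by $R_k M_k / c$ with the sharp constant — so that the ensuing algebra telescopes via $1 + R_k \sum_{i=0}^{k-2} R_k^i = B_k$. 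A looser bound would leave the inductive conclusion off by a multiplicative factor and break the induction; the partial-fraction trick is what delivers the correct constant for the $a/j$ term, while the $b$ term is handled by crude monotonicity.
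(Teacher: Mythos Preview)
Your proposal is correct and takes essentially the same approach as the paper: strong induction on $k$, the partial-fraction identity $\frac{1}{j(k-j+1)}=\frac{1}{k+1}\bigl(\frac{1}{j}+\frac{1}{k-j+1}\bigr)$ for the $a$-part, and monotonicity of $1+\log(\cdot)$ for the $b$-part. The only cosmetic difference is that the paper keeps the double sum and swaps the order of summation over $i$ and $\ell$, whereas you first bound $B_{\ell-1}\le\sum_{i=0}^{k-2}R_k^i$ uniformly and factor it out; both routes arrive at $M_k\bigl(1+\sum_{i=1}^{k-1}R_k^i\bigr)=M_kB_k$.
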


\begin{proof}
We use induction to prove (\ref{eq:ineq-result}) holds for every
$k\in\left[K\right]$. First, for $k=1$, we need to show
\[
d_{2}\leq a+b,
\]
which is true by taking $k=1$ in (\ref{eq:ineq-condi}). Suppose
(\ref{eq:ineq-result}) holds for $1$ to $k-1$ (where $2\leq k\leq K$),
i.e.,
\[
d_{\ell}\leq\left(\frac{a}{\ell-1}+b(1+\log(\ell-1))\right)\sum_{i=0}^{\ell-2}\left(2c(1+\log(\ell-1))\right)^{i},\forall2\leq\ell\leq k.
\]
which implies
\begin{equation}
d_{\ell}\leq\left(\frac{a}{\ell-1}+b(1+\log k)\right)\sum_{i=0}^{\ell-2}\left(2c(1+\log k)\right)^{i},\forall2\leq\ell\leq k.\label{eq:ineq-hypo}
\end{equation}
Now for $d_{k+1}$, from (\ref{eq:ineq-condi})
\begin{align}
d_{k+1}\leq & \frac{a}{k}+b(1+\log k)+c\sum_{\ell=2}^{k}\frac{d_{\ell}}{k-\ell+2}\nonumber \\
\overset{\eqref{eq:ineq-hypo}}{\leq} & \frac{a}{k}+ac\sum_{\ell=2}^{k}\sum_{i=0}^{\ell-2}\frac{\left(2c(1+\log k)\right)^{i}}{(k-\ell+2)(\ell-1)}\nonumber \\
 & +b(1+\log k)\left(1+c\sum_{\ell=2}^{k}\sum_{i=0}^{\ell-2}\frac{\left(2c(1+\log k)\right)^{i}}{k-\ell+2}\right).\label{eq:ineq-1}
\end{align}
Note that
\begin{align}
c\sum_{\ell=2}^{k}\sum_{i=0}^{\ell-2}\frac{\left(2c(1+\log k)\right)^{i}}{(k-\ell+2)(\ell-1)} & =c\sum_{i=0}^{k-2}\left(2c(1+\log k)\right)^{i}\left(\sum_{\ell=2+i}^{k}\frac{1}{(k-\ell+2)(\ell-1)}\right)\nonumber \\
 & =\frac{c}{k+1}\sum_{i=0}^{k-2}\left(2c(1+\log k)\right)^{i}\left(\sum_{\ell=2+i}^{k}\frac{1}{k-\ell+2}+\frac{1}{\ell-1}\right)\nonumber \\
 & \leq\frac{c}{k+1}\sum_{i=0}^{k-2}\left(2c(1+\log k)\right)^{i}\sum_{\ell=1}^{k}\frac{2}{\ell}\leq\frac{\sum_{i=0}^{k-2}\left(2c(1+\log k)\right)^{i+1}}{k+1}\nonumber \\
 & =\frac{\sum_{i=1}^{k-1}\left(2c(1+\log k)\right)^{i}}{k+1}\leq\frac{\sum_{i=1}^{k-1}\left(2c(1+\log k)\right)^{i}}{k},\label{eq:ineq-2}
\end{align}
and
\begin{align}
c\sum_{\ell=2}^{k}\sum_{i=0}^{\ell-2}\frac{\left(2c(1+\log k)\right)^{i}}{k-\ell+2} & =c\sum_{i=0}^{k-2}\left(2c(1+\log k)\right)^{i}\sum_{\ell=2+i}^{k}\frac{1}{k-\ell+2}\leq c\sum_{i=0}^{k-2}\left(2c(1+\log k)\right)^{i}\sum_{\ell=1}^{k}\frac{1}{\ell}\nonumber \\
 & \leq c(1+\log k)\sum_{i=0}^{k-2}\left(2c(1+\log k)\right)^{i}\leq\sum_{i=0}^{k-2}\left(2c(1+\log k)\right)^{i+1}\nonumber \\
 & =\sum_{i=1}^{k-1}\left(2c(1+\log k)\right)^{i}.\label{eq:ineq-3}
\end{align}
Combining (\ref{eq:ineq-1}), (\ref{eq:ineq-2}) and (\ref{eq:ineq-3}),
we obtain the following inequality and finish the induction
\begin{align*}
d_{k+1} & \leq\frac{a}{k}+\frac{a}{k}\sum_{i=1}^{k-1}\left(2c(1+\log k)\right)^{i}+b(1+\log k)\left(1+\sum_{i=1}^{k-1}\left(2c(1+\log k)\right)^{i}\right)\\
 & =\left(\frac{a}{k}+b(1+\log k)\right)\sum_{i=0}^{k-1}\left(2c(1+\log k)\right)^{i}.
\end{align*}
\end{proof}

\end{document}